\definecolor{mygreen}{HTML}{3FBC9D}
\newtheorem{theorem}{Theorem}
\newcommand{\cmark}{{\color{mygreen}\ding{51}}}
\newcommand{\xmark}{{\color{gray}\ding{55}}}
\newcommand{\rot}[1]{\rotatebox[origin=l]{30}{#1}}
\newcommand{\methodshort}[1]{\textsc{Mingle}}
\newcommand{\method}[1]{\textsc{Mixture of Null-Space Gated Low-Rank Experts}}
\title{\methodshort{}: Mixture of Null-Space Gated Low-Rank Experts for Test-Time Continual Model Merging}
\author{
  Zihuan Qiu$^{1}$\quad Yi Xu$^{2}$\quad Chiyuan He$^{1}$\quad Fanman Meng$^{1}$\thanks{Corresponding author}\\
  \textbf{Linfeng Xu$^{1}$}\quad  \textbf{Qingbo Wu$^{1}$}\quad  \textbf{Hongliang Li$^{1}$} \\
  $^{1}$University of Electronic Science and Technology of China, Chengdu, China \\
  $^{2}$Dalian University of Technology, Dalian, China \\
  {\small\{zihuanqiu@std.,\;cyhe@std.,\;fmmeng@,\;lfxu@,\;qbwu@,\;hlli@\}uestc.edu.cn,\; yxu@dlut.edu.cn}
}
\begin{document}

\maketitle

\begin{abstract}
Continual model merging integrates independently fine-tuned models sequentially without access to the original training data, offering a scalable and efficient solution for continual learning.
However, existing methods face two critical challenges: parameter interference among tasks, which leads to catastrophic forgetting, and limited adaptability to evolving test distributions.
To address these issues, we introduce the task of Test-Time Continual Model Merging (TTCMM), which leverages a small set of unlabeled test samples during inference to alleviate parameter conflicts and handle distribution shifts.
We propose \methodshort{}, a novel framework for TTCMM. \methodshort{} employs a mixture-of-experts architecture with parameter-efficient, low-rank experts, which enhances adaptability to evolving test distributions while dynamically merging models to mitigate conflicts.
To further reduce forgetting, we propose Null-Space Constrained Gating, which restricts gating updates to subspaces orthogonal to prior task representations, thereby suppressing activations on old tasks and preserving past knowledge. We further introduce an Adaptive Relaxation Strategy that adjusts constraint strength dynamically based on interference signals observed during test-time adaptation, striking a balance between stability and adaptability.
Extensive experiments on standard continual merging benchmarks demonstrate that \methodshort{} achieves robust generalization, significantly reduces forgetting, and consistently surpasses previous state-of-the-art methods by 7–9\% on average across diverse task orders. Our code is available at: \url{https://github.com/zihuanqiu/MINGLE}
\end{abstract}

\setlength{\intextsep}{2pt}
\setlength{\columnsep}{8pt}

\section{Introduction}

Continual learning aims to incrementally adapt machine learning models to new tasks without forgetting previously learned knowledge, addressing the critical challenge of catastrophic forgetting \cite{mccloskey1989catastrophic}. However, conventional continual learning approaches typically require continuous access to original training data, raising significant concerns about privacy and substantial computational overhead due to retraining efforts, thus limiting their applicability in dynamic, data-sensitive environments.

To address these limitations, recent works have explored an alternative paradigm known as continual model merging (CMM), which sequentially integrates independently fine-tuned models directly in parameter space, without revisiting any training data \cite{liu2023tangent, porrello2025a, tang2025merging}. CMM typically operates under a "merge-to-transfer" paradigm: given a pretrained model $\theta_0$ and independently fine-tuned models $\{\theta_t\}_{t=1}^T$, a unified model is constructed sequentially by combining task-specific weight updates $\Delta\theta_t = \theta_t - \theta_0$ via weighted averaging or projection-based strategies \cite{ilharco2023editing,wortsman2022robust,tang2024parameterefficient}.

Despite its advantages in scalability, data privacy, and distributed training capabilities \cite{mcmahan2017communication,fang2024decentralised,shen2023hugginggpt}, existing CMM methods still encounter critical issues, notably severe parameter interference between tasks and limited adaptability to evolving test distributions. This parameter interference arises because, as fine-tuned models are incrementally merged, overlapping or conflicting parameter updates accumulate, resulting in severe forgetting of previously learned tasks. To mitigate this interference, recent methods introduce structural constraints such as orthogonal projection \cite{tang2025merging, wang2023orthogonal}, model linearization \cite{liu2023tangent, tang2024parameterefficient}, and pruning-based sparsification \cite{yadav2023ties, yu2024language}. However, their effectiveness diminishes as task count grows and interference becomes increasingly entangled. Moreover, models merged across tasks often fail to generalize effectively, particularly when facing unseen or shifting task conditions. These flaws result in severe forgetting of earlier tasks and substantial performance gaps compared to the upper bound achieved by individually fine-tuned models. As shown in Fig.~\ref{fig:preliminary}, TA \cite{ilharco2023editing} suffers from large performance gaps and strong forgetting, reflected by low accuracy and negative backward transfer. OPCM \cite{tang2025merging} improves over TA via orthogonalized merging but still shows notable degradation.

\begin{wrapfigure}[10]{r}{0.45\textwidth}\vspace{-15pt}
\centering
\includegraphics[width=0.43\textwidth]{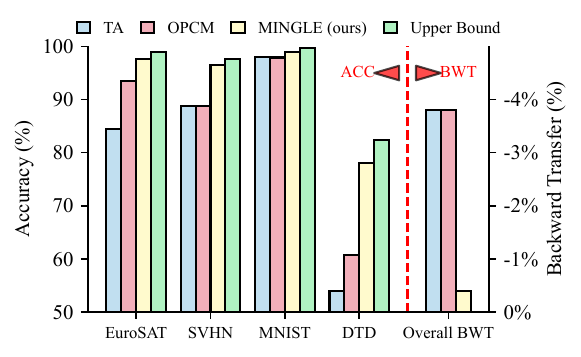}\vspace{-5pt}
\caption{After 8-task continual merging: accuracy on first four tasks and overall BWT.}
\label{fig:preliminary}
\end{wrapfigure}
To overcome these limitations, we propose a novel continual merging paradigm---\textbf{Test-Time Continual Model Merging (TTCMM)}---which explicitly introduces the concept of test-time adaptation (TTA) \cite{wang2020tent, song2023ecotta} into model merging. Unlike prior TTA-based multi-task merging methods \cite{yang2023adamerging,tang2024merging,RepresentationSurgery_ICML_2024}, which assume simultaneous availability of models and test data from all tasks, TTCMM utilizes only a small set of unlabeled samples from the current task, making it uniquely suited for realistic continual scenarios where revisiting historical data is often infeasible.

In this paper, we propose \methodshort{} (\textbf{MI}xture of \textbf{N}ull-Space \textbf{G}ated \textbf{L}ow-Rank \textbf{E}xperts), a method designed to continually merge independently fine-tuned models at test-time while preserving prior knowledge. \methodshort{} employs a mixture-of-experts architecture \cite{jacobs1991adaptive,mu2025comprehensive} composed of lightweight LoRA-based \cite{hu2022lora} experts, enabling efficient and flexible test-time adaptation. To robustly prevent interference from previously learned tasks, we introduce a novel \textbf{Null-Space Constrained Gating} mechanism, restricting gating updates to task-orthogonal subspaces. Additionally, we propose an \textbf{Adaptive Relaxation Strategy} to dynamically modulate constraint strength based on test-time interference feedback during adaptation.

Extensive experiments on standard continual learning benchmarks show that \methodshort{} consistently outperforms previous state-of-the-art approaches by 7--9\% on average, achieving robust generalization and strong resistance to catastrophic forgetting across diverse continual learning scenarios. Remarkably, these improvements are achieved entirely without any access to original training data, demonstrating the effectiveness of our TTCMM paradigm and the power of test-time adaptation in continual learning.

Our contributions are summarized as follows:\vspace{-5pt}
\begin{itemize}[leftmargin=20pt]
\item We formalize test-time continual model merging (TTCMM), a novel task that leverages unlabeled test samples to merge independently fine-tuned models.
\item We propose \methodshort{}, a TTCMM framework with Adaptive Null-Space Constrained Gating to effectively balance stability and plasticity.
\item Extensive experiments show that \methodshort{} achieves state-of-the-art performance, consistently outperforming prior methods in accuracy, robustness and resistance to forgetting.
\end{itemize}

\section{Related Work}
\noindent\textbf{Continual Learning.}
Continual learning (CL) seeks to mitigate catastrophic forgetting \cite{mccloskey1989catastrophic}, where learning new tasks overwrites prior knowledge. 
Regularization-based methods constrain updates with importance weights \cite{Kirkpatrick2016OvercomingCF,zenke2017continual,aljundi2018memory,jung2020continual,wu2024meta}, while distillation aligns outputs to preserve knowledge \cite{Hou2019LearningAU,Douillard2020PODNetPO,simon2021learning,qiu2023ism}. 
Replay methods store exemplars or generate surrogates with prompts, prototypes, or generators \cite{Rebuffi2016iCaRLIC,liu2021rmm,wang2022dualprompt,smith2023coda,qiu2024dual}, and dynamic architectures expand capacity via growth or ensembling \cite{lee2017overcoming,zhou2024expandable,marouf2024weightedensemblemodelsstrong}. 
Recent work leverages lightweight adapters or prompts in pre-trained models for efficient transfer \cite{yu2024boosting,huang2024class,wang2022learning}.  
Model merging offers an alternative route. Some methods remain close to conventional CL by sequentially fine-tuning and merging models to reduce forgetting \cite{marczak2024magmax,marouf2024weightedensemblemodelsstrong,fukuda2024adapter}, typically requiring training data. 
In contrast, continual model merging \cite{jin2023dataless,liu2023tangent,chitale2023task,porrello2025a,tang2025merging} merges \emph{independently fine-tuned models} without revisiting training data, enabling greater scalability and privacy.

\noindent\textbf{Model Merging.}
Early work merged models via direct parameter averaging \cite{utans1996weight,shoemake1985animating}, later refined by linear mode connectivity \cite{entezari2021role,ainsworth2022git}. 
Wortsman \textit{et al.}~\cite{wortsman2022robust} showed that weight averaging can also enhance robustness and out-of-distribution generalization. 
Task Arithmetic (TA) \cite{ilharco2023editing} views models as task vectors to be summed, but relies on weight disentanglement \cite{ortiz2023task}, often violated under standard fine-tuning, motivating structured training \cite{jin2023dataless,stoica2025model}. 
Beyond averaging, interference-aware methods reweight or sparsify parameters \cite{yadav2023ties,yu2024language}, or fuse models via distillation and clustering \cite{RepresentationSurgery_ICML_2024,wang2024lora}. 
LoRA-based tuning \cite{hu2022lora} introduces additional entanglement challenges, spurring gradient-free or retrieval-based strategies \cite{huang2024lorahub,zhao2024loraretriever,zhao2025merging}. 
More recently, dynamic merging \cite{tang2024merging,lu2024twin} adapts parameters conditioned on inputs, achieving higher flexibility and performance, but remains limited to multi-task fusion and unexplored in continual settings.

\begin{figure}[t]
    \centering
    \includegraphics[width=1\linewidth]{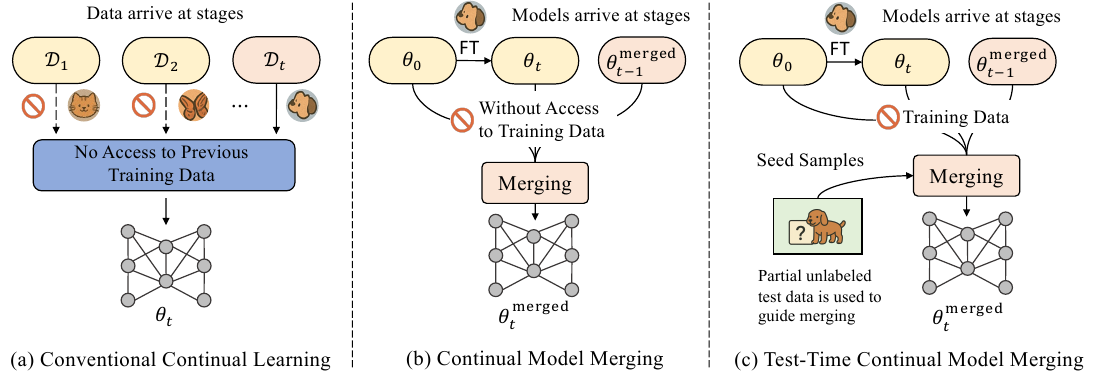}
    \caption{Comparison of three continual learning paradigms.
(a) Conventional Continual Learning trains models sequentially with data arriving in stages, without access to previous task data.
(b) Continual Model Merging continually fuses independently trained models, without access to any training data.
(c) Test-Time Continual Model Merging improves merging by leveraging a few unlabeled test samples from the current task.}\vspace{-10pt}
    \label{fig:paradigm}
\end{figure}

\noindent\textbf{Test-Time Adaptation.}
TTA adapts models at inference to mitigate distribution shift. Early approaches used self-supervised objectives \cite{sun2020test}, entropy minimization \cite{wang2020tent}, or regularized updates \cite{song2023ecotta}. Online TTA adapts continuously \cite{iwasawa2021test}, while batch-wise variants ignore temporal structure \cite{gao2023back}. To enhance stability, later work introduced confidence filtering \cite{niu2022efficient}, EMA \cite{dobler2023robust}, partial updates \cite{yuan2023robust}, test-time augmentation \cite{zhang2022memo}, and adaptive BatchNorm \cite{schneider2020improving}. For vision–language models, TTA often employs prompts or adapters \cite{shu2022test,feng2023diverse,lee2025ratta}. We draw on TTA to guide merging, aligning fused models with evolving test distributions.

\noindent\textbf{Relation to Prior Work.}
Most related to our work are MoE-Adapter \cite{yu2024boosting} and WEMOE \cite{tang2024merging}, both built on MoE architectures \cite{jacobs1991adaptive,jordan1994hierarchical}. MoE-Adapter follows conventional continual learning, embedding expert modules that are jointly trained across tasks. In contrast, we adopt a model-merging paradigm, where experts are extracted from independently fine-tuned models and inserted without further training. WEMOE incorporates test-time adaptation but targets multitask learning, assuming simultaneous access to all models and data. By contrast, \methodshort{} is tailored for the more challenging continual merging setting.
\section{\methodshort{}: Mixture of Null-Space Gated Low-Rank Experts}
\subsection{Preliminaries}

\paragraph{Problem Setting.}
We study continual learning in a model merging setting, where a sequence of task-specific models $\{\theta_1, \dots, \theta_T\}$ are independently fine-tuned from a shared pre-trained model $\theta_0$, each using a dataset $\mathcal{D}_i = \{(x_j^{(i)}, y_j^{(i)})\}$ with label space $\mathcal{C}_i \subset \mathcal{Y}$. The goal is to construct a unified model $\theta^{\text{merged}}_T$ that generalizes across the combined label space $\mathcal{C}_{1:T} = \bigcup_{i=1}^T \mathcal{C}_i$.

Unlike conventional continual learning, we assume \emph{no access} to training data during merging. All adaptation happens directly in parameter space. This paradigm is relevant in scenarios where only final fine-tuned models are retained, while original training data is discarded due to privacy, storage, or accessibility constraints.

To contextualize this, we compare with two related paradigms in Fig.~\ref{fig:paradigm}:\vspace{-5pt}
\begin{itemize}[leftmargin=20pt]
    \item \textbf{Conventional Continual Learning.} A single model $\theta$ is sequentially updated on $\mathcal{D}_1, \dots, \mathcal{D}_T$, discarding previous data. It requires direct training data access and extensive retraining.
    \item \textbf{Continual Model Merging.} A sequence of models are merged incrementally in parameter space without access to training data and earlier models: $\theta^{\text{merged}}_t = \text{Merge}(\theta^{\text{merged}}_{t-1}, \theta_t)$.
    \item \textbf{Test-Time Continual Model Merging.} An extension of the above where a small unlabeled subset \( \mathcal{D}^{\text{seed}}_t \subset \mathcal{D}^{\text{test}}_t \) (\textit{e.g.}, 5 samples per class) is available at each stage to provide lightweight task-specific guidance. We refer to \( \mathcal{D}^{\text{seed}}_t \) as the \emph{seed samples} of task $t$.
\end{itemize}

\noindent\textbf{Existing Continual Merging Strategies.}
Let $\theta_0$ denote the parameters of a pre-trained model. The corresponding task vector is defined as $\Delta\theta_t = \theta_t - \theta_0$.\vspace{-5pt}
\begin{itemize}[leftmargin=20pt]
    \item \textbf{Continual Task Arithmetic (\textsc{C. TA}).} A simple additive merge \cite{ilharco2023editing}: $\theta^{\text{merged}}_t = \theta^{\text{merged}}_{t-1} + \lambda \Delta\theta_t$, where $\lambda$ is a scalar. While training-free, it is sensitive to $\lambda$ and prone to task interference.
    
    \item \textbf{Orthogonal Projection-based Continual Merging (\textsc{OPCM}).} Tang et al.~\cite{tang2025merging} propose projecting each $\Delta\theta_t$ onto the orthogonal complement of previous directions:
    $\theta^{\text{merged}}_t = \theta_0 + \frac{1}{\lambda_t} \left[ \lambda_{t-1} \Delta\theta^{\text{merged}}_{t-1} + \mathcal{P}^{(t-1)}(\Delta\theta_t) \right]$, where $\mathcal{P}^{(t-1)}$ retains components orthogonal to previous updates. This reduces interference but ignores adaptation to task distributions.
\end{itemize}

To address these issues, we present \methodshort{}, which 
leverages $\mathcal{D}^{\text{seed}}_t$ to modulate the integration of $\theta_t$ at test-time, enhancing alignment to test distribution and mitigating task interference.

\subsection{Motivation and Theoretical Analysis}\label{sec:mingle_theory}
Most existing continual model merging methods combine fine-tuned models via static averaging, where each expert is assigned fixed coefficients,
thereby enforcing the \emph{same} mixing rule across the whole input space.
Consequently, it cannot specialize to regions where one expert is clearly
superior.  In contrast, a Mixture-of-Experts (MoE) equips every input with
a \emph{data-dependent} gate
$g(x)=(g_1(x),\dots,g_T(x))$
that selects or re-weights experts on-the-fly.
We give a formal comparison between static averaging and
dynamic MoE under a noisy-routing scenario.
\footnote{Symbols and proofs are deferred to App.~A}

\begin{theorem}[Dynamic MoE versus Static Averaging]\label{thm:main}
Let $\{(D_t,f_t)\}_{t=1}^T$ be $T$ independent tasks with priors
$P(t)$ and per-task risks $R_t(i)$.  
For any static mixture
$h_{\mathrm{static}}(x)=\sum_{i=1}^T\alpha_i\,f_i(x)$
and any hard-routed MoE
$h_{\mathrm{MoE}}(x)=f_{i^\star(x)}(x)$
with task-specific routing errors $\varepsilon_t$:
\begin{equation}
      R(h_{\mathrm{MoE}})
  \;=\;
  R_{\mathrm{ideal}}
  +\sum_{t=1}^T
    P(t)\,\varepsilon_t\bigl(R_{\text{wrong},t}-R_t(t)\bigr),
\end{equation}
where
$R_{\mathrm{ideal}}=\sum_tP(t)R_t(t)$ and
$R_{\text{wrong},t}=\frac{1}{T-1}\sum_{i\neq t}R_t(i)$.
Moreover,
\begin{enumerate}[leftmargin=*]
  \item \emph{(Perfect routing)}\;If $\varepsilon_t=0$ for all $t$, then
        $\displaystyle\inf_{g}R(h_{\mathrm{MoE}})
        \;<\;
        \inf_{\boldsymbol{\alpha}}R(h_{\mathrm{static}})$
        whenever at least two tasks disagree on their best expert.
  \item \emph{(Noisy routing)}\;
        If
        $\displaystyle
        \sum_{t}P(t)\varepsilon_t
        \bigl(R_{\text{wrong},t}-R_t(t)\bigr)
        <
        R^*_{\mathrm{static}}-R_{\mathrm{ideal}},$
        where $R^*_{\text{static}} = \inf_{\boldsymbol{\alpha}} R(h_{\text{static}})$,
        then the MoE still attains lower risk than any static mixture.
\end{enumerate}
\end{theorem}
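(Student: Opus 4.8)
The plan is to reduce the whole theorem to a single risk-decomposition identity and then read off both consequences by elementary algebra. First I would fix the two modeling conventions that are implicit in the statement: (i) expert $t$ is the correct target for task $t$, so a correct route incurs $R_t(t)$, and whenever the router errs on task $t$ it lands uniformly on one of the $T-1$ remaining experts, giving the conditional error risk $R_{\text{wrong},t}=\frac{1}{T-1}\sum_{i\neq t}R_t(i)$; and (ii) the static mixture is read as a stochastic router selecting expert $i$ with probability $\alpha_i$ independently of $x$, so that its per-task risk is linear in the coefficients, $R_t(h_{\mathrm{static}})=\sum_i\alpha_i R_t(i)$. Under (i), conditioning on whether the route is correct gives, for a single task $t$, the risk $(1-\varepsilon_t)R_t(t)+\varepsilon_t R_{\text{wrong},t}$; averaging over tasks with prior $P(t)$ and collecting the $R_t(t)$ terms yields
\begin{equation*}
R(h_{\mathrm{MoE}})=\sum_tP(t)R_t(t)+\sum_tP(t)\varepsilon_t\bigl(R_{\text{wrong},t}-R_t(t)\bigr),
\end{equation*}
which is exactly the claimed identity with $R_{\mathrm{ideal}}=\sum_tP(t)R_t(t)$. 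This decomposition is the technical core; everything after it is bookkeeping.

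For Part 1 (perfect routing) I would set $\varepsilon_t=0$, collapsing the identity to $R(h_{\mathrm{MoE}})=R_{\mathrm{ideal}}$; the optimal gate routes each task to its best expert, so $\inf_g R(h_{\mathrm{MoE}})=\sum_tP(t)\min_iR_t(i)$, which equals $R_{\mathrm{ideal}}$ under convention (i) since expert $t$ is best for task $t$. Using the linearity from (ii), the best static mixture places all mass on the single expert with smallest task-averaged risk, so $\inf_{\boldsymbol{\alpha}}R(h_{\mathrm{static}})=\min_i\sum_tP(t)R_t(i)$. The comparison then follows from the min--sum interchange
\begin{equation*}
\sum_tP(t)\min_iR_t(i)\;\le\;\min_i\sum_tP(t)R_t(i),
\end{equation*}
which holds termwise for each fixed $i$ and is preserved after taking the minimum on the right. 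Equality would require a single expert $i^\star$ to attain the per-task minimum for every task of positive prior; if two tasks disagree on their best expert no such $i^\star$ exists, so the inequality is strict, giving $\inf_g R(h_{\mathrm{MoE}})<\inf_{\boldsymbol{\alpha}}R(h_{\mathrm{static}})$.

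Part 2 (noisy routing) is then immediate: substituting the main identity into the target inequality $R(h_{\mathrm{MoE}})<R^*_{\mathrm{static}}$ and cancelling $R_{\mathrm{ideal}}$ on both sides turns the claim into precisely the stated hypothesis $\sum_tP(t)\varepsilon_t(R_{\text{wrong},t}-R_t(t))<R^*_{\mathrm{static}}-R_{\mathrm{ideal}}$, so no further work is needed beyond the rearrangement.

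The main obstacle is not the algebra but the justification of the two conventions, because the headline claim is an \emph{equality} and is therefore sensitive to how routing errors and the static mixture are formalized. The uniform-error assumption in (i) is what makes $R_{\text{wrong},t}$ well defined; without it one obtains only the upper bound $R(h_{\mathrm{MoE}})\le R_{\mathrm{ideal}}+\sum_tP(t)\varepsilon_t\bigl(\max_{i\neq t}R_t(i)-R_t(t)\bigr)$, which still suffices for Part 2 but weakens the exact statement. Likewise, the linear reading in (ii) is essential to Part 1: for a genuinely nonlinear loss applied to the averaged output $\sum_i\alpha_i f_i(x)$ one must instead invoke convexity (Jensen) to retain $R_t(h_{\mathrm{static}})\ge\min_iR_t(i)$, after which the same min--sum argument closes the gap. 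I would therefore state these conventions explicitly at the outset and flag the convex-loss version as the natural generalization.
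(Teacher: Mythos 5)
Your derivation of the risk identity (condition on correct vs.\ uniform mis-routing, then regroup) is exactly the paper's Step~1, and your Part~2 is the same one-line rearrangement as the paper's Step~3. Where you genuinely diverge is Part~1. The paper argues strictness in two moves: it claims $\mathcal{H}_{\mathrm{static}}\subseteq\mathcal{H}_{\mathrm{MoE}}$ (a constant gate ``approximates'' any static mixture) to get $R^*_{\mathrm{MoE}}\le R^*_{\mathrm{static}}$, and then invokes Jensen's inequality on the convex loss to assert $\mathbb{E}_{D_t}[\ell(\sum_i\alpha_i f_i(x),y)]\ge R_t(t)$ with strictness unless $\alpha_t=1$. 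Both moves are shakier than your route: a \emph{hard-routed} MoE with a constant gate always fires a single expert and cannot realize a nontrivial convex combination, so the inclusion as stated does not hold; and convexity of $\ell$ via Jensen gives the \emph{upper} bound $\ell(\sum_i\alpha_i f_i(x),y)\le\sum_i\alpha_i\ell(f_i(x),y)$, not the lower bound the paper needs (that lower bound really requires $f_t$ to be risk-optimal on $D_t$ among all predictors, not just among the experts). Your alternative --- read the static mixture as a stochastic router so the per-task risk is linear in $\boldsymbol{\alpha}$, reduce the optimal static mixture to the best single expert, and close with the min--sum interchange $\sum_tP(t)\min_iR_t(i)\le\min_i\sum_tP(t)R_t(i)$ with strictness under disagreement --- is airtight under that reading and avoids both of the paper's weak steps. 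The price is that the linear reading is a modeling convention not literally matching $h_{\mathrm{static}}(x)=\sum_i\alpha_i f_i(x)$ followed by a nonlinear loss; you are right to flag this explicitly, and your observation that the headline \emph{equality} hinges on the uniform-misrouting assumption is a point the paper's write-up leaves implicit. Net: same skeleton, but your Part~1 is the more defensible argument, at the cost of an extra stated convention.
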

The theory above motivates a design that (i)~keeps experts specialized and
(ii)~prevents interference between tasks.
Our \textbf{\methodshort{}} framework achieves both goals by combining\vspace{-5pt}
\begin{itemize}[leftmargin=20pt]
  \item \textbf{Low-rank experts} $f_t$ that capture task-specific
        variations with minimal parameters, and
  \item \textbf{Null-space constrained gating} that projects gradient updates
        away from subspaces spanned by previously activated features, keeping $\varepsilon_t$ small without harming earlier experts.
\end{itemize}

\subsection{Low-Rank Expert Mixture for Continual Model Merging}
We adopt MoE framework for continual model merging, in which each task \( i \)  is equipped with a low-rank expert \( f_i \) and an associated input-dependent gating function \( g_i \). These components are injected into the linear layers of the backbone (\textit{e.g.}, CLIP visual encoder). The gate \( g_i \) modulates expert activation based on the input features, allowing for fine-grained, localized task specialization.

\noindent\textbf{Mixture of Low-Rank Expert.}
When a new task $t$ arrives, a dedicated expert  \( f_t \)  and its gate \( g_t \) are appended to the model. 
The output of a given $l$-th layer\footnote{The layer index $l$ is omitted hereafter whenever it does not cause ambiguity.} can be formulated as follows:
\begin{equation} \label{eq:2}
\theta_t^{\text{merged},(l)}(X) = \theta_{t-1}^{\text{merged},(l)}(X) + g_t^{(l)}(X) \cdot f_t^{(l)}(X) = \theta_0^{(l)}(X) + \sum_{i=1}^{t} g_i^{(l)}(X) \cdot f_i^{(l)}(X) .
\end{equation}
where only the gate \( g_t \) is adaptable during testing, while all experts \( \{f_i\}_{i=1}^{t} \) and old gates \( \{g_i\}_{i=1}^{t-1} \) remain frozen to preserve prior knowledge.
To construct expert \( f_t \), we first project the task vector \(\Delta\theta_t\) onto the orthogonal complement of previously learned directions, following OPCM \cite{tang2025merging}:
\begin{equation}\label{eq:3}
    \mathcal{P}^{(t-1)}(\Delta\theta_t) 
    = \sum_{\substack{p,q=a , p\neq q}}^{m,n}\langle \Delta\theta_t, u_p^{(t-1)} v_q^{(t-1)\top} \rangle_F u_p^{(t-1)} v_q^{(t-1)\top},
\end{equation}
where \( u_p^{(t-1)} \) and \( v_q^{(t-1)} \) are the \(p\)-th and \(q\)-th singular vectors from the singular value decomposition (SVD) of previous experts $\sum_{i=1}^{t-1} f_i(X)$, and \(\alpha\) denotes the effective rank of previous experts.
This projection removes previously learned directions to mitigate interference. We then apply a rank-$r$ truncated SVD for $\mathcal{P}^{(t-1)}(\Delta\theta_t)$ to construct a low-rank expert \cite{hu2022lora}:
\begin{equation}\label{eq:4}
    f_t = BA = (\tilde{U} \tilde{\Sigma} )\tilde{V}^\top,
\end{equation}
where \( \tilde{U} \in \mathbb{R}^{d_1 \times r} \), \( \tilde
{\Sigma} \in \mathbb{R}^{r \times r} \), and \( \tilde{V} \in \mathbb{R}^{d_2 \times r} \), retaining the top $r$ singular components.
The resulting expert captures the principal directions while significantly reducing parameter overhead.

Each gating function is implemented as a linear projection:
\begin{equation}\label{eq:5}
g_t(X) = W_t^{(g)\top} X + b_t^{(g)}, 
\end{equation}
where \( W_t^{(g)} \in \mathbb{R}^{d \times 1} \) and \( b_t^{(g)} \in \mathbb{R} \) are \emph{learnable} parameters. The gating function is adapted at test time using a small number of unlabeled test data.

\noindent\textbf{Test-Time Adaptation.}\;To encourage the merged model to retain task-specific behavior, we minimize the Kullback–Leibler divergence between its prediction and that of the corresponding individual fine-tuned model $\theta_t$. We define the adaptation objective as:
\begin{equation}\label{eq:7}
L_t = \mathbb{E}_{x \sim \mathcal{D}^{\text{seed}}_t} \left[ \mathrm{KL}\left(p(x;\;\theta^{\text{merged}}_t) \,\|\, p(x;\;\theta_t)\right) \right].
\end{equation}
where $p(x;\theta)$ denotes the predictive distribution

\subsection{Adaptive Null-Space Constrained Gating for Interference Mitigation}

When merging models continually, the primary challenge of gating is to integrate new experts without disturbing prior task predictions. Consider two experts \( f_1, f_2 \) and their corresponding gates \( g_1, g_2 \). When evaluating on the first task domain \( X_1 \), the interference from \( g_2 \) can be quantified as:
\begin{equation}
    \xi(g_2) = \left\| g_1(X_1) \cdot f_1(X_1) + g_2(X_1) \cdot f_2(X_1) - f_1(X_1) \right\|^2.
\end{equation}
This measures the deviation introduced by \( g_2 \) on the domain where \( f_1 \) originally dominates. A desirable gating function should suppress \( g_2(X_1) \), ensuring predictions on \( X_1 \) remain unaffected. However, as \( X_1 \) becomes inaccessible after adaptation, this error becomes unobservable and cannot be minimized directly, resulting in prediction drift and catastrophic forgetting.

\noindent\textbf{Hard Null-Space Projection.}
After completing task~\(t\), we cache the \(l\)-th layer inputs in the seed buffer \(\mathcal{D}_t^{\text{seed}}\) and estimate their covariance \(\mathrm{Cov}_t^{(l)} \in \mathbb{R}^{d \times d}\). 
Applying truncated SVD yields the top-\(k\) dominant subspaces \(\tilde{U}_t^{(l)} \in \mathbb{R}^{d \times k}\). 
We then concatenate these with the subspaces from all previous tasks and orthonormalize:
\(
  U_{t}^{(l)} = \operatorname{orthonorm}\bigl[U_{t-1}^{(l)}|\tilde{U}_{t}^{(l)}\bigr]\in\mathbb{R}^{d\times tk}
\).
The hard projector is
\(
P_{t}
  =I-U_{t}^{(l)}\,U_{t}^{(l)\top}\in\mathbb{R}^{d\times d}.
\)
To suppress interference from tasks \(\leq t-1\), 
the gating update for task~\(t\) is
\(
W_t^{(g, l)}\leftarrow
W_t^{(g, l)}-\eta\,\nabla L_t^{(l)}P_{t-1}^{(l)}
\).
However, this projection may also discard gradient components that are informative for task~\(t\) whenever its feature support overlaps with \(\operatorname{span}(U_{t-1})\).

\begin{wrapfigure}[33]{r}{0.53\textwidth}\vspace{-5pt}
\begin{minipage}{\linewidth}
\begin{algorithm}[H]
\caption{\label{alg:merging} \methodshort{} Procedure.}
\DontPrintSemicolon
\KwIn{%
    pre-trained model $\theta_0$ and fine-tuned models $\{\theta_t\}_{t=1}^T$; seed data
    $\{\mathcal{D}^{\text{seed}}_t\}_{t=1}^T$;
    hyper-parameters $k,\beta,\gamma$; learning rate $\eta$
}
\KwOut{Merged model $\theta_T^{\text{merged}}$}

{\bf Init:} $\theta_0^{\text{merged}}\!\gets\!\theta_0$;\;
$U_0^{(l)}\!\gets\!\emptyset$\;

\For{{\bf task} $t=1$ {\bf to} $T$}{%
    \Comment{\scriptsize create low-rank experts (Eqs.~\ref{eq:3} and \ref{eq:4})}
    $\displaystyle
      f_t =
      {\small \textsc{SVD}_{\textsc{trunc}.}}~\!\bigl(\,
          \underbrace{\mathcal{P}^{(t-1)}(\theta_t-\theta_0)}_{\scriptsize \text{use } \theta_1 - \theta_0 \text{ when }t=1}
      \bigr)
      = B_tA_t$ \;  

    \Comment{\scriptsize add expert \& initialize gate (Eq.~\ref{eq:5})}
    $\theta_t^{\text{merged}}
       =
       \theta_{t-1}^{\text{merged}} + g_t\!\cdot\! f_t, \;\{W_t^{(g)}, b_t^{(g)},S^{(0)}\}\leftarrow0$
       
    \For{$m=1$ {\bf to} total iterations}{%
        $X \leftarrow \text{batch}(\mathcal{D}^{\text{seed}}_t)$
        
        $\{\nabla_{W_t^{(g)}} L,\nabla_{b_t^{(g)}} L\}\leftarrow\nabla L_t(X,\theta_t^{\text{merged}},\theta_t)$

        \If{$t>1$}{%
            \Comment{\scriptsize project gradient onto null-space (Eqs.~\ref{eq:9}-\ref{eq:12})}
            $S^{(m)}=\beta S^{(m-1)}+(1-\beta)r $
            
            $\nabla_{W_t^{(g)}} L_t\leftarrow\nabla_{W_t^{(g)}} L_t \, \widetilde{P}_{t-1}$
        }
        $W_t^{(g)}\leftarrow W_t^{(g)}-\eta\,\nabla_{W_t^{(g)}} L_t$
        
        $b_t^{(g)}\leftarrow b_t^{(g)}-\eta\,\nabla_{b_t^{(g)}} L_t$
    }

    \Comment{\scriptsize update dominant subspaces}
    $U_t
      \leftarrow
      \operatorname{orthonorm}\!\bigl[U_{t-1}\mid \tilde{U}_{t} ]$
}
\end{algorithm}
\end{minipage}
\end{wrapfigure}

\vspace{4pt}\noindent\textbf{Adaptive Null-Space Relaxation.}
To restore plasticity, we replace the \emph{all-one}
eigenvalues of \(P_{t-1}\) with \emph{soft} coefficients learned
online.

\emph{(i)~Interference statistics.}
For each column \(u_p^{(l)}\) of \(U_{t-1}^{(l)}\) we measure instantaneous
alignment:
\begin{equation}\label{eq:9}
r_p^{(l)}= \|(\nabla L_t)^\top u_p^{(l)}\|_2 \; / \; \|\nabla L_t\|_2.
\end{equation}
We maintain per-direction interference scores \(S^{(m,l)}\in\mathbb{R}^k\) (initialized to \(0\)) by applying an exponential moving average at
each iteration \(m\):
\begin{equation}\label{eq:10}
S^{(m,l)} = \beta\,S^{(m-1,l)} + (1-\beta)\,r^{(l)},
\end{equation}
which suppresses stochastic gradient noise while preserving the dominant
interference directions.

\emph{(ii)~Adaptive shrinkage.}
Each direction is attenuated by
\(
\lambda^{(m,l)}=\exp(-\gamma\,S^{(m,l)})
\)
 (\(\gamma>0, \ \lambda^{(m,l)}\in(0,1]\)).
Let \(\Lambda^{(m,l)}_{t-1}=\mathrm{diag}(\lambda_1^{(m,l)},\dots,\lambda_{(t-1)\cdot k}^{(m,l)})\).
The \emph{relaxed projector} becomes:
\begin{equation}\label{eq:11}
\widetilde{P}_{t-1}^{(m,l)}
  =U_{t-1}^{(m,l)}\,\Lambda^{(m,l)}_{t-1}\,U_{t-1}^{(m,l)\top},
\end{equation}
interpolating smoothly between no protection
(\(\Lambda=0\)) and the hard null projector
(\(\Lambda=I\)).

\emph{(iii)~Update rule.}
We finally update:
\begin{equation}\label{eq:12}
  W_t^{(g,l)} \;\leftarrow\;
  W_t^{(g,l)} - \eta\,\nabla L_t^{(l)}\,\widetilde{P}_{t-1}^{(l)}.
\end{equation}
Relaxing the projector inevitably allows more residual interference than the
hard null‐space variant, yet empirically the increase is minor and is
offset by markedly higher plasticity (Tab.~\ref{tab:ablation}), indicating a favorable \emph{stability–plasticity balance}.
The overall procedure is outlined in Algo. \ref{alg:merging}.

\section{Experiments}
We describe the experimental setup in Sec.~\ref{exp:setup}, followed by the main results in Sec.~\ref{exp:result} and further analysis and ablations in Sec.~\ref{exp:analysis}. Due to page limitations, detailed results are provided in the Appendix.

\subsection{Experimental Setup}\label{exp:setup}

\noindent\textbf{Datasets and Models.}\;
Following \cite{tang2025merging}, we evaluate on image-classification tasks with CLIP-ViT backbones \cite{radford2021learning}. We consider 8, 14, and 20-task groups using ViT-B/32, ViT-B/16, and ViT-L/14 models, each fine-tuned on up to 20 downstream tasks, with checkpoints from FusionBench \cite{tangFusionBenchComprehensiveBenchmark2024}. To assess order sensitivity, we repeat experiments over 10 random seeds (42–51). For comparison with conventional CL, we use the Multi-domain Task-Incremental Learning (MTIL) benchmark \cite{zheng2023preventing} with eleven vision tasks. Beyond vision, we evaluate on eight GLUE language tasks \citep{wang2019glue} with a Flan-T5-base backbone \citep{chung2024scaling}.

\noindent\textbf{Implementation Details.}\;
We insert low-rank experts into the CLIP vision encoder. Two variants are used: a full setup modifying all attention and MLP layers, and a lightweight one on \texttt{attn.qkv} and \texttt{mlp.fc1}. All experiments share a single set of \emph{global} hyper-parameters across models and task orders. Each expert has rank $r=64$; the null-space constraint uses $k=3$, $\gamma=1$, and $\beta=0.99$. Adaptation runs for 50 iterations with Adam (lr $1\text{e-4}$, batch size 16). For vision tasks we use 5 unlabeled samples per class, and for NLP tasks 100 in total, all without access to prior-task data.

\noindent\textbf{Evaluation Metrics.} 
We evaluate using average accuracy (ACC) and backward transfer (BWT) \cite{lin2022beyond}. 
ACC is the mean accuracy of the final merged model across all tasks:
\(\text{ACC} = \tfrac{1}{T} \sum_{i=1}^{T} a_i(\theta_t^{\text{merged}})\),
where \(a_i(\cdot)\) is accuracy on task \(i\). 
BWT measures forgetting by comparing performance on earlier tasks before vs.~after the final merge:
\(\text{BWT} = \tfrac{1}{T-1} \sum_{i=1}^{T-1} \big[a_i(\theta_T^{\text{merged}}) - a_i(\theta_i^{\text{merged}})\big]\).

\begin{table}[t]
\centering
\caption{Comparative results of continual merging methods, reporting average accuracy (ACC) and backward transfer (BWT) over ten task orders (mean$\pm$std). DM and DA denote method assumptions: dynamic merging or test data access. 
Best results are in bold; second-best are
underlined. \methodshort{}$^*$ denotes a lightweight variant.}
\setlength{\tabcolsep}{3pt}
\renewcommand\arraystretch{1.05}
\resizebox{1\textwidth}{!}{
\begin{tabular}{p{0.2cm}l|c|ccccccccc}
\toprule
&\multirow{2}{*}{\textbf{Method}} &   \textbf{Assump.} &\multicolumn{3}{c}{\textbf{ViT-B/32}} & \multicolumn{3}{c}{\textbf{ViT-B/16}} & \multicolumn{3}{c}{\textbf{ViT-L/14}} \\ \cmidrule[0.5pt](lr){4-6} \cmidrule[0.5pt](lr){7-9} \cmidrule[0.5pt](lr){10-12}
& &DM / DA & {8 tasks} & {14 tasks} & {20 tasks} & {8 tasks} & {14 tasks} & {20 tasks}& {8 tasks} & {14 tasks} & {20 tasks} \\
\midrule
&\textsc{Pre-Trained}  & -- \hspace{2pt}/\hspace{2pt} --    & 48.1 & 56.9 & 55.6 & 55.4 & 62.0 & 59.8 & 64.9 & 69.1 & 65.6 \\
&\textsc{Fine-Tuned}   & -- \hspace{2pt}/\hspace{2pt} --    & 90.4 & 89.3 & 89.8 &  92.4  & 91.3 & 91.6 & 94.3 & 93.4 & 93.5 \\
&\textsc{C. Fine-Tuned} & -- \hspace{2pt}/\hspace{2pt} --   & 79.8 & 67.4 & 62.6 & 82.9 & 72.2 & 68.2 & 90.0 & 70.9 & 77.7 \\
\midrule
\midrule
\multirow{9}{*}{\rotatebox[origin=c]{90}{ACC (\%) $\uparrow$}}
&\textsc{Average (SWA)} \cite{izmailov2018averaging}       & \xmark \hspace{2pt}/\hspace{2pt} \xmark        & 66.3\tiny{ $\pm$0.0} & 65.4\tiny{ $\pm$0.0} & 61.1\tiny{ $\pm$0.0} & 72.3\tiny{ $\pm$0.0} & 69.7\tiny{ $\pm$0.0} & 64.8\tiny{ $\pm$0.0} & 80.0\tiny{ $\pm$0.0} & 77.5\tiny{ $\pm$0.0} & 71.1\tiny{ $\pm$0.0} \\

&\textsc{C. Task Arithmetic} \cite{ilharco2023editing}  & \xmark \hspace{2pt}/\hspace{2pt} \xmark      & 67.5\tiny{ $\pm$0.0} & 66.5\tiny{ $\pm$0.0} & 60.0\tiny{ $\pm$0.0} & 77.1\tiny{ $\pm$0.0} & 70.9\tiny{ $\pm$0.6} & 64.2\tiny{ $\pm$0.0} & 82.1\tiny{ $\pm$0.0} & 77.9\tiny{ $\pm$0.0} & 70.3\tiny{ $\pm$0.0} \\

&\textsc{C. Ties-Merging} \cite{yadav2023ties}    & \xmark \hspace{2pt}/\hspace{2pt} \xmark       & 49.0\tiny{ $\pm$10.2} & 66.2\tiny{ $\pm$0.6} & 59.9\tiny{ $\pm$0.7} & 66.8\tiny{ $\pm$3.7} & 70.5\tiny{ $\pm$0.8} & 63.0\tiny{ $\pm$1.6} & 64.3\tiny{ $\pm$7.0} & 78.0\tiny{ $\pm$0.6} & 68.3\tiny{ $\pm$0.9} \\

&\textsc{MagMax-Ind} \cite{marczak2024magmax}      & \xmark \hspace{2pt}/\hspace{2pt} \xmark      & 70.7\tiny{ $\pm$0.0} & 67.0\tiny{ $\pm$0.0} & 61.2\tiny{ $\pm$0.0} & 76.7\tiny{ $\pm$1.8} & 67.0\tiny{ $\pm$0.0} & 62.5\tiny{ $\pm$0.0} & 83.4\tiny{ $\pm$0.0} & 71.2\tiny{ $\pm$0.0} & 71.2\tiny{ $\pm$0.0} \\

&\textsc{Consensus TA} \cite{wanglocalizing}      & \xmark \hspace{2pt}/\hspace{2pt} \xmark      & 67.1\tiny{ $\pm$0.4} & 64.1\tiny{ $\pm$0.8} & 45.8\tiny{ $\pm$1.5} & 72.8\tiny{ $\pm$0.5} & 69.0\tiny{ $\pm$0.0} & 49.9\tiny{ $\pm$1.9} & 80.4\tiny{ $\pm$0.5} &  75.0\tiny{ $\pm$1.0} & 51.3\tiny{ $\pm$2.4} \\

&\textsc{OPCM} \cite{tang2025merging}  & \xmark \hspace{2pt}/\hspace{2pt} \xmark  & 75.5\tiny{ $\pm$0.5} & 71.9\tiny{ $\pm$0.3} & \underline{65.7}\tiny{ $\pm$0.2} & 81.8\tiny{ $\pm$0.3} & 77.1\tiny{ $\pm$0.5} & 70.3\tiny{ $\pm$0.2} & 87.0\tiny{ $\pm$0.4} & 83.5\tiny{ $\pm$0.2} & 76.0\tiny{ $\pm$0.2} \\

&\textsc{C. LW AdaMerging} \cite{yang2023adamerging} & \xmark \hspace{2pt}/\hspace{2pt} \cmark   &  53.4\tiny{ $\pm$ 3.2} &  59.8\tiny{ $\pm$ 1.6} &  59.7\tiny{ $\pm$ 7.4} &  59.9\tiny{ $\pm$2.3} &  64.3\tiny{ $\pm$ 1.2} &  61.5\tiny{ $\pm$ 1.1}  &  68.8\tiny{ $\pm$ 2.9} & 73.1\tiny{ $\pm$ 5.7} &  66.9\tiny{ $\pm$ 1.1}  \\

&\textsc{C. LoRA-WEMOE} \cite{tang2024merging}  & \cmark \hspace{2pt}/\hspace{2pt} \cmark  & 68.8\tiny{ $\pm$ 7.8} & 63.8\tiny{ $\pm$ 3.4} & 49.6\tiny{ $\pm$ 15.4} & 72.6\tiny{ $\pm$ 3.7} & 67.9\tiny{ $\pm$ 2.9} & 55.0\tiny{ $\pm$ 7.0} & 75.6\tiny{ $\pm$ 7.8} & 74.0\tiny{ $\pm$ 5.0} & 56.9\tiny{ $\pm$ 19.8} \\

\rowcolor{gray!10}&
\methodshort{} (Ours) & \cmark \hspace{2pt}/\hspace{2pt} \cmark  & \textbf{85.8}\tiny{ $\pm$0.8} & \underline{81.6}\tiny{ $\pm$1.4} & \textbf{77.1}\tiny{ $\pm$2.0} & \textbf{88.3}\tiny{ $\pm$0.6} & \textbf{84.9}\tiny{ $\pm$0.8} & \textbf{81.9}\tiny{ $\pm$0.9} & \textbf{91.8}\tiny{ $\pm$0.2} & \underline{88.8}\tiny{ $\pm$0.7} & \textbf{85.5}\tiny{ $\pm$1.3} \\

\rowcolor{gray!10}&
\methodshort{}$^*$ (Ours) & \cmark \hspace{2pt}/\hspace{2pt} \cmark  & \underline{85.0}\tiny{ $\pm$0.5} & \textbf{81.7}\tiny{ $\pm$1.0} & \textbf{77.1}\tiny{ $\pm$1.3} & \underline{87.0}\tiny{ $\pm$0.6} & \underline{84.7}\tiny{ $\pm$1.0} & \underline{81.6}\tiny{ $\pm$1.3} & \underline{91.4}\tiny{ $\pm$0.3} & \textbf{89.2}\tiny{ $\pm$0.1} & \underline{83.6}\tiny{ $\pm$0.6} \\

\midrule

\multirow{9}{*}{\rotatebox[origin=c]{90}{BWT (\%) $\uparrow$}}
&\textsc{Average (SWA)} \cite{izmailov2018averaging}  & \xmark \hspace{2pt}/\hspace{2pt} \xmark          & -11.5\tiny{ $\pm$2.2} & -8.0\tiny{ $\pm$1.3} & -7.1\tiny{ $\pm$2.1} & -9.7\tiny{ $\pm$1.5} & -7.1\tiny{ $\pm$1.4} & -7.3\tiny{ $\pm$1.7} & -7.3\tiny{ $\pm$1.4} & -5.8\tiny{ $\pm$1.0} & -6.4\tiny{ $\pm$1.5} \\

&\textsc{C. Task Arithmetic} \cite{ilharco2023editing}  & \xmark \hspace{2pt}/\hspace{2pt} \xmark      & -9.6\tiny{ $\pm$1.5} & -1.3\tiny{ $\pm$1.6} & -3.4\tiny{ $\pm$1.0} & -4.2\tiny{ $\pm$1.0} & -1.3\tiny{ $\pm$0.4} & -3.6\tiny{ $\pm$0.4} & -7.1\tiny{ $\pm$0.8} & -1.8\tiny{ $\pm$0.3} & -3.3\tiny{ $\pm$0.3} \\

&\textsc{C. Ties-Merging} \cite{yadav2023ties}     & \xmark \hspace{2pt}/\hspace{2pt} \xmark      & -15.3\tiny{ $\pm$8.0} & \textbf{1.9}\tiny{ $\pm$0.6} & \underline{-1.5}\tiny{ $\pm$0.7} & -5.5\tiny{ $\pm$0.4} & \textbf{1.4}\tiny{ $\pm$0.7} & \underline{-1.5}\tiny{ $\pm$1.2} & -13.0\tiny{ $\pm$5.7} & -1.1\tiny{ $\pm$0.4} & -2.9\tiny{ $\pm$1.0} \\

&\textsc{MagMax-Ind} \cite{marczak2024magmax}     & \xmark \hspace{2pt}/\hspace{2pt} \xmark      & -8.3\tiny{ $\pm$1.3} & -7.4\tiny{ $\pm$1.4} & -7.2\tiny{ $\pm$1.6} & -6.1\tiny{ $\pm$1.3} & -7.4\tiny{ $\pm$2.0} & -8.0\tiny{ $\pm$2.2} & -5.0\tiny{ $\pm$0.8} & -6.0\tiny{ $\pm$2.1} & -6.5\tiny{ $\pm$2.1} \\

&\textsc{Consensus TA} \cite{wanglocalizing}      & \xmark \hspace{2pt}/\hspace{2pt} \xmark      & \textbf{3.8}\tiny{ $\pm$0.9} & -1.3\tiny{ $\pm$0.9} & -11.8\tiny{ $\pm$1.9} & \textbf{3.5}\tiny{ $\pm$0.6} & -1.1\tiny{ $\pm$0.8} & -11.6\tiny{ $\pm$1.3} & \textbf{2.4}\tiny{ $\pm$0.6} & -2.5\tiny{ $\pm$0.8} & -16.5\tiny{ $\pm$1.5} \\

&\textsc{OPCM} \cite{tang2025merging}  & \xmark \hspace{2pt}/\hspace{2pt} \xmark   & -6.3\tiny{ $\pm$1.1} & -6.0\tiny{ $\pm$1.0} & -7.8\tiny{ $\pm$1.5} & -4.8\tiny{ $\pm$0.7} & -5.1\tiny{ $\pm$1.4} & -6.3\tiny{ $\pm$2.2} & -2.6\tiny{ $\pm$1.0} & -4.3\tiny{ $\pm$0.7} & -6.5\tiny{ $\pm$1.8} \\

&\textsc{C. LW AdaMerging} \cite{yang2023adamerging} & \xmark \hspace{2pt}/\hspace{2pt} \cmark  & -32.5\tiny{ $\pm$3.6} & -24.1\tiny{ $\pm$1.7} & -22.7\tiny{ $\pm$4.3} & -27.8\tiny{ $\pm$2.7} & -22.1\tiny{ $\pm$1.4} & -21.4\tiny{ $\pm$1.2} & -24.3\tiny{ $\pm$3.3} & -19.6\tiny{ $\pm$1.7} & -21.7\tiny{ $\pm$1.1} \\

&\textsc{C. LoRA-WEMOE} \cite{tang2024merging} & \cmark \hspace{2pt}/\hspace{2pt} \cmark & -20.4\tiny{ $\pm$9.0} & -20.2\tiny{ $\pm$3.9} & -24.5\tiny{ $\pm$10.0} & -18.0\tiny{ $\pm$6.2} & -18.8\tiny{ $\pm$3.4} & -25.8\tiny{ $\pm$7.9} & -17.8\tiny{ $\pm$5.9} & -16.8\tiny{ $\pm$5.3} & -27.9\tiny{ $\pm$17.2} \\

\rowcolor{gray!10} &
\methodshort{} (Ours) & \cmark \hspace{2pt}/\hspace{2pt} \cmark  & -0.6\tiny{ $\pm$0.4} & -1.1\tiny{ $\pm$0.3} & -2.2\tiny{ $\pm$0.8} & -0.4\tiny{ $\pm$0.1} & -0.9\tiny{ $\pm$0.1} & -1.9\tiny{ $\pm$0.4} & -0.6\tiny{ $\pm$0.1}  & \underline{-1.0}\tiny{ $\pm$0.3} & \underline{-2.6}\tiny{ $\pm$0.9} \\

\rowcolor{gray!10} &
\methodshort{}$^*$ (Ours)  & \cmark \hspace{2pt}/\hspace{2pt} \cmark  & \underline{-0.1}\tiny{ $\pm$0.1} & \underline{-0.4}\tiny{ $\pm$0.1} & \textbf{-1.3}\tiny{ $\pm$0.6} & \underline{-0.1}\tiny{ $\pm$0.1} & \underline{-0.3}\tiny{ $\pm$0.1} & \textbf{-1.0}\tiny{ $\pm$0.4} & \underline{-0.2}\tiny{ $\pm$0.0} & \textbf{-0.4}\tiny{ $\pm$0.2} & \textbf{-1.5}\tiny{ $\pm$0.6} \\
\bottomrule
\end{tabular}}\vspace{-5pt}
\label{tab:results}
\end{table}

 \begin{table}[t]
\centering
\caption{Results of continual merging Flan-T5-base models on 8 tasks, ordered alphabetically.}
\renewcommand\arraystretch{1.05}
\setlength{\tabcolsep}{7pt}
\resizebox{1\linewidth}{!}{\begin{tabular}{l|c|cccccccc|cc}
\toprule
\textbf{Method} & \textbf{DM / DA} & \textbf{CoLA} & \textbf{MNLI} & \textbf{MRPC} & \textbf{QNLI} & \textbf{QQP}  & \textbf{RTE}  & \textbf{SST2} & \textbf{STSB} & \textbf{ACC $\uparrow$} & \textbf{BWT $\uparrow$} \\
\midrule
\textsc{Pre-trained}   & -- \hspace{2pt}/\hspace{2pt} --   & 69.1 & 56.5 & 76.2 &88.4 & 82.1 & 80.1 & 91.2 & 62.2 & 75.7 & - \\
\textsc{Individual}      & -- \hspace{2pt}/\hspace{2pt} --     & 75.0 & 83.4 & 87.5 & 91.5 & 85.4 & 85.9 & 93.6 & 88.7 & 86.4 & - \\ 
\midrule
\textsc{Task Arithmetic}  & \xmark \hspace{2pt}/\hspace{2pt} \xmark
      & 69.1 & 58.1 & 77.9 & 88.9 & 83.1 & 79.1 & 90.7 & 74.0 & 77.6 & -4.6 \\

\textsc{Ties-Merging}     & \xmark \hspace{2pt}/\hspace{2pt} \xmark
      & 39.3 & 70.0 & \underline{82.4} & 88.8 & 81.8 & 75.8 & 89.7 & \underline{76.8} & 75.6 & -6.1 \\

\textsc{OPCM}             & \xmark \hspace{2pt}/\hspace{2pt} \xmark
      & 69.9 & 72.9 & 78.7 & \underline{90.3} & \underline{83.8} & \textbf{83.0} & \underline{92.2} & 73.7 & 80.6 & \underline{-2.5} \\

\textsc{LW AdaMerging}    & \xmark \hspace{2pt}/\hspace{2pt} \cmark
      & 69.1 & 58.1 & 77.9 & 88.9 & 83.1 & 79.1 & 90.7 & 74.2 & 77.6 & -4.7 \\

\textsc{LoRA-WEMOE}       & \cmark \hspace{2pt}/\hspace{2pt} \cmark
      & \underline{71.5} & \textbf{80.6} & 78.2 & \underline{90.3} & 82.7 & \underline{80.5} & 91.3 & 76.2 & \underline{81.4} & \textbf{0.1} \\

\rowcolor{gray!10}\methodshort{} (Ours) & \cmark \hspace{2pt}/\hspace{2pt} \cmark
      & \textbf{75.0} & \underline{78.2} & \textbf{86.0} & \textbf{90.9} & \textbf{84.2} & \underline{80.5} & \textbf{92.5} & \textbf{78.8} & \textbf{83.3} & \textbf{0.1} \\

\bottomrule
\end{tabular}}
\label{flan-t5-base}
\end{table}

\subsection{Main Results}\label{exp:result}

\noindent\textbf{Overall Performance}\;
As shown in Tab.~\ref{tab:results}, \methodshort{} substantially outperforms previous continual merging methods on all CLIP backbones and task counts. It achieves the highest accuracy with backward forgetting kept near zero, demonstrating both strong forward learning and long-term stability. The lightweight variant performs on par with the full version, further underscoring the robustness of our approach.
On NLP benchmarks (Tab.~\ref{flan-t5-base}), \methodshort{} likewise attains the best overall accuracy and non-negative BWT, improving on multiple GLUE tasks while maintaining balanced performance across the suite.
Together, these results across vision and language confirm that \methodshort{} consistently delivers state-of-the-art accuracy while nearly eliminating forgetting under diverse continual scenarios.

\begin{table}[t]
  \centering
  \caption{Comparison of last accuracy (\%) with conventional CL approaches on MTIL benchmark.}
  \label{tab:mtil}
  \setlength{\tabcolsep}{4.3pt}
  \renewcommand\arraystretch{1.05}
  \resizebox{1\linewidth}{!}{%
  \begin{tabular}{llllllllllllc}
    \toprule
    \textbf{Method} &
    \rot{\small{Aircraft}}\hspace{3pt} & \rot{\small{Caltech101}}\hspace{-8pt} & \rot{\small{CIFAR100}}\hspace{-5pt} &
    \rot{\small{DTD}}\hspace{8pt} & \rot{\small{EuroSAT}}\hspace{-3pt} & \rot{\small{Flowers}} &
    \rot{\small{Food101}} & \rot{\small{MNIST}} &
    \hspace*{6pt}\rot{\small{Pets}}\hspace*{6pt} &
    \hspace*{6pt}\rot{\small{Cars}}\hspace*{6pt} & \rot{\small{SUN397}} &
    \textbf{\hspace*{3pt}Avg.\hspace*{3pt}} \\
    \midrule

    \multicolumn{13}{l}{\textbf{Conventional CL} (\emph{Sequential fine‐tuned})} \\ \cmidrule[0.5pt](lr){1-4}
    \textsc{WiSE-FT} \cite{wortsman2022robust}   & 27.2 & 90.8 & 68.0 & 68.9 & 86.9 & 74.0 & 87.6 & \textbf{99.6} & 92.6 & 77.8 & \underline{81.3} & 77.7 \\
    \textsc{ZSCL} \cite{zheng2023preventing}     & 40.6 & 92.2 & 81.3 & 70.5 & 94.8 & 90.5 & \underline{91.9} & 98.7 & \underline{93.9} & 85.3 & 80.2 & 83.6 \\
    \textsc{MoE-Adapter} \cite{yu2024boosting}   & 49.8 & 92.2 & 86.1 & 78.1 & 95.7 & 94.3 & 89.5 & 98.1 & 89.9 & 81.6 & 80.0 & 85.0 \\
    \textsc{DIKI} \cite{tang2024mind}            & 45.2 & 95.7 & \underline{86.3} & 72.9 & \textbf{98.0} & \underline{97.0} & 89.2 & \underline{99.4} & \textbf{94.2} & 81.6 & 76.6 & 85.1 \\
    \textsc{AwoForget} \cite{zheng2024adapt}     & 42.4 & 92.7 & 83.2 & 73.2 & 97.0 & 91.8 & \textbf{92.2} & 99.1 & \underline{93.9} & \textbf{87.4} & \textbf{82.6} & 85.0 \\
    \textsc{Dual-RAIL} \cite{xu2024advancing}    & \underline{52.5} & \underline{96.8} & 83.3 & \textbf{80.1} & 96.4 & \textbf{99.0} & 89.9 & 98.8 & 93.5 & \underline{85.5} & 79.2 & \textbf{86.8} \\
    \textsc{MagMax} \cite{marczak2024magmax}     & 40.2 & 96.1 & 81.1 & 72.0 & \underline{97.8} & 76.3 & 88.4 & 99.2 & 93.0 & 70.5 & 68.9 & 80.3 \\
    \rowcolor{gray!10}
    \textsc{Mingle-Seq}                          & \textbf{58.7} & \textbf{97.5} & \textbf{87.2} & \underline{79.7} & 97.3 & 87.2 & 90.1 & \textbf{99.6} & 93.0 & 80.4 & 73.3 & \underline{85.8} \\

    \midrule
    \multicolumn{13}{l}{\textbf{Continual Merging} (\emph{Independent fine‐tuned})} \\ \cmidrule[0.5pt](lr){1-4}
    \textsc{Average (SWA)} \cite{izmailov2018averaging}  & 26.5 & 92.3 & 74.3 & 48.4 & 73.7 & 74.0 & 87.1 & 84.0 & 91.2 & 67.5 & 68.5 & 71.6 \\
\textsc{C.\ TA} \cite{ilharco2023editing}            & 26.6 & 92.5 & 74.5 & 48.7 & 74.3 & 74.4 & 87.0 & 85.5 & 91.2 & 67.7 & 68.6 & 71.9 \\
\textsc{C.\ Ties} \cite{yadav2023ties}               & 30.5 & 94.0 & 74.8 & 49.8 & 71.7 & 73.8 & \underline{87.3} & 81.5 & 90.6 & 67.0 & 67.9 & 71.7 \\
\textsc{MagMax-Ind} \cite{marczak2024magmax}         & 29.9 & 93.7 & \underline{78.4} & 46.1 & 58.3 & 68.1 & 86.8 & 82.8 & 91.4 & 62.7 & 69.3 & 69.8 \\
\textsc{OPCM} \cite{tang2025merging}                 & \underline{35.7} & \underline{95.9} & 77.0 & \underline{54.6} & \underline{90.3} & \underline{76.4} & 87.1 & \underline{96.3} & \underline{93.3} & \underline{70.1} & \underline{70.5} & \underline{77.0} \\
\rowcolor{gray!10}
\methodshort{}                                       & \textbf{54.2} & \textbf{97.3} & \textbf{79.7} & \textbf{72.3} & \textbf{96.0} & \textbf{86.7} & \textbf{88.7} & \textbf{99.3} & \textbf{93.9} & \textbf{73.1} & \textbf{71.6} & \textbf{83.0} \\
    \bottomrule
  \end{tabular}}\vspace{-5pt}
\end{table}

\begin{table}[t]
  \centering
\caption{Robustness results of ViT-B/32 continually merged across 4 tasks.}        \label{tab:robust}
        \setlength{\tabcolsep}{5pt}
        \renewcommand\arraystretch{1.05}
        \resizebox{1\linewidth}{!}{%
        \begin{tabular}{p{0.15cm}lccccccccc}
        \toprule
        &\textbf{Method} \hspace{8pt}
          & \textbf{Clean} 
          & \textbf{Motion} 
          & \textbf{Impulse} 
          & \textbf{Gaussian} 
          & \textbf{Pixelate} 
          & \textbf{Spatter} 
          & \textbf{Contrast}
          & \textbf{JPEG} 
          & \textbf{Avg.}\\
        \midrule
\multirow{7}{*}{\rotatebox[origin=c]{90}{ACC (\%) $\uparrow$}} &
        \textsc{C. LW AdaMerging} \cite{yang2023adamerging} 
          &  56.0\tiny{ $\pm$5.3}
          &  47.5\tiny{ $\pm$4.4} 
          &  43.1\tiny{ $\pm$2.3} 
          &  43.3\tiny{ $\pm$3.4}
          &  18.1\tiny{ $\pm$4.7} 
          &  46.6\tiny{ $\pm$3.0} 
          &  48.9\tiny{ $\pm$4.8} 
          &  49.1\tiny{ $\pm$4.0} 
          &  44.9\\

        &\textsc{C. WEMOE} \cite{tang2024merging} 
          &   3.4\tiny{ $\pm$0.8} 
          &   3.1\tiny{ $\pm$0.4} 
          &   4.3\tiny{ $\pm$1.4} 
          &   3.4\tiny{ $\pm$1.4}
          &   3.0\tiny{ $\pm$1.6} 
          &   4.0\tiny{ $\pm$0.9} 
          &   3.3\tiny{ $\pm$0.7}
          &   4.0\tiny{ $\pm$1.2}
          &   3.6\\

        &\textsc{C. LoRA-WEMOE} \cite{tang2024merging} 
          &   78.7\tiny{ $\pm$4.5} 
          &   71.0\tiny{ $\pm$4.9} 
          &   55.0\tiny{ $\pm$3.8} 
          &   59.4\tiny{ $\pm$3.8}
          &   24.9\tiny{ $\pm$24.9} 
          &   60.5\tiny{ $\pm$3.8} 
          &   68.5\tiny{ $\pm$4.8}
          &   69.7\tiny{ $\pm$4.4}
          &   61.0\\
          
        &\textsc{C. Task Arithmetic} \cite{ilharco2023editing} 
          &  77.5\tiny{ $\pm$0.0}
          &  66.0\tiny{ $\pm$0.0}
          &  58.9\tiny{ $\pm$0.0} 
          &  59.6\tiny{ $\pm$0.0} 
          &  29.7\tiny{ $\pm$0.0} 
          &  63.5\tiny{ $\pm$0.0}
          &  66.0\tiny{ $\pm$0.0}
          &  67.8\tiny{ $\pm$0.0}
          &  61.1\\

        &\textsc{MagMax-Ind} \cite{marczak2024magmax} 
          &  79.1\tiny{ $\pm$0.0}
          &  69.0\tiny{ $\pm$0.0}
          &  60.6\tiny{ $\pm$0.0} 
          &  61.5\tiny{ $\pm$0.0}
          &  33.0\tiny{ $\pm$0.0}
          &  66.4\tiny{ $\pm$0.0}
          &  68.6\tiny{ $\pm$0.0}
          &  69.9\tiny{ $\pm$0.0}
          &  63.5\\

        &\textsc{OPCM} \cite{tang2025merging} 
          &  83.6\tiny{ $\pm$0.5}
          &  72.5\tiny{ $\pm$0.6}
          &  64.7\tiny{ $\pm$1.2}
          &  65.2\tiny{ $\pm$1.2} 
          &  35.2\tiny{ $\pm$0.6}
          &  70.5\tiny{ $\pm$0.5}
          &  72.5\tiny{ $\pm$0.6}
          &  74.4\tiny{ $\pm$0.3}
          &  67.3\\

        \rowcolor{gray!10} &
        \methodshort{} (Ours)
          &  \textbf{89.9}\tiny{ $\pm$0.4}
          &  \textbf{82.8}\tiny{ $\pm$0.8}
          &  \textbf{67.5}\tiny{ $\pm$2.0} 
          &  \textbf{70.7}\tiny{ $\pm$1.2} 
          &  \textbf{37.9}\tiny{ $\pm$0.4}
          &  \textbf{77.0}\tiny{ $\pm$0.7}
          &  \textbf{80.1}\tiny{ $\pm$0.8}
          &  \textbf{82.9}\tiny{ $\pm$0.9}
          &  \textbf{73.2}\\

        \midrule
\multirow{7}{*}{\rotatebox[origin=c]{90}{BWT (\%) $\uparrow$}} &
        \textsc{C. LW AdaMerging} \cite{yang2023adamerging} 
          & -38.0\tiny{ $\pm$7.1}
          & -37.3\tiny{ $\pm$5.9}
          & -22.2\tiny{ $\pm$3.0}
          & -25.2\tiny{ $\pm$4.5}
          & -20.8\tiny{ $\pm$6.3}
          & -28.6\tiny{ $\pm$4.0}
          & -34.7\tiny{ $\pm$6.5}
          & -36.1\tiny{ $\pm$5.3}
          & -29.5\\

        &\textsc{C. WEMOE} \cite{tang2024merging} 
          & -30.7\tiny{ $\pm$3.1}
          & -28.7\tiny{ $\pm$3.8}
          & -22.1\tiny{ $\pm$11.6}
          & -25.5\tiny{ $\pm$9.8}
          &  -8.0\tiny{ $\pm$4.5}
          & -23.4\tiny{ $\pm$11.0}
          & -27.6\tiny{ $\pm$5.6}
          & -28.6\tiny{ $\pm$5.2}
          & -24.3\\

        &\textsc{C. LoRA-WEMOE} \cite{tang2024merging} 
          &   -13.6\tiny{ $\pm$6.9} 
          &   -14.6\tiny{ $\pm$8.2} 
          &   -11.3\tiny{ $\pm$4.7} 
          &   -10.2\tiny{ $\pm$3.7}
          &   -15.6\tiny{ $\pm$8.8} 
          &   -10.8\tiny{ $\pm$3.9} 
          &   -11.2\tiny{ $\pm$8.8}
          &   -16.7\tiny{ $\pm$6.6}
          &   -13.0\\
          
        &\textsc{C. Task Arithmetic} \cite{ilharco2023editing} 
          &  -4.8\tiny{ $\pm$0.9}
          &  -6.1\tiny{ $\pm$1.2}
          &  -1.6\tiny{ $\pm$3.0}
          &  -1.6\tiny{ $\pm$1.7}
          &  -2.7\tiny{ $\pm$1.5}
          &  -3.1\tiny{ $\pm$2.5}
          &  -6.1\tiny{ $\pm$1.2}
          &  -5.1\tiny{ $\pm$0.8}
          &  -3.9\\

        &\textsc{MagMax-Ind} \cite{marczak2024magmax} 
          &  -7.7\tiny{ $\pm$0.8}
          &  -8.1\tiny{ $\pm$1.5}
          &  -6.1\tiny{ $\pm$4.9}
          &  -5.1\tiny{ $\pm$3.7}
          &  -3.5\tiny{ $\pm$3.0}
          &  -7.3\tiny{ $\pm$2.9}
          &  -8.4\tiny{ $\pm$1.6}
          &  -8.2\tiny{ $\pm$1.2}
          &  -6.8\\

        &\textsc{OPCM} \cite{tang2025merging} 
          &  -4.3\tiny{ $\pm$1.8}
          &  -4.5\tiny{ $\pm$2.8}
          &  -6.4\tiny{ $\pm$7.1}
          &  -6.1\tiny{ $\pm$4.3}
          &  -2.9\tiny{ $\pm$0.9}
          &  -6.3\tiny{ $\pm$2.9}
          &  -4.5\tiny{ $\pm$2.8}
          &  -5.7\tiny{ $\pm$1.5}
          &  -5.1\\

        \rowcolor{gray!10} &
        \methodshort{}  (Ours)
          &  \textbf{-0.2}\tiny{ $\pm$0.2}
          &   \textbf{-0.1}\tiny{ $\pm$0.4}
          &   \textbf{0.7}\tiny{ $\pm$1.0}
          &   \textbf{0.6}\tiny{ $\pm$1.1}
          &  \textbf{-0.2}\tiny{ $\pm$1.1}
          &   \textbf{0.2}\tiny{ $\pm$0.7}
          &   \textbf{0.0}\tiny{ $\pm$0.5}
          &   \textbf{0.5}\tiny{ $\pm$0.5}
          &   \textbf{-0.2}\\
        \bottomrule
        \end{tabular}
        }
\end{table}

\noindent\textbf{Comparison with Conventional CL.} Tab. \ref{tab:mtil} evaluates two CL paradigms on the MTIL benchmark: conventional CL, where each task model is fine-tuned from its immediate predecessor; and continual merging, which fine-tunes each task model independently before fusion, eliminating inter-model dependencies and enabling flexible task ordering and model reuse. Within the merging family, \methodshort{} sets a new state‑of‑the‑art, and when integrated into a sequential fine‑tuning pipeline, it matches the performance of SOTA CL methods. This demonstrates both its strength as a fusion strategy and its versatility across different training regimes.

\noindent\textbf{Robustness to Test-Time Distribution Shifts.} 
Following prior work \cite{yang2023adamerging,tang2024merging}, we evaluate \methodshort{} on seven corruptions (motion blur, impulse noise, Gaussian noise, pixelate, spatter, contrast, JPEG) and report results in Tab.~\ref{tab:robust}. It preserves high accuracy and near‑zero or even positive BWT, outperforming all baselines, whereas direct application of SOTA TTA‑based merging (WEMOE, AdaMerging) in a continual setting fails without tailored designs to continual setup.

\subsection{Ablation Results and Analysis}\label{exp:analysis}

\noindent\textbf{Ablation Study.}
We explore the contribution of each component in Tab.~\ref{tab:ablation}. Row 1 shows a fixed‑weight merging of low‑rank experts as our baseline. In Row 2, adding TTA boosts ACC substantially but at the cost of worsening BWT with more tasks. Row 3 demonstrates that freezing earlier gates curbs forgetting while retaining ACC gains. Row 4 then applies null‑space constraints, yielding further BWT improvements. Finally, Row 5 presents the full method with adaptive relaxation, which best harmonizes accuracy and long‑term stability.


\begin{table}[t]
\centering
\caption{Ablation study of \methodshort{} with CLIP ViT-B/16 over 8, 14, and 20 tasks.}
\label{tab:ablation}
\setlength{\tabcolsep}{6pt}
  \renewcommand\arraystretch{1.05}
\resizebox{1\textwidth}{!}{
\begin{tabular}{cccccccccc}
\toprule
\textbf{Test-Time} & \textbf{Frozen} & \textbf{Null-Space} & \textbf{Adaptive}
& \multicolumn{3}{c}{\textbf{ACC(\%)} ↑} 
& \multicolumn{3}{c}{\textbf{BWT(\%)} ↑} \\
\cmidrule[0.5pt](lr){5-7} \cmidrule[0.5pt](lr){8-10}
\textbf{Adaptation} & \textbf{Old Gate} & \textbf{Constrained Gate} & \textbf{Relaxation}  &{8 tasks} & {14 tasks} & {20 tasks} & {8 tasks} & {14 tasks} & {20 tasks} \\
\midrule
\xmark & --      & -- & --  
                        & 78.7\tiny{ $\pm$0.1} & 76.4\tiny{ $\pm$1.0}& 70.6\tiny{ $\pm$0.4}& -0.5\tiny{ $\pm$0.1}& -1.0\tiny{ $\pm$0.3}& -1.3\tiny{ $\pm$0.3} \\
\cmark & \xmark   & \xmark  & \xmark     & 86.4\tiny{ $\pm$5.3} & 81.7\tiny{ $\pm$2.3} & 76.7\tiny{ $\pm$1.3}
                        & -6.0\tiny{ $\pm$5.2} & -7.7\tiny{ $\pm$-3.4} & -12.8\tiny{ $\pm$1.3} \\
\cmark &\cmark & \xmark  & \xmark       
                        & 87.4\tiny{ $\pm$0.4} & 81.3\tiny{ $\pm$0.8} & 76.2\tiny{ $\pm$1.3}
                        & -2.3\tiny{ $\pm$0.5} & -4.3\tiny{ $\pm$0.8} & -6.8\tiny{ $\pm$0.9}  \\
\cmark &\cmark & \cmark  & \xmark       
                        & 86.0\tiny{ $\pm$1.5} & 83.5\tiny{ $\pm$0.9}& 78.3\tiny{ $\pm$1.7}& -0.1\tiny{ $\pm$0.1}& -0.1\tiny{ $\pm$0.1}& -0.2\tiny{ $\pm$0.1}\\
\rowcolor{gray!10}           
\cmark &  \cmark & \cmark & \cmark 
                        &88.3\tiny{ $\pm$0.6} & 84.9\tiny{ $\pm$0.8}& 81.9\tiny{ $\pm$0.9}& -0.4\tiny{ $\pm$0.1}& -0.9\tiny{ $\pm$0.1}& -1.9\tiny{ $\pm$0.4} \\
\bottomrule
\end{tabular}}
\end{table}

\begin{table}[t]
\centering
\caption{Ablation on number of adaptation steps for ViT-B/32 across 8, 14, and 20 tasks.}
\label{tab:tta_ablation}
\setlength{\tabcolsep}{11pt}
  \renewcommand\arraystretch{1.05}
\resizebox{\linewidth}{!}{
\begin{tabular}{lcccccc}
\toprule
Steps & ACC (8-task) & BWT (8-task) & ACC (14-task) & BWT (14-task) & ACC (20-task) & BWT (20-task) \\
\midrule
5   & 60.9{\tiny{$\pm$1.4}} & -0.1{\tiny{$\pm$0.2}} & 62.4{\tiny{$\pm$1.7}} & -0.3{\tiny{$\pm$0.1}} & 60.2{\tiny{$\pm$1.5}} & -0.1{\tiny{$\pm$0.2}} \\
10  & 68.6{\tiny{$\pm$1.6}} & -0.2{\tiny{$\pm$0.2}} & 68.5{\tiny{$\pm$1.6}} & -0.1{\tiny{$\pm$0.2}} & 63.5{\tiny{$\pm$1.0}} & -0.4{\tiny{$\pm$0.3}} \\
20  & 78.4{\tiny{$\pm$0.6}} & -0.2{\tiny{$\pm$0.1}} & 75.5{\tiny{$\pm$1.3}} & -0.4{\tiny{$\pm$0.1}} & 71.0{\tiny{$\pm$1.2}} & -0.4{\tiny{$\pm$0.4}} \\
\rowcolor{gray!10} 50  & 85.8{\tiny{$\pm$0.8}} & -0.6{\tiny{$\pm$0.4}} & 81.6{\tiny{$\pm$1.4}} & -1.1{\tiny{$\pm$0.3}} & 77.1{\tiny{$\pm$2.0}} & -2.2{\tiny{$\pm$0.8}} \\
\bottomrule
\end{tabular}}
\end{table}

\begin{table}[!ht]
\centering
\caption{Efficiency and sample analysis. 
(a) Expert insertion layers and rank sweep over 8 tasks on CLIP-ViT-B/32. 
(b) Wall-clock adaptation time across tasks on CLIP-ViT-B/32. 
(c) Accuracy (\%) of CLIP-ViT-B/16 under varying numbers of test samples.}
\label{tab:efficiency_combined}
\begin{minipage}{0.55\linewidth}
\centering
  \renewcommand\arraystretch{1.05}
  \setlength{\tabcolsep}{2pt}
\resizebox{\linewidth}{!}{
\begin{tabular}{lcccc}
\multicolumn{5}{c}{\textbf{(a) Expert insertion layers and rank sweep (8 tasks)}} \\
\toprule
\textbf{Configuration} & \textbf{TTA Time} & \textbf{Train. Param} & \textbf{Full Param} & \textbf{ACC(\%)} \\
\midrule
\texttt{attn.qkv\_proj} ($r=64$)  & 61 s & 27.7 k & 116.0 M & 69.9 \\
\texttt{attn.out\_proj} ($r=64$)  & 47 s & 9.3 k  & 97.0 M  & 53.9 \\
\texttt{mlp.fc1} ($r=64$)         & 48 s & 9.0 k  & 111.1 M & 82.6 \\
\texttt{mlp.fc2} ($r=64$)         & 48 s & 36.8 k & 111.3 M & 70.1 \\
\rowcolor{gray!10}\texttt{attn \& mlp} ($r=64$) & 78 s & 83.0 k & 173.1 M & 85.8 \\
\midrule
\texttt{qkv \& fc1} ($r=32$) & 65 s & 36.9 k & 113.7 M & 84.5\\
\texttt{qkv \& fc1} ($r=128$)     & 65 s & 36.9 k & 191.6 M & 85.1 \\
\texttt{qkv \& fc1} ($r=768$)     & 70 s & 36.9 k & 710.3 M & 83.7 \\
\rowcolor{gray!10}\texttt{qkv \& fc1} ($r=64$)      & 65 s & 36.9 k & 139.7 M & 85.0  \\
\bottomrule
\end{tabular}}
\end{minipage}\hfill
\begin{minipage}{0.42\linewidth}
\renewcommand\arraystretch{1.0}
\centering
\setlength{\tabcolsep}{5pt}
\resizebox{\linewidth}{!}{
\begin{tabular}{cccc}
\multicolumn{4}{c}{\textbf{(b) Wall-clock adaptation time across tasks}} \\
\toprule
\#Tasks & Adaptation steps & Total Time (s) & Avg./task (s) \\
\midrule
8  & 50 & 78  & 9.8   \\
14 & 50 & 138 & 9.9   \\
20 & 50 & 211 & 10.6  \\
\bottomrule
\end{tabular}}
\setlength{\tabcolsep}{10pt}
\resizebox{\linewidth}{!}{
\begin{tabular}{lccc}
\multicolumn{4}{c}{\textbf{(c) Results with varying samples/class}} \\
\toprule
\textbf{\# Samples/class} & \textbf{8-task} & \textbf{14-task} & \textbf{20-task} \\
\midrule
0 (Static)  & 78.7{\tiny{$\pm$0.1}} & 76.4{\tiny{$\pm$1.0}} & 70.6{\tiny{$\pm$0.4}} \\
\midrule
1   & 88.4{\tiny{$\pm$0.4}} & 84.6{\tiny{$\pm$1.1}} & 81.7{\tiny{$\pm$1.9}} \\
3   & 88.6{\tiny{$\pm$0.4}} & 84.7{\tiny{$\pm$1.0}} & 81.7{\tiny{$\pm$1.0}} \\
5   & 88.3{\tiny{$\pm$0.7}} & 84.9{\tiny{$\pm$0.8}} & 81.9{\tiny{$\pm$0.9}} \\
10  & 88.6{\tiny{$\pm$0.3}} & 85.1{\tiny{$\pm$1.1}} & 82.1{\tiny{$\pm$1.2}} \\
\bottomrule
\end{tabular}}
\end{minipage}
\end{table}

\noindent\textbf{Ablation on TTA optimization steps.}  
Tab.~\ref{tab:tta_ablation} reports accuracy and backward transfer of CLIP ViT-B/32 under different adaptation steps across 8, 14, and 20 tasks. Accuracy improves steadily with longer schedules, rising from 60--63\% at 5 steps to 77--86\% at 50 steps. Forgetting remains negligible, with BWT close to zero in all cases and only a minor drop of about 2\% in the 20-task setting at 50 steps. 
Notably, as few as 20 steps are sufficient to surpass all baselines in Tab.~1, while 50 steps yield the best accuracy with only a modest increase in cost. These results confirm that \textsc{Mingle} is both effective under tight compute budgets and scalable with additional adaptation.

\noindent\textbf{Computation and Parameter Efficiency.}
Tab.~\ref{tab:efficiency_combined} summarizes the efficiency analysis.
In part (a), inserting experts into both \texttt{attn} and \texttt{mlp} layers yields the highest accuracy, 85.8\%, but also the longest adaptation time of 78s and 83k trainable parameters. A lighter hybrid scheme \texttt{qkv \& fc1} with rank 64 reaches 85.0\% accuracy with 36.9k parameters and 65s, offering a better trade-off. The rank sweep shows that raising the rank from 32 to 64 improves accuracy from 84.5\% to 85.0\%, while larger ranks bring little or even negative gain, \textit{e.g.}, rank 768 drops to 83.7\%.
Part (b) reports wall-clock adaptation time as tasks increase: with 20 tasks the total is 211s, averaging about 10s per task. After adaptation the router remains fixed and inference is purely feedforward without TTA, enabling low-latency deployment across all tasks.
Overall, \methodshort{} achieves a strong balance of accuracy, parameter efficiency, and scalability under diverse resource budgets.

\noindent\textbf{Number of Seed Samples.}
The number of seed samples per class is crucial for TTA reliability and efficiency. 
As shown in Tab.~\ref{tab:efficiency_combined} (c), using no samples reduces to the \textit{Static} baseline, where LoRA modules are merged with fixed coefficients (0.3) without adaptation, yielding 70–79\% accuracy. 
Introducing a single sample lifts accuracy to 81–88\%, and adding more samples offers only minor gains. 
Variance across task orders decreases with more samples, making five samples per class a balanced trade-off between performance and efficiency.

\newpage

\begin{figure}[t]
    \centering
    \includegraphics[width=1\linewidth]{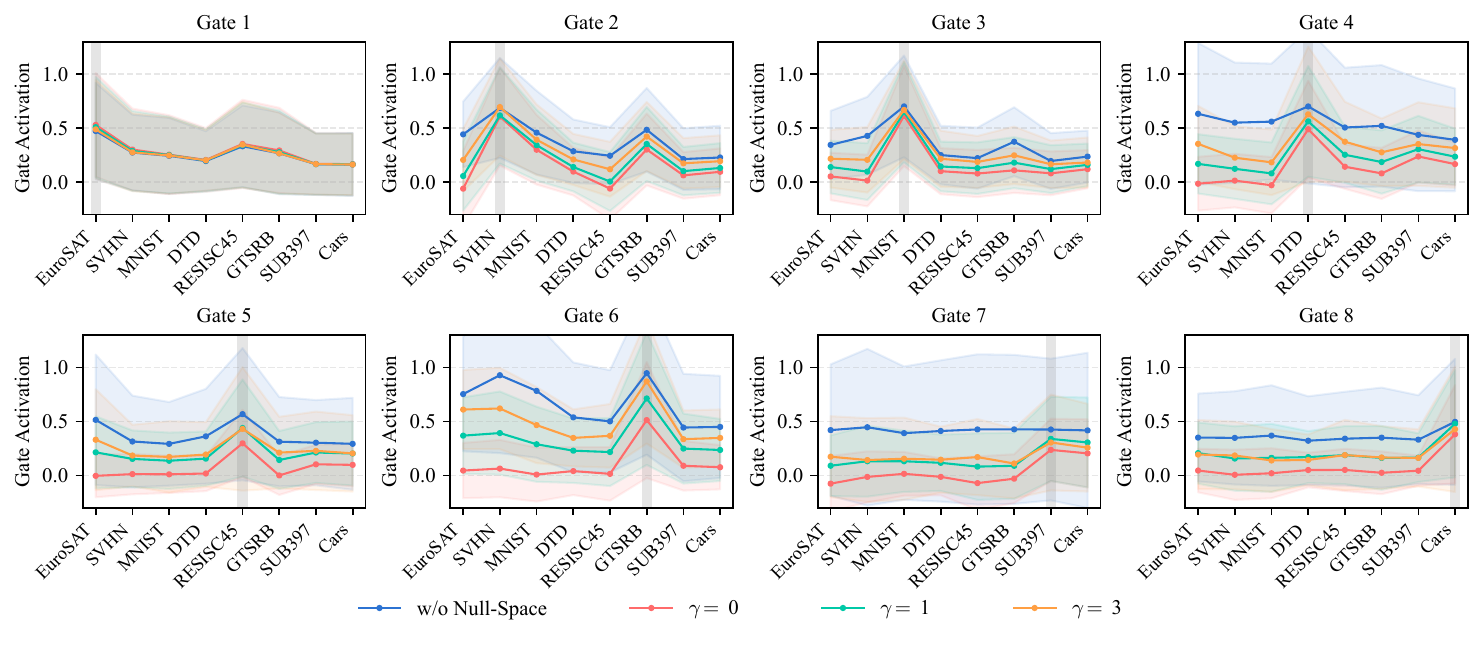}\vspace{-5pt}
    \caption{Gate activations across eight tasks under varying $\gamma$. Each subplot shows one gate; curves and shaded areas indicate mean and std across layers. Gray bars mark the gate’s training task. Lower $\gamma$ leads to stronger suppression on prior tasks.}
    \label{fig:null_gate}
\end{figure}

\begin{figure}[t]
  \centering
  \captionsetup[subfigure]{skip=0pt}
  \hspace{20pt}\begin{subfigure}[t]{0.3\textwidth}
    \centering 
    \includegraphics[width=1\linewidth]{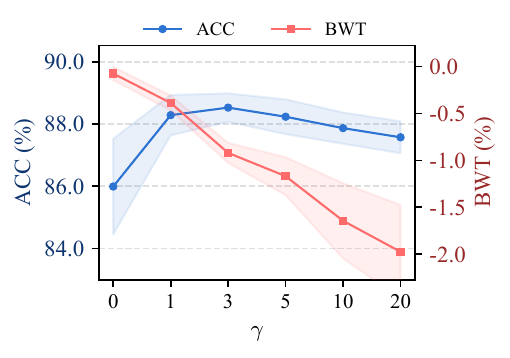}
    \caption{Vary $\gamma$, $\beta=0.99$, $k=3$}
    \label{fig:sensitivity_a}
  \end{subfigure}
  \hfill
  \begin{subfigure}[t]{0.3\textwidth}
    \centering
    \includegraphics[width=1\linewidth]{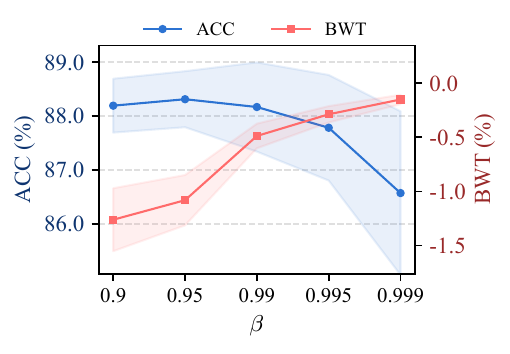}
    \caption{Vary $\beta$, $\gamma=1$, $k=3$}
    \label{fig:sensitivity_b}
  \end{subfigure}
  \hfill
  \begin{subfigure}[t]{0.3\textwidth}
    \centering
    \includegraphics[width=1\linewidth]{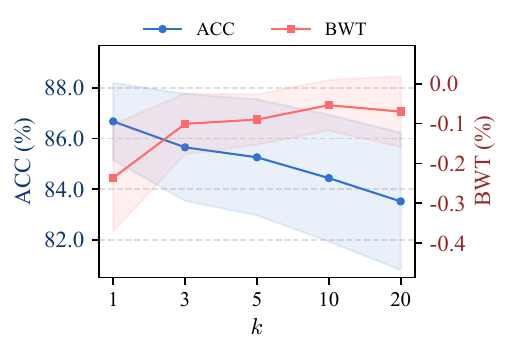}
    \caption{Vary $k$, $\gamma=0$}
    \label{fig:sensitivity_c}
  \end{subfigure}
  \caption{Sensitivity analysis of the null-space constrained gating \textit{w.r.t.} hyper-parameters $\beta$, $\gamma$, and $k$.}
  \label{fig:sensitivity}
\end{figure}

\noindent\textbf{Visualization on Gate Activations.}  
Fig.~\ref{fig:null_gate} shows that the null-space constraint suppresses gate responses on previously seen tasks, reducing forgetting, with smaller $\gamma$ giving stronger attenuation. We also observe that gate activations remain below 1.0 across tasks, even in the w/o Null-Space variant (blue curve), showing that under-activation is not solely due to the constraint. Instead, it reflects the complementary nature of experts: multiple LoRA modules capture overlapping but distinct subspaces, and softly combining them often yields better performance, especially in settings like TTCMM where task boundaries are fuzzy.

\noindent\textbf{Hyper-parameter of Gate.}
We study the effect of three key hyper-parameters: $\gamma$, $\beta$, and $k$. $\gamma$ controls the strength of null-space suppression; smaller values lead to stronger attenuation of activations on prior tasks, reducing forgetting (Fig.~\ref{fig:sensitivity_a}). As shown in Fig.~\ref{fig:sensitivity_b}, $\beta$ regulates the smoothness of the EMA used to accumulate interference signals. A moderate setting ($\beta = 0.99$) balances responsiveness and stability. Smaller $\beta$ amplifies noise sensitivity, while larger $\beta$ slows detection of interference. $k$ determines the number of principal directions retained per task. The mitigation of forgetting saturates at $k = 3$ (Fig.~\ref{fig:sensitivity_c}), indicating that a small number of task-specific directions suffices.

\section{Conclusions}
In this work, we introduced the task of test-time continual model merging (TTCMM) and proposed \methodshort{}, a novel framework for TTCMM that integrates a mixture-of-experts architecture with adaptive null-space constrained gating.
Extensive empirical evaluations show that \methodshort{} substantially improves generalization and mitigates catastrophic forgetting, consistently outperforming prior state-of-the-art approaches. These results establish TTCMM as a principled paradigm for addressing both task interference and distribution shift, and highlight the practical potential of \methodshort{} for scalable and efficient continual learning in real-world applications.
\newpage

\paragraph{Acknowledgments}
This work was supported in part by National Science and Technology Major Project (2021ZD0112001), National Natural Science Foundation of China (No.62271119, 08120002, 62071086, U23A20286), the Key Research and Development Project of Hainan Province (Grant No. ZDYF2024(LALH)003), the Fundamental Research Funds for the Central University of China (DUT No. 82232031), the Natural Science Foundation of Sichuan Province under Grant 2025ZNSFSC0475.

We thank all reviewers for taking the time to review our paper and give valuable suggestions.

\bibliography{references}

\begin{thebibliography}{98}
\providecommand{\natexlab}[1]{#1}
\providecommand{\url}[1]{\texttt{#1}}
\expandafter\ifx\csname urlstyle\endcsname\relax
  \providecommand{\doi}[1]{doi: #1}\else
  \providecommand{\doi}{doi: \begingroup \urlstyle{rm}\Url}\fi

\bibitem[Ainsworth et~al.(2022)Ainsworth, Hayase, and Srinivasa]{ainsworth2022git}
S.~K. Ainsworth, J.~Hayase, and S.~Srinivasa.
\newblock Git re-basin: Merging models modulo permutation symmetries.
\newblock \emph{arXiv preprint arXiv:2209.04836}, 2022.

\bibitem[Aljundi et~al.(2018)Aljundi, Babiloni, Elhoseiny, Rohrbach, and Tuytelaars]{aljundi2018memory}
R.~Aljundi, F.~Babiloni, M.~Elhoseiny, M.~Rohrbach, and T.~Tuytelaars.
\newblock Memory aware synapses: Learning what (not) to forget.
\newblock In \emph{Proceedings of the European conference on computer vision (ECCV)}, pages 139--154, 2018.

\bibitem[Bossard et~al.(2014)Bossard, Guillaumin, and Van~Gool]{bossard2014food}
L.~Bossard, M.~Guillaumin, and L.~Van~Gool.
\newblock Food-101--mining discriminative components with random forests.
\newblock In \emph{European Conference on Computer Vision}, pages 446--461. Springer, 2014.

\bibitem[Cheng et~al.(2017)Cheng, Han, and Lu]{cheng2017remote}
G.~Cheng, J.~Han, and X.~Lu.
\newblock Remote sensing image scene classification: Benchmark and state of the art.
\newblock \emph{Proceedings of the IEEE}, 105\penalty0 (10):\penalty0 1865--1883, 2017.

\bibitem[Chitale et~al.(2023)Chitale, Vaidya, Kane, and Ghotkar]{chitale2023task}
R.~Chitale, A.~Vaidya, A.~Kane, and A.~S. Ghotkar.
\newblock Task arithmetic with lo{RA} for continual learning.
\newblock In \emph{Workshop on Advancing Neural Network Training: Computational Efficiency, Scalability, and Resource Optimization (WANT@NeurIPS 2023)}, 2023.

\bibitem[Chung et~al.(2024)Chung, Hou, Longpre, Zoph, Tay, Fedus, Li, Wang, Dehghani, Brahma, et~al.]{chung2024scaling}
H.~W. Chung, L.~Hou, S.~Longpre, B.~Zoph, Y.~Tay, W.~Fedus, Y.~Li, X.~Wang, M.~Dehghani, S.~Brahma, et~al.
\newblock Scaling instruction-finetuned language models.
\newblock \emph{Journal of Machine Learning Research}, 25\penalty0 (70):\penalty0 1--53, 2024.

\bibitem[Cimpoi et~al.(2014)Cimpoi, Maji, Kokkinos, Mohamed, and Vedaldi]{cimpoi2014describing}
M.~Cimpoi, S.~Maji, I.~Kokkinos, S.~Mohamed, and A.~Vedaldi.
\newblock Describing textures in the wild.
\newblock In \emph{Proceedings of the IEEE Conference on Computer Vision and Pattern Recognition}, pages 3606--3613, 2014.

\bibitem[Clanuwat et~al.(2018)Clanuwat, Bober-Irizar, Kitamoto, Lamb, Yamamoto, and Ha]{clanuwat2018deep}
T.~Clanuwat, M.~Bober-Irizar, A.~Kitamoto, A.~Lamb, K.~Yamamoto, and D.~Ha.
\newblock Deep learning for classical japanese literature.
\newblock In \emph{NeurIPS Workshop on Machine Learning for Creativity and Design}, 2018.

\bibitem[Coates et~al.(2011)Coates, Ng, and Lee]{coates2011analysis}
A.~Coates, A.~Y. Ng, and H.~Lee.
\newblock An analysis of single-layer networks in unsupervised feature learning.
\newblock In \emph{Proceedings of the Fourteenth International Conference on Artificial Intelligence and Statistics}, pages 215--223. JMLR Workshop and Conference Proceedings, 2011.

\bibitem[Cohen et~al.(2017)Cohen, Afshar, Tapson, and Van~Schaik]{cohen2017emnist}
G.~Cohen, S.~Afshar, J.~Tapson, and A.~Van~Schaik.
\newblock Emnist: Extending mnist to handwritten letters.
\newblock In \emph{2017 International Joint Conference on Neural Networks (IJCNN)}, pages 2921--2926. IEEE, 2017.

\bibitem[D{\"o}bler et~al.(2023)D{\"o}bler, Marsden, and Yang]{dobler2023robust}
M.~D{\"o}bler, R.~A. Marsden, and B.~Yang.
\newblock Robust mean teacher for continual and gradual test-time adaptation.
\newblock In \emph{Proceedings of the IEEE/CVF Conference on Computer Vision and Pattern Recognition}, pages 7704--7714, 2023.

\bibitem[Douillard et~al.(2020)Douillard, Cord, Ollion, Robert, and Valle]{Douillard2020PODNetPO}
A.~Douillard, M.~Cord, C.~Ollion, T.~Robert, and E.~Valle.
\newblock Podnet: Pooled outputs distillation for small-tasks incremental learning.
\newblock \emph{European Conference on Computer Vision}, 2020.

\bibitem[Entezari et~al.(2021)Entezari, Sedghi, Saukh, and Neyshabur]{entezari2021role}
R.~Entezari, H.~Sedghi, O.~Saukh, and B.~Neyshabur.
\newblock The role of permutation invariance in linear mode connectivity of neural networks.
\newblock \emph{arXiv preprint arXiv:2110.06296}, 2021.

\bibitem[Fang et~al.(2024)Fang, Dziedzic, Zhang, Oliva, Verma, Razak, Papernot, and Wang]{fang2024decentralised}
C.~Fang, A.~Dziedzic, L.~Zhang, L.~Oliva, A.~Verma, F.~Razak, N.~Papernot, and B.~Wang.
\newblock Decentralised, collaborative, and privacy-preserving machine learning for multi-hospital data.
\newblock \emph{EBioMedicine}, 101, 2024.

\bibitem[Feng et~al.(2023)Feng, Yu, Liu, Khan, and Zuo]{feng2023diverse}
C.-M. Feng, K.~Yu, Y.~Liu, S.~Khan, and W.~Zuo.
\newblock Diverse data augmentation with diffusions for effective test-time prompt tuning.
\newblock In \emph{Proceedings of the IEEE/CVF International Conference on Computer Vision}, pages 2704--2714, 2023.

\bibitem[Fukuda et~al.(2024)Fukuda, Kera, and Kawamoto]{fukuda2024adapter}
T.~Fukuda, H.~Kera, and K.~Kawamoto.
\newblock Adapter merging with centroid prototype mapping for scalable class-incremental learning.
\newblock \emph{arXiv preprint arXiv:2412.18219}, 2024.

\bibitem[Gao et~al.(2023)Gao, Zhang, Liu, Darrell, Shelhamer, and Wang]{gao2023back}
J.~Gao, J.~Zhang, X.~Liu, T.~Darrell, E.~Shelhamer, and D.~Wang.
\newblock Back to the source: Diffusion-driven adaptation to test-time corruption.
\newblock In \emph{Proceedings of the IEEE/CVF Conference on Computer Vision and Pattern Recognition}, pages 11786--11796, 2023.

\bibitem[Goodfellow et~al.(2013)Goodfellow, Erhan, Carrier, Courville, Mirza, Hamner, Cukierski, Tang, Thaler, Lee, et~al.]{goodfellow2013challenges}
I.~J. Goodfellow, D.~Erhan, P.-L. Carrier, A.~Courville, M.~Mirza, B.~Hamner, W.~Cukierski, Y.~Tang, D.~Thaler, D.-H. Lee, et~al.
\newblock Challenges in representation learning: A report on three machine learning contests.
\newblock In \emph{International Conference on Neural Information Processing}, pages 117--124. Springer, 2013.

\bibitem[Helber et~al.(2019)Helber, Bischke, Dengel, and Borth]{helber2019eurosat}
P.~Helber, B.~Bischke, A.~Dengel, and D.~Borth.
\newblock Eurosat: A novel dataset and deep learning benchmark for land use and land cover classification.
\newblock \emph{IEEE Journal of Selected Topics in Applied Earth Observations and Remote Sensing}, 12\penalty0 (7):\penalty0 2217--2226, 2019.

\bibitem[Hou et~al.(2019)Hou, Pan, Loy, Wang, and Lin]{Hou2019LearningAU}
S.~Hou, X.~Pan, C.~C. Loy, Z.~Wang, and D.~Lin.
\newblock Learning a unified classifier incrementally via rebalancing.
\newblock \emph{2019 IEEE/CVF Conference on Computer Vision and Pattern Recognition (CVPR)}, pages 831--839, 2019.

\bibitem[Hu et~al.(2022)Hu, Shen, Wallis, Allen-Zhu, Li, Wang, Wang, Chen, et~al.]{hu2022lora}
E.~J. Hu, Y.~Shen, P.~Wallis, Z.~Allen-Zhu, Y.~Li, S.~Wang, L.~Wang, W.~Chen, et~al.
\newblock Lora: Low-rank adaptation of large language models.
\newblock \emph{ICLR}, 1\penalty0 (2):\penalty0 3, 2022.

\bibitem[Huang et~al.(2024{\natexlab{a}})Huang, Liu, Lin, Pang, Du, and Lin]{huang2024lorahub}
C.~Huang, Q.~Liu, B.~Y. Lin, T.~Pang, C.~Du, and M.~Lin.
\newblock Lorahub: Efficient cross-task generalization via dynamic lo{RA} composition.
\newblock In \emph{First Conference on Language Modeling}, 2024{\natexlab{a}}.

\bibitem[Huang et~al.(2024{\natexlab{b}})Huang, Cao, Lu, and Liu]{huang2024class}
L.~Huang, X.~Cao, H.~Lu, and X.~Liu.
\newblock Class-incremental learning with clip: Adaptive representation adjustment and parameter fusion.
\newblock In \emph{European Conference on Computer Vision}, pages 214--231. Springer, 2024{\natexlab{b}}.

\bibitem[Ilharco et~al.(2023)Ilharco, Ribeiro, Wortsman, Schmidt, Hajishirzi, and Farhadi]{ilharco2023editing}
G.~Ilharco, M.~T. Ribeiro, M.~Wortsman, L.~Schmidt, H.~Hajishirzi, and A.~Farhadi.
\newblock Editing models with task arithmetic.
\newblock In \emph{The Eleventh International Conference on Learning Representations}, 2023.

\bibitem[Iwasawa and Matsuo(2021)]{iwasawa2021test}
Y.~Iwasawa and Y.~Matsuo.
\newblock Test-time classifier adjustment module for model-agnostic domain generalization.
\newblock \emph{Advances in Neural Information Processing Systems}, 34:\penalty0 2427--2440, 2021.

\bibitem[Izmailov et~al.(2018)Izmailov, Podoprikhin, Garipov, Vetrov, and Wilson]{izmailov2018averaging}
P.~Izmailov, D.~Podoprikhin, T.~Garipov, D.~Vetrov, and A.~G. Wilson.
\newblock Averaging weights leads to wider optima and better generalization.
\newblock In \emph{34th Conference on Uncertainty in Artificial Intelligence 2018, UAI 2018}, pages 876--885. Association For Uncertainty in Artificial Intelligence (AUAI), 2018.

\bibitem[Jacobs et~al.(1991)Jacobs, Jordan, Nowlan, and Hinton]{jacobs1991adaptive}
R.~A. Jacobs, M.~I. Jordan, S.~J. Nowlan, and G.~E. Hinton.
\newblock Adaptive mixtures of local experts.
\newblock \emph{Neural computation}, 3\penalty0 (1):\penalty0 79--87, 1991.

\bibitem[Jin et~al.(2023)Jin, Ren, Preotiuc-Pietro, and Cheng]{jin2023dataless}
X.~Jin, X.~Ren, D.~Preotiuc-Pietro, and P.~Cheng.
\newblock Dataless knowledge fusion by merging weights of language models.
\newblock In \emph{The Eleventh International Conference on Learning Representations}, 2023.

\bibitem[Jordan and Jacobs(1994)]{jordan1994hierarchical}
M.~I. Jordan and R.~A. Jacobs.
\newblock Hierarchical mixtures of experts and the em algorithm.
\newblock \emph{Neural computation}, 6\penalty0 (2):\penalty0 181--214, 1994.

\bibitem[Jung et~al.(2020)Jung, Ahn, Cha, and Moon]{jung2020continual}
S.~Jung, H.~Ahn, S.~Cha, and T.~Moon.
\newblock Continual learning with node-importance based adaptive group sparse regularization.
\newblock \emph{Advances in neural information processing systems}, 33:\penalty0 3647--3658, 2020.

\bibitem[Kirkpatrick et~al.(2016)Kirkpatrick, Pascanu, Rabinowitz, Veness, Desjardins, Rusu, Milan, Quan, Ramalho, Grabska-Barwinska, Hassabis, Clopath, Kumaran, and Hadsell]{Kirkpatrick2016OvercomingCF}
J.~Kirkpatrick, R.~Pascanu, N.~C. Rabinowitz, J.~Veness, G.~Desjardins, A.~A. Rusu, K.~Milan, J.~Quan, T.~Ramalho, A.~Grabska-Barwinska, D.~Hassabis, C.~Clopath, D.~Kumaran, and R.~Hadsell.
\newblock Overcoming catastrophic forgetting in neural networks.
\newblock \emph{Proceedings of the National Academy of Sciences}, 114:\penalty0 3521 -- 3526, 2016.

\bibitem[Krause et~al.(2013)Krause, Stark, Deng, and Fei-Fei]{krause20133d}
J.~Krause, M.~Stark, J.~Deng, and L.~Fei-Fei.
\newblock 3d object representations for fine-grained categorization.
\newblock In \emph{2013 IEEE International Conference on Computer Vision Workshops}, pages 554--561. IEEE, 2013.

\bibitem[Krizhevsky and Hinton(2009)]{krizhevsky2009learning}
A.~Krizhevsky and G.~Hinton.
\newblock Learning multiple layers of features from tiny images.
\newblock Technical report, University of Toronto, 2009.

\bibitem[LeCun et~al.(1998)LeCun, Bottou, Bengio, and Haffner]{lecun1998mnist}
Y.~LeCun, L.~Bottou, Y.~Bengio, and P.~Haffner.
\newblock Gradient-based learning applied to document recognition.
\newblock \emph{Proceedings of the IEEE}, 86\penalty0 (11):\penalty0 2278--2324, 1998.

\bibitem[Lee et~al.(2017)Lee, Kim, Jun, Ha, and Zhang]{lee2017overcoming}
S.-W. Lee, J.-H. Kim, J.~Jun, J.-W. Ha, and B.-T. Zhang.
\newblock Overcoming catastrophic forgetting by incremental moment matching.
\newblock \emph{Advances in neural information processing systems}, 30, 2017.

\bibitem[Lee et~al.(2025)Lee, Kim, Kang, Bang, Song, and Lee]{lee2025ratta}
Y.~Lee, D.~Kim, J.~Kang, J.~Bang, H.~Song, and J.-G. Lee.
\newblock {RA}-{TTA}: Retrieval-augmented test-time adaptation for vision-language models.
\newblock In \emph{The Thirteenth International Conference on Learning Representations}, 2025.

\bibitem[Lin et~al.(2022)Lin, Yang, Fan, and Zhang]{lin2022beyond}
S.~Lin, L.~Yang, D.~Fan, and J.~Zhang.
\newblock Beyond not-forgetting: Continual learning with backward knowledge transfer.
\newblock \emph{Advances in Neural Information Processing Systems}, 35:\penalty0 16165--16177, 2022.

\bibitem[Liu and Soatto(2023)]{liu2023tangent}
T.~Y. Liu and S.~Soatto.
\newblock Tangent model composition for ensembling and continual fine-tuning.
\newblock In \emph{Proceedings of the IEEE/CVF International Conference on Computer Vision}, pages 18676--18686, 2023.

\bibitem[Liu et~al.(2021)Liu, Schiele, and Sun]{liu2021rmm}
Y.~Liu, B.~Schiele, and Q.~Sun.
\newblock Rmm: Reinforced memory management for class-incremental learning.
\newblock \emph{Advances in Neural Information Processing Systems}, 34:\penalty0 3478--3490, 2021.

\bibitem[Lu et~al.(2024)Lu, Fan, Wei, Qu, Chen, and Cheng]{lu2024twin}
Z.~Lu, C.~Fan, W.~Wei, X.~Qu, D.~Chen, and Y.~Cheng.
\newblock Twin-merging: Dynamic integration of modular expertise in model merging.
\newblock \emph{Advances in Neural Information Processing Systems}, 37:\penalty0 78905--78935, 2024.

\bibitem[Marczak et~al.(2024)Marczak, Twardowski, Trzciński, and Cygert]{marczak2024magmax}
D.~Marczak, B.~Twardowski, T.~Trzciński, and S.~Cygert.
\newblock Magmax: Leveraging model merging for seamless continual learning.
\newblock In \emph{European Conference on Computer Vision (ECCV)}, 2024.

\bibitem[Marouf et~al.(2024)Marouf, Roy, Tartaglione, and Lathuilière]{marouf2024weightedensemblemodelsstrong}
I.~E. Marouf, S.~Roy, E.~Tartaglione, and S.~Lathuilière.
\newblock Weighted ensemble models are strong continual learners, 2024.

\bibitem[McCloskey and Cohen(1989)]{mccloskey1989catastrophic}
M.~McCloskey and N.~J. Cohen.
\newblock Catastrophic interference in connectionist networks: The sequential learning problem.
\newblock In \emph{Psychology of learning and motivation}, volume~24, pages 109--165. Elsevier, 1989.

\bibitem[McMahan et~al.(2017)McMahan, Moore, Ramage, Hampson, and y~Arcas]{mcmahan2017communication}
B.~McMahan, E.~Moore, D.~Ramage, S.~Hampson, and B.~A. y~Arcas.
\newblock Communication-efficient learning of deep networks from decentralized data.
\newblock In \emph{Artificial intelligence and statistics}, pages 1273--1282. PMLR, 2017.

\bibitem[Mu and Lin(2025)]{mu2025comprehensive}
S.~Mu and S.~Lin.
\newblock A comprehensive survey of mixture-of-experts: Algorithms, theory, and applications.
\newblock \emph{arXiv preprint arXiv:2503.07137}, 2025.

\bibitem[Netzer et~al.(2011)Netzer, Wang, Coates, Bissacco, Wu, and Ng]{netzer2011reading}
Y.~Netzer, T.~Wang, A.~Coates, A.~Bissacco, B.~Wu, and A.~Y. Ng.
\newblock Reading digits in natural images with unsupervised feature learning.
\newblock In \emph{NIPS Workshop on Deep Learning and Unsupervised Feature Learning}, 2011.

\bibitem[Nilsback and Zisserman(2008)]{nilsback2008automated}
M.-E. Nilsback and A.~Zisserman.
\newblock Automated flower classification over a large number of classes.
\newblock In \emph{2008 Sixth Indian Conference on Computer Vision, Graphics \& Image Processing}, pages 722--729. IEEE, 2008.

\bibitem[Niu et~al.(2022)Niu, Wu, Zhang, Chen, Zheng, Zhao, and Tan]{niu2022efficient}
S.~Niu, J.~Wu, Y.~Zhang, Y.~Chen, S.~Zheng, P.~Zhao, and M.~Tan.
\newblock Efficient test-time model adaptation without forgetting.
\newblock In \emph{International conference on machine learning}, pages 16888--16905. PMLR, 2022.

\bibitem[Ortiz-Jimenez et~al.(2023)Ortiz-Jimenez, Favero, and Frossard]{ortiz2023task}
G.~Ortiz-Jimenez, A.~Favero, and P.~Frossard.
\newblock Task arithmetic in the tangent space: Improved editing of pre-trained models.
\newblock \emph{Advances in Neural Information Processing Systems}, 36:\penalty0 66727--66754, 2023.

\bibitem[Parkhi et~al.(2012)Parkhi, Vedaldi, Zisserman, and Jawahar]{parkhi2012cats}
O.~M. Parkhi, A.~Vedaldi, A.~Zisserman, and C.~V. Jawahar.
\newblock Cats and dogs.
\newblock \emph{2012 IEEE Conference on Computer Vision and Pattern Recognition}, pages 3498--3505, 2012.

\bibitem[Porrello et~al.(2025)Porrello, Bonicelli, Buzzega, Millunzi, Calderara, and Cucchiara]{porrello2025a}
A.~Porrello, L.~Bonicelli, P.~Buzzega, M.~Millunzi, S.~Calderara, and R.~Cucchiara.
\newblock A second-order perspective on model compositionality and incremental learning.
\newblock In \emph{The Thirteenth International Conference on Learning Representations}, 2025.

\bibitem[Qiu et~al.(2023)Qiu, Xu, Wang, Wu, Meng, and Li]{qiu2023ism}
Z.~Qiu, L.~Xu, Z.~Wang, Q.~Wu, F.~Meng, and H.~Li.
\newblock Ism-net: Mining incremental semantics for class incremental learning.
\newblock \emph{Neurocomputing}, 523:\penalty0 130--143, 2023.

\bibitem[Qiu et~al.(2024)Qiu, Xu, Meng, Li, Xu, and Wu]{qiu2024dual}
Z.~Qiu, Y.~Xu, F.~Meng, H.~Li, L.~Xu, and Q.~Wu.
\newblock Dual-consistency model inversion for non-exemplar class incremental learning.
\newblock In \emph{Proceedings of the IEEE/CVF conference on computer vision and pattern recognition}, pages 24025--24035, 2024.

\bibitem[Radford et~al.(2021)Radford, Kim, Hallacy, Ramesh, Goh, Agarwal, Sastry, Askell, Mishkin, Clark, et~al.]{radford2021learning}
A.~Radford, J.~W. Kim, C.~Hallacy, A.~Ramesh, G.~Goh, S.~Agarwal, G.~Sastry, A.~Askell, P.~Mishkin, J.~Clark, et~al.
\newblock Learning transferable visual models from natural language supervision.
\newblock In \emph{International conference on machine learning}, pages 8748--8763. PmLR, 2021.

\bibitem[Rebuffi et~al.(2016)Rebuffi, Kolesnikov, Sperl, and Lampert]{Rebuffi2016iCaRLIC}
S.-A. Rebuffi, A.~Kolesnikov, G.~Sperl, and C.~H. Lampert.
\newblock icarl: Incremental classifier and representation learning.
\newblock \emph{2017 IEEE Conference on Computer Vision and Pattern Recognition (CVPR)}, pages 5533--5542, 2016.

\bibitem[Schneider et~al.(2020)Schneider, Rusak, Eck, Bringmann, Brendel, and Bethge]{schneider2020improving}
S.~Schneider, E.~Rusak, L.~Eck, O.~Bringmann, W.~Brendel, and M.~Bethge.
\newblock Improving robustness against common corruptions by covariate shift adaptation.
\newblock \emph{Advances in neural information processing systems}, 33:\penalty0 11539--11551, 2020.

\bibitem[Shen et~al.(2023)Shen, Song, Tan, Li, Lu, and Zhuang]{shen2023hugginggpt}
Y.~Shen, K.~Song, X.~Tan, D.~Li, W.~Lu, and Y.~Zhuang.
\newblock Hugginggpt: Solving ai tasks with chatgpt and its friends in hugging face.
\newblock \emph{Advances in Neural Information Processing Systems}, 36:\penalty0 38154--38180, 2023.

\bibitem[Shoemake(1985)]{shoemake1985animating}
K.~Shoemake.
\newblock Animating rotation with quaternion curves.
\newblock In \emph{Proceedings of the 12th annual conference on Computer graphics and interactive techniques}, pages 245--254, 1985.

\bibitem[Shu et~al.(2022)Shu, Nie, Huang, Yu, Goldstein, Anandkumar, and Xiao]{shu2022test}
M.~Shu, W.~Nie, D.-A. Huang, Z.~Yu, T.~Goldstein, A.~Anandkumar, and C.~Xiao.
\newblock Test-time prompt tuning for zero-shot generalization in vision-language models.
\newblock \emph{Advances in Neural Information Processing Systems}, 35:\penalty0 14274--14289, 2022.

\bibitem[Simon et~al.(2021)Simon, Koniusz, and Harandi]{simon2021learning}
C.~Simon, P.~Koniusz, and M.~Harandi.
\newblock On learning the geodesic path for incremental learning.
\newblock In \emph{Proceedings of the IEEE/CVF conference on Computer Vision and Pattern Recognition}, pages 1591--1600, 2021.

\bibitem[Smith et~al.(2023)Smith, Karlinsky, Gutta, Cascante-Bonilla, Kim, Arbelle, Panda, Feris, and Kira]{smith2023coda}
J.~S. Smith, L.~Karlinsky, V.~Gutta, P.~Cascante-Bonilla, D.~Kim, A.~Arbelle, R.~Panda, R.~Feris, and Z.~Kira.
\newblock Coda-prompt: Continual decomposed attention-based prompting for rehearsal-free continual learning.
\newblock In \emph{Proceedings of the IEEE/CVF Conference on Computer Vision and Pattern Recognition}, pages 11909--11919, 2023.

\bibitem[Socher et~al.(2013)Socher, Perelygin, Wu, Chuang, Manning, Ng, and Potts]{socher2013recursive}
R.~Socher, A.~Perelygin, J.~Wu, J.~Chuang, C.~D. Manning, A.~Ng, and C.~Potts.
\newblock Recursive deep models for semantic compositionality over a sentiment treebank.
\newblock In \emph{Proceedings of the 2013 Conference on Empirical Methods in Natural Language Processing}, pages 1631--1642, 2013.

\bibitem[Song et~al.(2023)Song, Lee, Kweon, and Choi]{song2023ecotta}
J.~Song, J.~Lee, I.~S. Kweon, and S.~Choi.
\newblock Ecotta: Memory-efficient continual test-time adaptation via self-distilled regularization.
\newblock In \emph{Proceedings of the IEEE/CVF Conference on Computer Vision and Pattern Recognition}, pages 11920--11929, 2023.

\bibitem[Stallkamp et~al.(2012)Stallkamp, Schlipsing, Salmen, and Igel]{stallkamp2012man}
J.~Stallkamp, M.~Schlipsing, J.~Salmen, and C.~Igel.
\newblock Man vs. computer: Benchmarking machine learning algorithms for traffic sign recognition.
\newblock \emph{Neural Networks}, 32:\penalty0 323--332, 2012.

\bibitem[Stoica et~al.(2025)Stoica, Ramesh, Ecsedi, Choshen, and Hoffman]{stoica2025model}
G.~Stoica, P.~Ramesh, B.~Ecsedi, L.~Choshen, and J.~Hoffman.
\newblock Model merging with {SVD} to tie the knots.
\newblock In \emph{The Thirteenth International Conference on Learning Representations}, 2025.

\bibitem[Sun et~al.(2020)Sun, Wang, Liu, Miller, Efros, and Hardt]{sun2020test}
Y.~Sun, X.~Wang, Z.~Liu, J.~Miller, A.~Efros, and M.~Hardt.
\newblock Test-time training with self-supervision for generalization under distribution shifts.
\newblock In \emph{International conference on machine learning}, pages 9229--9248. PMLR, 2020.

\bibitem[Tang et~al.(2024{\natexlab{a}})Tang, Shen, Luo, Hu, Du, and Tao]{tangFusionBenchComprehensiveBenchmark2024}
A.~Tang, L.~Shen, Y.~Luo, H.~Hu, B.~Du, and D.~Tao.
\newblock {{FusionBench}}: {{A Comprehensive Benchmark}} of {{Deep Model Fusion}}, June 2024{\natexlab{a}}.

\bibitem[Tang et~al.(2024{\natexlab{b}})Tang, Shen, Luo, Yin, Zhang, and Tao]{tang2024merging}
A.~Tang, L.~Shen, Y.~Luo, N.~Yin, L.~Zhang, and D.~Tao.
\newblock Merging multi-task models via weight-ensembling mixture of experts.
\newblock In \emph{International Conference on Machine Learning}, pages 47778--47799. PMLR, 2024{\natexlab{b}}.

\bibitem[Tang et~al.(2024{\natexlab{c}})Tang, Shen, Luo, Zhan, Hu, Du, Chen, and Tao]{tang2024parameterefficient}
A.~Tang, L.~Shen, Y.~Luo, Y.~Zhan, H.~Hu, B.~Du, Y.~Chen, and D.~Tao.
\newblock Parameter-efficient multi-task model fusion with partial linearization.
\newblock In \emph{The Twelfth International Conference on Learning Representations}, 2024{\natexlab{c}}.

\bibitem[Tang et~al.(2025)Tang, Yang, Shen, Luo, Hu, Du, and Tao]{tang2025merging}
A.~Tang, E.~Yang, L.~Shen, Y.~Luo, H.~Hu, B.~Du, and D.~Tao.
\newblock Merging models on the fly without retraining: A sequential approach to scalable continual model merging.
\newblock \emph{arXiv preprint arXiv:2501.09522}, 2025.

\bibitem[Tang et~al.(2024{\natexlab{d}})Tang, Tian, Li, He, Zhou, Zhao, Li, and Jia]{tang2024mind}
L.~Tang, Z.~Tian, K.~Li, C.~He, H.~Zhou, H.~Zhao, X.~Li, and J.~Jia.
\newblock Mind the interference: Retaining pre-trained knowledge in parameter efficient continual learning of vision-language models.
\newblock In \emph{European Conference on Computer Vision}, pages 346--365. Springer, 2024{\natexlab{d}}.

\bibitem[Utans(1996)]{utans1996weight}
J.~Utans.
\newblock Weight averaging for neural networks and local resampling schemes.
\newblock In \emph{Proc. AAAI-96 Workshop on Integrating Multiple Learned Models. AAAI Press}, pages 133--138. Citeseer, 1996.

\bibitem[Veeling et~al.(2018)Veeling, Linmans, Winkens, Cohen, and Welling]{veeling2018rotation}
B.~S. Veeling, J.~Linmans, J.~Winkens, T.~S. Cohen, and M.~Welling.
\newblock Rotation equivariant cnns for digital pathology.
\newblock \emph{arXiv preprint arXiv:1806.03962}, 2018.

\bibitem[Wang et~al.(2019)Wang, Singh, Michael, Hill, Levy, and Bowman]{wang2019glue}
A.~Wang, A.~Singh, J.~Michael, F.~Hill, O.~Levy, and S.~R. Bowman.
\newblock Glue: A multi-task benchmark and analysis platform for natural language understanding.
\newblock In \emph{7th International Conference on Learning Representations, ICLR 2019}, 2019.

\bibitem[Wang et~al.(2020)Wang, Shelhamer, Liu, Olshausen, and Darrell]{wang2020tent}
D.~Wang, E.~Shelhamer, S.~Liu, B.~Olshausen, and T.~Darrell.
\newblock Tent: Fully test-time adaptation by entropy minimization.
\newblock \emph{arXiv preprint arXiv:2006.10726}, 2020.

\bibitem[Wang et~al.(2024)Wang, Ping, Wang, Han, Chen, Liu, and Sun]{wang2024lora}
H.~Wang, B.~Ping, S.~Wang, X.~Han, Y.~Chen, Z.~Liu, and M.~Sun.
\newblock Lora-flow: Dynamic lora fusion for large language models in generative tasks.
\newblock In \emph{Proceedings of the 62nd Annual Meeting of the Association for Computational Linguistics (Volume 1: Long Papers)}, pages 12871--12882, 2024.

\bibitem[Wang et~al.()Wang, Dimitriadis, Ortiz-Jimenez, Fleuret, and Frossard]{wanglocalizing}
K.~Wang, N.~Dimitriadis, G.~Ortiz-Jimenez, F.~Fleuret, and P.~Frossard.
\newblock Localizing task information for improved model merging and compression.
\newblock In \emph{Forty-first International Conference on Machine Learning}.

\bibitem[Wang et~al.(2023)Wang, Chen, Ge, Xia, Bao, Zheng, Zhang, Gui, and Huang]{wang2023orthogonal}
X.~Wang, T.~Chen, Q.~Ge, H.~Xia, R.~Bao, R.~Zheng, Q.~Zhang, T.~Gui, and X.~Huang.
\newblock Orthogonal subspace learning for language model continual learning.
\newblock In \emph{The 2023 Conference on Empirical Methods in Natural Language Processing}, 2023.

\bibitem[Wang et~al.(2022{\natexlab{a}})Wang, Zhang, Ebrahimi, Sun, Zhang, Lee, Ren, Su, Perot, Dy, et~al.]{wang2022dualprompt}
Z.~Wang, Z.~Zhang, S.~Ebrahimi, R.~Sun, H.~Zhang, C.-Y. Lee, X.~Ren, G.~Su, V.~Perot, J.~Dy, et~al.
\newblock Dualprompt: Complementary prompting for rehearsal-free continual learning.
\newblock In \emph{European Conference on Computer Vision}, pages 631--648. Springer, 2022{\natexlab{a}}.

\bibitem[Wang et~al.(2022{\natexlab{b}})Wang, Zhang, Lee, Zhang, Sun, Ren, Su, Perot, Dy, and Pfister]{wang2022learning}
Z.~Wang, Z.~Zhang, C.-Y. Lee, H.~Zhang, R.~Sun, X.~Ren, G.~Su, V.~Perot, J.~Dy, and T.~Pfister.
\newblock Learning to prompt for continual learning.
\newblock In \emph{Proceedings of the IEEE/CVF conference on computer vision and pattern recognition}, pages 139--149, 2022{\natexlab{b}}.

\bibitem[Wortsman et~al.(2022)Wortsman, Ilharco, Kim, Li, Kornblith, Roelofs, Lopes, Hajishirzi, Farhadi, Namkoong, et~al.]{wortsman2022robust}
M.~Wortsman, G.~Ilharco, J.~W. Kim, M.~Li, S.~Kornblith, R.~Roelofs, R.~G. Lopes, H.~Hajishirzi, A.~Farhadi, H.~Namkoong, et~al.
\newblock Robust fine-tuning of zero-shot models.
\newblock In \emph{Proceedings of the IEEE/CVF conference on computer vision and pattern recognition}, pages 7959--7971, 2022.

\bibitem[Wu et~al.(2024)Wu, Huang, Wang, Meng, and Wei]{wu2024meta}
Y.~Wu, L.-K. Huang, R.~Wang, D.~Meng, and Y.~Wei.
\newblock Meta continual learning revisited: Implicitly enhancing online hessian approximation via variance reduction.
\newblock In \emph{The Twelfth International Conference on Learning Representations}, 2024.

\bibitem[Xiao et~al.(2017)Xiao, Rasul, and Vollgraf]{xiao2017fashion}
H.~Xiao, K.~Rasul, and R.~Vollgraf.
\newblock Fashion-mnist: a novel image dataset for benchmarking machine learning algorithms.
\newblock \emph{arXiv preprint arXiv:1708.07747}, 2017.

\bibitem[Xiao et~al.(2010)Xiao, Hays, Ehinger, Oliva, and Torralba]{xiao2010sun}
J.~Xiao, J.~Hays, K.~A. Ehinger, A.~Oliva, and A.~Torralba.
\newblock Sun database: Large-scale scene recognition from abbey to zoo.
\newblock In \emph{2010 IEEE Conference on Computer Vision and Pattern Recognition}, pages 3485--3492. IEEE, 2010.

\bibitem[Xu et~al.(2024)Xu, Chen, Nie, Wang, Zhuang, and Okumura]{xu2024advancing}
Y.~Xu, Y.~Chen, J.~Nie, Y.~Wang, H.~Zhuang, and M.~Okumura.
\newblock Advancing cross-domain discriminability in continual learning of vision-language models.
\newblock In \emph{The Thirty-eighth Annual Conference on Neural Information Processing Systems}, 2024.

\bibitem[Yadav et~al.(2023)Yadav, Tam, Choshen, Raffel, and Bansal]{yadav2023ties}
P.~Yadav, D.~Tam, L.~Choshen, C.~A. Raffel, and M.~Bansal.
\newblock Ties-merging: Resolving interference when merging models.
\newblock \emph{Advances in Neural Information Processing Systems}, 36:\penalty0 7093--7115, 2023.

\bibitem[Yang et~al.()Yang, Wang, Shen, Liu, Guo, Wang, and Tao]{yang2023adamerging}
E.~Yang, Z.~Wang, L.~Shen, S.~Liu, G.~Guo, X.~Wang, and D.~Tao.
\newblock Adamerging: Adaptive model merging for multi-task learning.
\newblock In \emph{The Twelfth International Conference on Learning Representations}.

\bibitem[Yang et~al.(2024)Yang, Shen, Wang, Guo, Chen, Wang, and Tao]{RepresentationSurgery_ICML_2024}
E.~Yang, L.~Shen, Z.~Wang, G.~Guo, X.~Chen, X.~Wang, and D.~Tao.
\newblock Representation surgery for multi-task model merging.
\newblock \emph{Forty-first International Conference on Machine Learning}, 2024.

\bibitem[Yu et~al.(2024{\natexlab{a}})Yu, Zhuge, Zhang, Hu, Wang, Lu, and He]{yu2024boosting}
J.~Yu, Y.~Zhuge, L.~Zhang, P.~Hu, D.~Wang, H.~Lu, and Y.~He.
\newblock Boosting continual learning of vision-language models via mixture-of-experts adapters.
\newblock In \emph{Proceedings of the IEEE/CVF Conference on Computer Vision and Pattern Recognition}, pages 23219--23230, 2024{\natexlab{a}}.

\bibitem[Yu et~al.(2024{\natexlab{b}})Yu, Yu, Yu, Huang, and Li]{yu2024language}
L.~Yu, B.~Yu, H.~Yu, F.~Huang, and Y.~Li.
\newblock Language models are super mario: Absorbing abilities from homologous models as a free lunch.
\newblock In \emph{Forty-first International Conference on Machine Learning}, 2024{\natexlab{b}}.

\bibitem[Yuan et~al.(2023)Yuan, Xie, and Li]{yuan2023robust}
L.~Yuan, B.~Xie, and S.~Li.
\newblock Robust test-time adaptation in dynamic scenarios.
\newblock In \emph{Proceedings of the IEEE/CVF Conference on Computer Vision and Pattern Recognition}, pages 15922--15932, 2023.

\bibitem[Zenke et~al.(2017)Zenke, Poole, and Ganguli]{zenke2017continual}
F.~Zenke, B.~Poole, and S.~Ganguli.
\newblock Continual learning through synaptic intelligence.
\newblock \emph{International conference on machine learning}, pages 3987--3995, 2017.

\bibitem[Zhang et~al.(2022)Zhang, Levine, and Finn]{zhang2022memo}
M.~Zhang, S.~Levine, and C.~Finn.
\newblock Memo: Test time robustness via adaptation and augmentation.
\newblock \emph{Advances in neural information processing systems}, 35:\penalty0 38629--38642, 2022.

\bibitem[Zhao et~al.(2024)Zhao, Gan, Wang, Zhou, Yang, Kuang, and Wu]{zhao2024loraretriever}
Z.~Zhao, L.~Gan, G.~Wang, W.~Zhou, H.~Yang, K.~Kuang, and F.~Wu.
\newblock Loraretriever: Input-aware lora retrieval and composition for mixed tasks in the wild.
\newblock In \emph{Findings of the Association for Computational Linguistics ACL 2024}, pages 4447--4462, 2024.

\bibitem[Zhao et~al.(2025)Zhao, Shen, Zhu, Li, Su, Wang, and Wu]{zhao2025merging}
Z.~Zhao, T.~Shen, D.~Zhu, Z.~Li, J.~Su, X.~Wang, and F.~Wu.
\newblock Merging lo{RA}s like playing {LEGO}: Pushing the modularity of lo{RA} to extremes through rank-wise clustering.
\newblock In \emph{The Thirteenth International Conference on Learning Representations}, 2025.

\bibitem[Zheng et~al.(2024)Zheng, Tang, Hao, Han, Wang, and Xu]{zheng2024adapt}
M.~Zheng, Y.~Tang, Z.~Hao, K.~Han, Y.~Wang, and C.~Xu.
\newblock Adapt without forgetting: Distill proximity from dual teachers in vision-language models.
\newblock In \emph{European Conference on Computer Vision}, pages 109--125. Springer, 2024.

\bibitem[Zheng et~al.(2023)Zheng, Ma, Wang, Qin, Yue, and You]{zheng2023preventing}
Z.~Zheng, M.~Ma, K.~Wang, Z.~Qin, X.~Yue, and Y.~You.
\newblock Preventing zero-shot transfer degradation in continual learning of vision-language models.
\newblock In \emph{Proceedings of the IEEE/CVF international conference on computer vision}, pages 19125--19136, 2023.

\bibitem[Zhou et~al.(2024)Zhou, Sun, Ye, and Zhan]{zhou2024expandable}
D.-W. Zhou, H.-L. Sun, H.-J. Ye, and D.-C. Zhan.
\newblock Expandable subspace ensemble for pre-trained model-based class-incremental learning.
\newblock In \emph{Proceedings of the IEEE/CVF Conference on Computer Vision and Pattern Recognition}, pages 23554--23564, 2024.

\end{thebibliography}

\newpage
\section*{NeurIPS Paper Checklist}

\begin{enumerate}

\item {\bf Claims}
    \item[] Question: Do the main claims made in the abstract and introduction accurately reflect the paper's contributions and scope?
    \item[] Answer: \answerYes{} 
    \item[] Justification: The abstract and introduction clearly state the main contributions, which are consistently supported by both theoretical insights and empirical evaluations presented throughout the paper.
    \item[] Guidelines:
    \begin{itemize}
        \item The answer NA means that the abstract and introduction do not include the claims made in the paper.
        \item The abstract and/or introduction should clearly state the claims made, including the contributions made in the paper and important assumptions and limitations. A No or NA answer to this question will not be perceived well by the reviewers. 
        \item The claims made should match theoretical and experimental results, and reflect how much the results can be expected to generalize to other settings. 
        \item It is fine to include aspirational goals as motivation as long as it is clear that these goals are not attained by the paper. 
    \end{itemize}

\item {\bf Limitations}
    \item[] Question: Does the paper discuss the limitations of the work performed by the authors?
    \item[] Answer: \answerYes{} 
    \item[] Justification: The limitations of the proposed approach are explicitly discussed in Appendix.
    \item[] Guidelines:
    \begin{itemize}
        \item The answer NA means that the paper has no limitation while the answer No means that the paper has limitations, but those are not discussed in the paper. 
        \item The authors are encouraged to create a separate "Limitations" section in their paper.
        \item The paper should point out any strong assumptions and how robust the results are to violations of these assumptions (e.g., independence assumptions, noiseless settings, model well-specification, asymptotic approximations only holding locally). The authors should reflect on how these assumptions might be violated in practice and what the implications would be.
        \item The authors should reflect on the scope of the claims made, e.g., if the approach was only tested on a few datasets or with a few runs. In general, empirical results often depend on implicit assumptions, which should be articulated.
        \item The authors should reflect on the factors that influence the performance of the approach. For example, a facial recognition algorithm may perform poorly when image resolution is low or images are taken in low lighting. Or a speech-to-text system might not be used reliably to provide closed captions for online lectures because it fails to handle technical jargon.
        \item The authors should discuss the computational efficiency of the proposed algorithms and how they scale with dataset size.
        \item If applicable, the authors should discuss possible limitations of their approach to address problems of privacy and fairness.
        \item While the authors might fear that complete honesty about limitations might be used by reviewers as grounds for rejection, a worse outcome might be that reviewers discover limitations that aren't acknowledged in the paper. The authors should use their best judgment and recognize that individual actions in favor of transparency play an important role in developing norms that preserve the integrity of the community. Reviewers will be specifically instructed to not penalize honesty concerning limitations.
    \end{itemize}

\item {\bf Theory assumptions and proofs}
    \item[] Question: For each theoretical result, does the paper provide the full set of assumptions and a complete (and correct) proof?
    \item[] Answer: \answerYes{} 
    \item[] Justification: All theoretical claims are accompanied by clear assumptions and complete proofs in Appendix, with theorem formally stated and rigorously derived.
    \item[] Guidelines: 
    \begin{itemize}
        \item The answer NA means that the paper does not include theoretical results. 
        \item All the theorems, formulas, and proofs in the paper should be numbered and cross-referenced.
        \item All assumptions should be clearly stated or referenced in the statement of any theorems.
        \item The proofs can either appear in the main paper or the supplemental material, but if they appear in the supplemental material, the authors are encouraged to provide a short proof sketch to provide intuition. 
        \item Inversely, any informal proof provided in the core of the paper should be complemented by formal proofs provided in appendix or supplemental material.
        \item Theorems and Lemmas that the proof relies upon should be properly referenced. 
    \end{itemize}

\item {\bf Experimental result reproducibility}
    \item[] Question: Does the paper fully disclose all the information needed to reproduce the main experimental results of the paper to the extent that it affects the main claims and/or conclusions of the paper (regardless of whether the code and data are provided or not)?
    \item[] Answer: \answerYes{} 
    \item[] Justification: Detailed descriptions of datasets, training protocols, and evaluation metrics are provided in Section~\ref{exp:setup} and Appendix, enabling faithful reproduction of the main results.
    \item[] Guidelines:
    \begin{itemize}
        \item The answer NA means that the paper does not include experiments.
        \item If the paper includes experiments, a No answer to this question will not be perceived well by the reviewers: Making the paper reproducible is important, regardless of whether the code and data are provided or not.
        \item If the contribution is a dataset and/or model, the authors should describe the steps taken to make their results reproducible or verifiable. 
        \item Depending on the contribution, reproducibility can be accomplished in various ways. For example, if the contribution is a novel architecture, describing the architecture fully might suffice, or if the contribution is a specific model and empirical evaluation, it may be necessary to either make it possible for others to replicate the model with the same dataset, or provide access to the model. In general. releasing code and data is often one good way to accomplish this, but reproducibility can also be provided via detailed instructions for how to replicate the results, access to a hosted model (e.g., in the case of a large language model), releasing of a model checkpoint, or other means that are appropriate to the research performed.
        \item While NeurIPS does not require releasing code, the conference does require all submissions to provide some reasonable avenue for reproducibility, which may depend on the nature of the contribution. For example
        \begin{enumerate}
            \item If the contribution is primarily a new algorithm, the paper should make it clear how to reproduce that algorithm.
            \item If the contribution is primarily a new model architecture, the paper should describe the architecture clearly and fully.
            \item If the contribution is a new model (e.g., a large language model), then there should either be a way to access this model for reproducing the results or a way to reproduce the model (e.g., with an open-source dataset or instructions for how to construct the dataset).
            \item We recognize that reproducibility may be tricky in some cases, in which case authors are welcome to describe the particular way they provide for reproducibility. In the case of closed-source models, it may be that access to the model is limited in some way (e.g., to registered users), but it should be possible for other researchers to have some path to reproducing or verifying the results.
        \end{enumerate}
    \end{itemize}

\item {\bf Open access to data and code}
    \item[] Question: Does the paper provide open access to the data and code, with sufficient instructions to faithfully reproduce the main experimental results, as described in supplemental material?
    \item[] Answer: \answerNo{} 
    \item[] Justification:The code and data are not provided at submission time, but the authors state they will release them upon acceptance.
    \item[] Guidelines:
    \begin{itemize}
        \item The answer NA means that paper does not include experiments requiring code.
        \item Please see the NeurIPS code and data submission guidelines (\url{https://nips.cc/public/guides/CodeSubmissionPolicy}) for more details.
        \item While we encourage the release of code and data, we understand that this might not be possible, so “No” is an acceptable answer. Papers cannot be rejected simply for not including code, unless this is central to the contribution (e.g., for a new open-source benchmark).
        \item The instructions should contain the exact command and environment needed to run to reproduce the results. See the NeurIPS code and data submission guidelines (\url{https://nips.cc/public/guides/CodeSubmissionPolicy}) for more details.
        \item The authors should provide instructions on data access and preparation, including how to access the raw data, preprocessed data, intermediate data, and generated data, etc.
        \item The authors should provide scripts to reproduce all experimental results for the new proposed method and baselines. If only a subset of experiments are reproducible, they should state which ones are omitted from the script and why.
        \item At submission time, to preserve anonymity, the authors should release anonymized versions (if applicable).
        \item Providing as much information as possible in supplemental material (appended to the paper) is recommended, but including URLs to data and code is permitted.
    \end{itemize}

\item {\bf Experimental setting/details}
    \item[] Question: Does the paper specify all the training and test details (e.g., data splits, hyperparameters, how they were chosen, type of optimizer, etc.) necessary to understand the results?
    \item[] Answer: \answerYes{} 
    \item[] Justification: The paper includes detailed descriptions of datasets, data splits, training procedures, hyperparameters, and evaluation protocols in both the main text and appendix.
    \item[] Guidelines: 
    \begin{itemize}
        \item The answer NA means that the paper does not include experiments.
        \item The experimental setting should be presented in the core of the paper to a level of detail that is necessary to appreciate the results and make sense of them.
        \item The full details can be provided either with the code, in appendix, or as supplemental material.
    \end{itemize}

\item {\bf Experiment statistical significance}
    \item[] Question: Does the paper report error bars suitably and correctly defined or other appropriate information about the statistical significance of the experiments?
    \item[] Answer: \answerYes{} 
    \item[] Justification: The paper reports mean and standard deviation across multiple runs with different random seeds and clearly states the sources of variability, ensuring the statistical reliability of the results.
    \item[] Guidelines:
    \begin{itemize}
        \item The answer NA means that the paper does not include experiments.
        \item The authors should answer "Yes" if the results are accompanied by error bars, confidence intervals, or statistical significance tests, at least for the experiments that support the main claims of the paper.
        \item The factors of variability that the error bars are capturing should be clearly stated (for example, train/test split, initialization, random drawing of some parameter, or overall run with given experimental conditions).
        \item The method for calculating the error bars should be explained (closed form formula, call to a library function, bootstrap, etc.)
        \item The assumptions made should be given (e.g., Normally distributed errors).
        \item It should be clear whether the error bar is the standard deviation or the standard error of the mean.
        \item It is OK to report 1-sigma error bars, but one should state it. The authors should preferably report a 2-sigma error bar than state that they have a 96\% CI, if the hypothesis of Normality of errors is not verified.
        \item For asymmetric distributions, the authors should be careful not to show in tables or figures symmetric error bars that would yield results that are out of range (e.g. negative error rates).
        \item If error bars are reported in tables or plots, The authors should explain in the text how they were calculated and reference the corresponding figures or tables in the text.
    \end{itemize}

\item {\bf Experiments compute resources}
    \item[] Question: For each experiment, does the paper provide sufficient information on the computer resources (type of compute workers, memory, time of execution) needed to reproduce the experiments?
    \item[] Answer: \answerYes{} 
    \item[] Justification: The paper specifies the type of GPUs used, training time per experiment, and overall compute requirements, providing sufficient details to assess reproducibility and resource demands.
    \item[] Guidelines: 
    \begin{itemize}
        \item The answer NA means that the paper does not include experiments.
        \item The paper should indicate the type of compute workers CPU or GPU, internal cluster, or cloud provider, including relevant memory and storage.
        \item The paper should provide the amount of compute required for each of the individual experimental runs as well as estimate the total compute. 
        \item The paper should disclose whether the full research project required more compute than the experiments reported in the paper (e.g., preliminary or failed experiments that didn't make it into the paper). 
    \end{itemize}
    
\item {\bf Code of ethics}
    \item[] Question: Does the research conducted in the paper conform, in every respect, with the NeurIPS Code of Ethics \url{https://neurips.cc/public/EthicsGuidelines}?
    \item[] Answer: \answerYes{} 
    \item[] Justification: The research adheres to the NeurIPS Code of Ethics, with no identified ethical concerns related to data usage, human subjects, or societal impact.
    \item[] Guidelines:
    \begin{itemize}
        \item The answer NA means that the authors have not reviewed the NeurIPS Code of Ethics.
        \item If the authors answer No, they should explain the special circumstances that require a deviation from the Code of Ethics.
        \item The authors should make sure to preserve anonymity (e.g., if there is a special consideration due to laws or regulations in their jurisdiction).
    \end{itemize}

\item {\bf Broader impacts}
    \item[] Question: Does the paper discuss both potential positive societal impacts and negative societal impacts of the work performed?
    \item[] Answer: \answerYes{} 
    \item[] Justification: The paper includes a Broader Impacts section stating that the work aims to advance machine learning and does not raise specific societal concerns requiring detailed discussion.
    \item[] Guidelines:
    \begin{itemize}
        \item The answer NA means that there is no societal impact of the work performed.
        \item If the authors answer NA or No, they should explain why their work has no societal impact or why the paper does not address societal impact.
        \item Examples of negative societal impacts include potential malicious or unintended uses (e.g., disinformation, generating fake profiles, surveillance), fairness considerations (e.g., deployment of technologies that could make decisions that unfairly impact specific groups), privacy considerations, and security considerations.
        \item The conference expects that many papers will be foundational research and not tied to particular applications, let alone deployments. However, if there is a direct path to any negative applications, the authors should point it out. For example, it is legitimate to point out that an improvement in the quality of generative models could be used to generate deepfakes for disinformation. On the other hand, it is not needed to point out that a generic algorithm for optimizing neural networks could enable people to train models that generate Deepfakes faster.
        \item The authors should consider possible harms that could arise when the technology is being used as intended and functioning correctly, harms that could arise when the technology is being used as intended but gives incorrect results, and harms following from (intentional or unintentional) misuse of the technology.
        \item If there are negative societal impacts, the authors could also discuss possible mitigation strategies (e.g., gated release of models, providing defenses in addition to attacks, mechanisms for monitoring misuse, mechanisms to monitor how a system learns from feedback over time, improving the efficiency and accessibility of ML).
    \end{itemize}
    
\item {\bf Safeguards}
    \item[] Question: Does the paper describe safeguards that have been put in place for responsible release of data or models that have a high risk for misuse (e.g., pretrained language models, image generators, or scraped datasets)?
    \item[] Answer: \answerNA{} 
    \item[] Justification:  The paper does not introduce models or datasets with a high risk of misuse, so safeguards are not applicable.
    \item[] Guidelines:
    \begin{itemize}
        \item The answer NA means that the paper poses no such risks.
        \item Released models that have a high risk for misuse or dual-use should be released with necessary safeguards to allow for controlled use of the model, for example by requiring that users adhere to usage guidelines or restrictions to access the model or implementing safety filters. 
        \item Datasets that have been scraped from the Internet could pose safety risks. The authors should describe how they avoided releasing unsafe images.
        \item We recognize that providing effective safeguards is challenging, and many papers do not require this, but we encourage authors to take this into account and make a best faith effort.
    \end{itemize}

\item {\bf Licenses for existing assets}
    \item[] Question: Are the creators or original owners of assets (e.g., code, data, models), used in the paper, properly credited and are the license and terms of use explicitly mentioned and properly respected?
    \item[] Answer: \answerYes{} 
    \item[] Justification: All datasets and models used in the paper are properly cited, and their licenses and terms of use are respected and included where applicable.
    \item[] Guidelines:
    \begin{itemize}
        \item The answer NA means that the paper does not use existing assets.
        \item The authors should cite the original paper that produced the code package or dataset.
        \item The authors should state which version of the asset is used and, if possible, include a URL.
        \item The name of the license (e.g., CC-BY 4.0) should be included for each asset.
        \item For scraped data from a particular source (e.g., website), the copyright and terms of service of that source should be provided.
        \item If assets are released, the license, copyright information, and terms of use in the package should be provided. For popular datasets, \url{paperswithcode.com/datasets} has curated licenses for some datasets. Their licensing guide can help determine the license of a dataset.
        \item For existing datasets that are re-packaged, both the original license and the license of the derived asset (if it has changed) should be provided.
        \item If this information is not available online, the authors are encouraged to reach out to the asset's creators.
    \end{itemize}

\item {\bf New assets}
    \item[] Question: Are new assets introduced in the paper well documented and is the documentation provided alongside the assets?
    \item[] Answer: \answerNA{} 
    \item[] Justification: The paper does not release any new datasets, models, or code assets, so this question is not applicable.
    \item[] Guidelines:
    \begin{itemize}
        \item The answer NA means that the paper does not release new assets.
        \item Researchers should communicate the details of the dataset/code/model as part of their submissions via structured templates. This includes details about training, license, limitations, etc. 
        \item The paper should discuss whether and how consent was obtained from people whose asset is used.
        \item At submission time, remember to anonymize your assets (if applicable). You can either create an anonymized URL or include an anonymized zip file.
    \end{itemize}

\item {\bf Crowdsourcing and research with human subjects}
    \item[] Question: For crowdsourcing experiments and research with human subjects, does the paper include the full text of instructions given to participants and screenshots, if applicable, as well as details about compensation (if any)? 
    \item[] Answer: \answerNA{} 
    \item[] Justification: The paper does not involve crowdsourcing or research with human subjects, so this question is not applicable.
    \item[] Guidelines:
    \begin{itemize}
        \item The answer NA means that the paper does not involve crowdsourcing nor research with human subjects.
        \item Including this information in the supplemental material is fine, but if the main contribution of the paper involves human subjects, then as much detail as possible should be included in the main paper. 
        \item According to the NeurIPS Code of Ethics, workers involved in data collection, curation, or other labor should be paid at least the minimum wage in the country of the data collector. 
    \end{itemize}

\item {\bf Institutional review board (IRB) approvals or equivalent for research with human subjects}
    \item[] Question: Does the paper describe potential risks incurred by study participants, whether such risks were disclosed to the subjects, and whether Institutional Review Board (IRB) approvals (or an equivalent approval/review based on the requirements of your country or institution) were obtained?
    \item[] Answer: \answerNA{} 
    \item[] Justification: The paper does not involve research with human subjects, so IRB approval is not applicable.
    \item[] Guidelines:
    \begin{itemize}
        \item The answer NA means that the paper does not involve crowdsourcing nor research with human subjects.
        \item Depending on the country in which research is conducted, IRB approval (or equivalent) may be required for any human subjects research. If you obtained IRB approval, you should clearly state this in the paper. 
        \item We recognize that the procedures for this may vary significantly between institutions and locations, and we expect authors to adhere to the NeurIPS Code of Ethics and the guidelines for their institution. 
        \item For initial submissions, do not include any information that would break anonymity (if applicable), such as the institution conducting the review.
    \end{itemize}

\item {\bf Declaration of LLM usage}
    \item[] Question: Does the paper describe the usage of LLMs if it is an important, original, or non-standard component of the core methods in this research? Note that if the LLM is used only for writing, editing, or formatting purposes and does not impact the core methodology, scientific rigorousness, or originality of the research, declaration is not required.
    \item[] Answer: \answerNA{} 
    \item[] Justification: The core methods in this research do not involve LLMs in any important, original, or non-standard way, so this question is not applicable.
    \item[] Guidelines:
    \begin{itemize}
        \item The answer NA means that the core method development in this research does not involve LLMs as any important, original, or non-standard components.
        \item Please refer to our LLM policy (\url{https://neurips.cc/Conferences/2025/LLM}) for what should or should not be described.
    \end{itemize}

\end{enumerate}

\newpage
\appendix
\renewcommand{\thetheorem}{A.\arabic{theorem}}
\setcounter{theorem}{0}

\section*{Appendix}


\begin{tcolorbox}[colback=gray!10,colframe=gray]
\noindent
\textbf{A \quad Theoretical Risk Comparison} \dotfill \hyperref[appdix:Theoretical Risk Comparison: Dynamic MoE vs. Static Averaging]{24} \\
\textbf{B \quad Additional Descriptions} \dotfill \hyperref[appdix:Additional Descriptions]{26} \\
\hspace{1em} B.1 \quad Details of Dataset and Task Settings \dotfill \hyperref[appdix:Details of Dataset and Task Settings]{26} \\
\hspace{1em} B.2 \quad Details of Downstream Models \dotfill \hyperref[appdix:Details of Downstream Models]{27} \\
\hspace{1em} B.3 \quad Details of Baselines \dotfill \hyperref[appdix:Details of Baselines]{27} \\
\hspace{1em} B.4 \quad Details of Baseline Hyper-parameters \dotfill \hyperref[appdix:Details of Baseline Hyper-parameters]{29} \\
\hspace{1em} B.5 \quad Comparison of Assumptions and Requirements \dotfill \hyperref[appdix:Comparison of Assumptions and Requirements]{29} \\
\textbf{C \quad Additional Results} \dotfill \hyperref[appdix:Additional Results]{30} \\
\hspace{1em} C.1 \quad Detailed Overall Performance Results \dotfill \hyperref[appdix:Details of Overall Performance]{30} \\
\hspace{1em} C.2 \quad Accuracy Trends Across Sequential Tasks \dotfill \hyperref[appdix:Accuracy Across Sequential Task]{30} \\
\hspace{1em} C.3 \quad Detailed Results Under Distribution Shifts \dotfill \hyperref[appdix:Detail Analysis of Distribution Shift]{30} \\
\hspace{1em} C.4 \quad Inference Efficiency and Parameter Overhead \dotfill \hyperref[appdix:Inference Efficiency and Parameter Overhead]{34} \\
\hspace{1em} C.5 \quad Forward Transfer Analysis \dotfill \hyperref[appdix:Forward Transfer Analysis]{34} \\
\hspace{1em} C.6 \quad Additional  Visualizations of Gate Activations and Relaxation Effect \dotfill \hyperref[appdix:Hyper-parameter Analysis of Gate]{35} \\
\textbf{D \quad Discussions} \dotfill \hyperref[appdix:Discussions]{38} \\
\hspace{1em} D.1 \quad Use of Unlabeled Adaptation Samples \dotfill \hyperref[appdix:Use of Unlabeled Adaptation Samples]{38} \\
\hspace{1em} D.1 \quad Relation to Rehearsal-Free Continual Learning \dotfill \hyperref[appdix:Relation to Rehearsal-Free Continual Learning]{38} \\
\hspace{1em} D.3 \quad Limitations \dotfill \hyperref[appdix:Limitations]{38} \\
\hspace{1em} D.4 \quad Broader Impacts \dotfill \hyperref[appdix:Broader Impacts]{38}
\end{tcolorbox}

\section{Theoretical Risk Comparison: Dynamic MoE vs. Static Averaging}
\label{appdix:Theoretical Risk Comparison: Dynamic MoE vs. Static Averaging}

\paragraph{Problem Setup and Definitions}
Consider $T$ independent tasks, each associated with a data distribution $D_t$ for $t = 1, \dots, T$. For each task $t$, a pre-trained expert model $f_t(x)$ outputs a probability distribution over classes, trained specifically on $D_t$. The overall data distribution $D$ is a mixture of these tasks, where an example $(x, y)$ is drawn from task $t$ with prior probability $P(t)$, and then $(x, y) \sim D_t$. The expected risk of a predictive model $h(x)$ is defined as:
\begin{equation}
R(h) = \mathbb{E}_{(x,y) \sim D} \big[ \ell(h(x), y) \big] = \sum_{t=1}^T P(t) \, \mathbb{E}_{(x,y) \sim D_t} \big[ \ell(h(x), y) \big],
\end{equation}
where $\ell(h(x), y)$ is a loss function (\textit{e.g.}, cross-entropy or 0-1 loss).

We compare two methods to combine the experts into a final prediction $h(x)$:
\begin{itemize}[leftmargin=20pt]
    \item \textbf{Static Averaging}: Defined as $h_{\text{static}}(x) = \sum_{i=1}^T \alpha_i f_i(x)$, where $\boldsymbol{\alpha} = (\alpha_1, \dots, \alpha_T)$ is a fixed weight vector independent of $x$, typically with $\alpha_i \geq 0$ and $\sum_i \alpha_i = 1$ for probability outputs.
    \item \textbf{Dynamic Mixture-of-Experts (MoE)}: Defined as $h_{\text{MoE}}(x) = f_{i^*(x)}(x)$, where $i^*(x) = \arg\max_i g_i(x)$ and $g(x) = (g_1(x), \dots, g_T(x))$ is a gating function that selects one expert per input (hard routing). The gating is subject to routing noise, modeled below.
\end{itemize}

\paragraph{Routing Noise Model}For each input $x$ drawn from $D_t$, the true task is $t$, and the ideal expert is $f_t$. The gating selects the correct expert $i^*(x) = t$ with probability $1 - \varepsilon_t$, and an incorrect expert $i^*(x) \neq t$ with probability $\varepsilon_t = P(i^*(x) \neq t \mid x \sim D_t)$, the task-specific routing error rate. On error, the gating selects a random expert from $\{1, \dots, T\} \setminus \{t\}$ uniformly. Define:
\begin{itemize}[leftmargin=20pt]
    \item $R_t(i) = \mathbb{E}_{(x,y) \sim D_t} [ \ell(f_i(x), y) ]$, the risk of expert $i$ on task $t$.
    \item $R_{\text{ideal}} = \sum_{t=1}^T P(t) R_t(t)$, the risk with perfect routing.
    \item $R_{\text{wrong}, t} = \frac{1}{T-1} \sum_{i \neq t} R_t(i)$, the average risk of incorrect experts on task $t$.
    \item $\varepsilon = \sum_{t=1}^T P(t) \varepsilon_t$, the overall routing error rate.
\end{itemize}

\vspace{10pt}
\begin{tcolorbox}[colback=gray!10,colframe=gray]
\begin{theorem}[Dynamic MoE versus Static Averaging]
Let $\{(D_t,f_t)\}_{t=1}^T$ be $T$ independent tasks with priors
$P(t)$ and per-task risks $R_t(i)$.  
For any static mixture
$h_{\mathrm{static}}(x)=\sum_{i=1}^T\alpha_i\,f_i(x)$
and any hard-routed MoE
$h_{\mathrm{MoE}}(x)=f_{i^\star(x)}(x)$
with task-specific routing errors $\varepsilon_t$:
\begin{equation}
      R(h_{\mathrm{MoE}})
  \;=\;
  R_{\mathrm{ideal}}
  +\sum_{t=1}^T
    P(t)\,\varepsilon_t\bigl(R_{\text{wrong},t}-R_t(t)\bigr),
\end{equation}
where
$R_{\mathrm{ideal}}=\sum_tP(t)R_t(t)$ and
$R_{\text{wrong},t}=\frac{1}{T-1}\sum_{i\neq t}R_t(i)$.
Moreover,
\begin{enumerate}[leftmargin=10pt]
  \item \emph{(Perfect routing)}\;If $\varepsilon_t=0$ for all $t$, then
        $\displaystyle\inf_{g}R(h_{\mathrm{MoE}})
        \;<\;
        \inf_{\boldsymbol{\alpha}}R(h_{\mathrm{static}})$
        whenever at least two tasks disagree on their best expert.
  \item \emph{(Noisy routing)}\;
        If
        $\displaystyle
        \sum_{t}P(t)\varepsilon_t
        \bigl(R_{\text{wrong},t}-R_t(t)\bigr)
        <
        R^*_{\mathrm{static}}-R_{\mathrm{ideal}},$
        where $R^*_{\text{static}} = \inf_{\boldsymbol{\alpha}} R(h_{\text{static}})$,
        then the MoE still attains lower risk than any static mixture.
\end{enumerate}
\end{theorem}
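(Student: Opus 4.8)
The plan is to handle the three claims in sequence, since the exact risk decomposition does all the heavy lifting and the noisy-routing claim is an immediate corollary of it. First I would establish the decomposition by conditioning on the routing outcome. Fix a task $t$ and an input $x \sim D_t$: by the routing-noise model the gate forwards $x$ to the correct expert $f_t$ with probability $1-\varepsilon_t$, and on the complementary event of probability $\varepsilon_t$ to a uniformly random wrong expert from $\{1,\dots,T\}\setminus\{t\}$. Taking the expectation of $\ell(h_{\mathrm{MoE}}(x),y)$ over both the label $y$ and the routing coin gives the per-task risk $(1-\varepsilon_t)R_t(t) + \varepsilon_t R_{\text{wrong},t}$, where the second term is exactly the uniform average $R_{\text{wrong},t}=\frac1{T-1}\sum_{i\neq t}R_t(i)$. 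Averaging against the prior $P(t)$ and regrouping the $R_t(t)$ contributions into $R_{\mathrm{ideal}}$ yields the claimed identity $R(h_{\mathrm{MoE}}) = R_{\mathrm{ideal}} + \sum_t P(t)\varepsilon_t(R_{\text{wrong},t}-R_t(t))$. This step is routine, requiring only linearity of expectation and the tower rule.

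For the perfect-routing claim (item 1) I would compare the two infima directly. Setting $\varepsilon_t=0$ annihilates the penalty term, so the optimal gate attains $\inf_g R(h_{\mathrm{MoE}}) = \sum_t P(t)\min_i R_t(i)$, forwarding each task's inputs to its best expert. For the static side I would write $R(h_{\text{static}}) = \sum_t P(t)\,r_t(\boldsymbol{\alpha})$ with $r_t(\boldsymbol{\alpha}) = \mathbb{E}_{D_t}[\ell(\sum_i\alpha_i f_i,y)]$, noting that $r_t(e_i)=R_t(i)$ at the simplex vertices. The crux is that one $\boldsymbol{\alpha}$ must serve every task at once: if tasks $s$ and $t$ have distinct best experts $a\neq b$, then no vertex, and under strict convexity of $\ell$ no interior point either, can simultaneously minimize $r_s$ and $r_t$. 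Hence $\inf_{\boldsymbol{\alpha}}\sum_t P(t) r_t(\boldsymbol{\alpha}) > \sum_t P(t)\min_{\boldsymbol{\alpha}} r_t(\boldsymbol{\alpha}) = \sum_t P(t)\min_i R_t(i) = \inf_g R(h_{\mathrm{MoE}})$, giving the strict separation.

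The noisy-routing claim (item 2) then drops out of the decomposition with no further work. For the gate realizing errors $\{\varepsilon_t\}$ we have $R(h_{\mathrm{MoE}}) = R_{\mathrm{ideal}} + \sum_t P(t)\varepsilon_t(R_{\text{wrong},t}-R_t(t))$, and the hypothesis bounds this penalty strictly below $R^*_{\text{static}} - R_{\mathrm{ideal}}$; adding $R_{\mathrm{ideal}}$ to both sides gives $R(h_{\mathrm{MoE}}) < R^*_{\text{static}} = \inf_{\boldsymbol{\alpha}} R(h_{\text{static}})$, as claimed.

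The hard part will be the strict lower bound on the static risk in item 1. Establishing a genuine separation $\inf_{\boldsymbol{\alpha}} r_t > \min_i R_t(i)$ requires ruling out the possibility that a convex loss lets an interior mixture \emph{ensemble-beat} every single expert on a task; I would control this by invoking strict convexity of $\ell$ together with the assumption, implicit in the \textbf{best expert} language, that each task's optimal per-task response is its pure dominant expert, so that the per-task minimizers coincide with the corresponding vertices and the disagreement condition genuinely forbids a common minimizer. Quantifying the resulting gap, rather than merely asserting its positivity, is where the real care is needed.
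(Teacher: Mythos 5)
Your proposal follows the same route as the paper's proof: the same conditioning on the routing coin to obtain the exact risk decomposition, the same comparison of infima for the perfect-routing case, and the same one-line corollary for the noisy-routing case. The one place you genuinely diverge is the justification of the strict lower bound $\inf_{\boldsymbol{\alpha}}\mathbb{E}_{D_t}\bigl[\ell(\sum_i\alpha_i f_i(x),y)\bigr]\ge R_t(t)$ needed in item~1. The paper asserts this via ``Jensen's inequality'' for a convex $\ell$, but Jensen applied to a loss convex in its prediction argument yields the \emph{opposite} direction, $\ell(\sum_i\alpha_i f_i(x),y)\le\sum_i\alpha_i\,\ell(f_i(x),y)$, i.e.\ an upper bound on the mixture's loss by $\sum_i\alpha_i R_t(i)$; by itself it does not rule out an interior mixture ensemble-beating every pure expert on task $t$. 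You correctly flag this as the real difficulty and propose to close it with an explicit assumption that each task's risk minimizer over the convex hull is its pure dominant expert (plus strict convexity to forbid a common minimizer for two disagreeing tasks). That assumption is exactly what is needed and is used implicitly, but never stated, in the paper, so your treatment of this step is the more careful of the two; like the paper, you assert positivity of the resulting gap rather than quantifying it. Everything else in your proposal matches the paper's argument and is correct.
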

\end{tcolorbox}

\begin{proof}
The proof proceeds in three parts: (1) deriving the MoE risk with routing noise, (2) proving the optimal gating case, and (3) establishing the condition for MoE superiority under routing noise.

\textbf{Step 1: MoE Risk with Routing Noise}

The MoE prediction is $h_{\text{MoE}}(x) = f_{i^*(x)}(x)$, where $i^*(x) = \arg\max_i g_i(x)$. The expected risk is:
\begin{equation}
R(h_{\text{MoE}}) = \mathbb{E}_{(x,y) \sim D} [ \ell(f_{i^*(x)}(x), y) ] = \sum_{t=1}^T P(t) \, \mathbb{E}_{(x,y) \sim D_t} [ \ell(f_{i^*(x)}(x), y) ].
\end{equation}
For task $t$, condition on routing correctness:
\begin{itemize}[leftmargin=20pt]
    \item \textbf{Correct routing} ($i^*(x) = t$): Probability $1 - \varepsilon_t$, risk $R_t(t)$.
    \item \textbf{Incorrect routing} ($i^*(x) \neq t$): Probability $\varepsilon_t$, selects a random expert from $\{1, \dots, T\} \setminus \{t\}$, with average risk $R_{\text{wrong}, t} = \frac{1}{T-1} \sum_{i \neq t} R_t(i)$.
\end{itemize}
The expected risk on task $t$ is:
\begin{equation}
\mathbb{E}_{(x,y) \sim D_t} [ \ell(f_{i^*(x)}(x), y) ] = (1 - \varepsilon_t) R_t(t) + \varepsilon_t R_{\text{wrong}, t}.
\end{equation}
Thus, the total risk is:
\begin{equation}
R(h_{\text{MoE}}) = \sum_{t=1}^T P(t) \big[ (1 - \varepsilon_t) R_t(t) + \varepsilon_t R_{\text{wrong}, t} \big].
\end{equation}
Rewrite:
\begin{equation}
R(h_{\text{MoE}}) = \sum_{t=1}^T P(t) R_t(t) + \sum_{t=1}^T P(t) \varepsilon_t (R_{\text{wrong}, t} - R_t(t)) = R_{\text{ideal}} + \sum_{t=1}^T P(t) \varepsilon_t \delta_t,
\end{equation}
where $\delta_t = R_{\text{wrong}, t} - R_t(t) > 0$ is the risk increase due to misrouting on task $t$.

\textbf{Step 2: Optimal Gating (No Routing Noise)}

Assume an oracle gating function with $\varepsilon_t = 0$ for all $t$, so $i^*(x) = t$ for all $x \sim D_t$. Then:
\begin{equation}
R(h_{\text{MoE}}) = \sum_{t=1}^T P(t) R_t(t) = R_{\text{ideal}}.
\end{equation}
Define hypothesis classes:
\begin{equation}
\mathcal{H}_{\text{static}} = \left\{ x \mapsto \sum_{i=1}^T \alpha_i f_i(x) \mid \boldsymbol{\alpha} \in \mathbb{R}^T \right\},
\end{equation}
\begin{equation}
\mathcal{H}_{\text{MoE}} = \left\{ x \mapsto f_{i^*(x)}(x) \mid i^*(x) = \arg\max_i g_i(x), g: \mathcal{X} \to \mathbb{R}^T \right\}.
\end{equation}
Any static model $h_{\text{static}}(x) = \sum_{i=1}^T \alpha_i f_i(x)$ can be approximated by an MoE with $g(x)$ assigning constant weights, so $\mathcal{H}_{\text{static}} \subseteq \mathcal{H}_{\text{MoE}}$. Thus:
\begin{equation}
R^*_{\text{MoE}} = \inf_{g(\cdot)} R(h_{\text{MoE}}) \leq \inf_{\boldsymbol{\alpha}} R(h_{\text{static}}) = R^*_{\text{static}}.
\end{equation}
Under task heterogeneity ($R_t(t) < R_t(s)$ and $R_s(s) < R_s(t)$ for some $t \neq s$), the ideal MoE routes each $x \sim D_t$ to $f_t$, achieving:
\begin{equation}
R_{\text{ideal}} = \sum_{t=1}^T P(t) R_t(t).
\end{equation}
For static averaging:
\begin{equation}
R(h_{\text{static}}) = \sum_{t=1}^T P(t) \mathbb{E}_{(x,y) \sim D_t} \left[ \ell\left( \sum_{i=1}^T \alpha_i f_i(x), y \right) \right].
\end{equation}
Since $\ell$ is convex (\textit{e.g.}, cross-entropy), Jensen’s inequality implies:
\begin{equation}
\mathbb{E}_{D_t} \left[ \ell\left( \sum_i \alpha_i f_i(x), y \right) \right] \geq R_t(t),
\end{equation}
with strict inequality unless $\alpha_t = 1$ and $\alpha_i = 0$ for $i \neq t$, which cannot hold for all tasks simultaneously under heterogeneity. Thus:
\begin{equation}
R^*_{\text{static}} > R_{\text{ideal}} = R^*_{\text{MoE}}.
\end{equation}

\textbf{Step 3: MoE Superiority with Routing Noise}

Let $\gamma = R^*_{\text{static}} - R_{\text{ideal}} > 0$ under task heterogeneity. The MoE outperforms the static model if:
\begin{equation}
R(h_{\text{MoE}}) < R^*_{\text{static}}.
\end{equation}
Substitute:
\begin{equation}
R_{\text{ideal}} + \sum_{t=1}^T P(t) \varepsilon_t \delta_t < R_{\text{ideal}} + \gamma.
\end{equation}
Thus:
\begin{equation}
\sum_{t=1}^T P(t) \varepsilon_t \delta_t < \gamma = R^*_{\text{static}} - R_{\text{ideal}}.
\end{equation}
Since $\delta_t = R_{\text{wrong}, t} - R_t(t)$, the condition is:
\begin{equation}
\sum_{t=1}^T P(t) \varepsilon_t (R_{\text{wrong}, t} - R_t(t)) < R^*_{\text{static}} - R_{\text{ideal}}.
\end{equation}
If this holds, the MoE’s risk, despite routing noise, remains below the best static risk.
\end{proof}

\textbf{Conclusion}: The MoE risk is $R_{\text{ideal}} + \sum_{t=1}^T P(t) \varepsilon_t (R_{\text{wrong}, t} - R_t(t))$, and it outperforms static averaging when routing noise is sufficiently small relative to the static model’s suboptimality. The optimal gating case confirms $R^*_{\text{MoE}} \leq R^*_{\text{static}}$, with strict inequality under task heterogeneity.

\section{Additional Descriptions}
\label{appdix:Additional Descriptions}

\subsection{Details of Dataset and Task Settings}
\label{appdix:Details of Dataset and Task Settings}
\paragraph{Dataset Details}
Following prior works \cite{tang2025merging}, we evaluate continual model merging on twenty publicly available image classification datasets, including
SUN397~\cite{xiao2010sun}, %
Stanford Cars~\cite{krause20133d}, %
RESISC45~\cite{cheng2017remote}, %
EuroSAT~\cite{helber2019eurosat}, %
SVHN~\cite{netzer2011reading}, %
GTSRB~\cite{stallkamp2012man}, %
MNIST~\cite{lecun1998mnist}, %
DTD~\cite{cimpoi2014describing}, %
Flowers102~\cite{nilsback2008automated}, %
PCAM~\cite{veeling2018rotation}, %
FER2013~\cite{goodfellow2013challenges}, %
Oxford-IIIT Pet~\cite{parkhi2012cats}, %
STL-10~\cite{coates2011analysis}, %
CIFAR-100 and CIFAR-10~\cite{krizhevsky2009learning}, %
Food-101~\cite{bossard2014food}, %
Fashion-MNIST~\cite{xiao2017fashion}, %
EMNIST~\cite{cohen2017emnist}, %
KMNIST~\cite{clanuwat2018deep}, %
and Rendered SST-2~\cite{socher2013recursive}.

\begin{table}[t]
\centering
\small
\caption{Extended downstream datasets used in our experiments.}
\setlength{\tabcolsep}{4pt}
\renewcommand\arraystretch{1.05}
\begin{tabular}{lcccc}
\toprule
\textbf{Dataset} & \textbf{\#Classes} & \textbf{\#Train (k)} & \textbf{\#Test (k)} & \textbf{Task} \\
\midrule
SUN397                     & 287 & 19.9 & 19.9 & Scene category \\
Stanford Cars              & 196 &  8.1 &  8.0 & Car series \\
RESISC45                   &  45 & 18.9 &  6.3 & Remote–sensing scene \\
EuroSAT                    &  10 & 21.6 &  2.7 & Satellite land-use \\
SVHN                       &  10 & 73.3 & 26.0 & Digit recognition \\
GTSRB                      &  43 & 39.2 & 12.6 & Traffic sign \\
MNIST                      &  10 & 60   & 10   & Hand-written digit \\
DTD                        &  47 &  3.8 &  1.9 & Texture recognition \\
Flowers102                 & 102 &  1.0 &  6.1 & Flower species \\
PCAM                       &   2 & 262  & 32.8 & Tumour detection \\
FER2013                    &   7 & 28.7 &  3.6 & Facial emotion \\
Oxford IIIT Pet            &  37 &  3.7 &  3.7 & Animal species \\
STL10                      &  10 &  5   &  8   & Object recognition \\
CIFAR-100                  & 100 & 50   & 10   & Natural object \\
CIFAR-10                   &  10 & 50   & 10   & Natural object \\
Food101                    & 101 & 75.8 & 25.3 & Food type \\
Fashion-MNIST              &  10 & 60   & 10   & Fashion product \\
EMNIST (digits)            &  10 & 60   & 10   & Hand-written digit \\
KMNIST                     &  10 & 60   & 10   & Kuzushiji character \\
Rendered SST-2             &   2 &  6.9 &  1.8 & Rendered sentiment \\
\bottomrule
\end{tabular}
\end{table}

\paragraph{Task Grouping}
We group the 20 datasets into three progressive task sets and evaluate the merged models using average accuracy (ACC) and backward transfer (BWT) metrics. For each task group, we perform 10 experiments using different task sequences (listed in Tab.~ \ref{tab:appendix_dataset_order}), and report both the mean and standard deviation of the results to ensure robustness and consistency.

\begin{itemize}[leftmargin=20pt]
    \item \textbf{8-task group}: 
    (1) SUN397, (2) Stanford Cars, (3) RESISC45, (4) EuroSAT, (5) SVHN, (6) GTSRB, (7) MNIST, (8) DTD.
    
    \item \textbf{14-task group}: 
    (1) SUN397, (2) Stanford Cars, (3) RESISC45, (4) EuroSAT, (5) SVHN, (6) GTSRB, (7) MNIST, (8) DTD, (9) Flowers102, (10) PCAM, (11) FER2013, (12) OxfordIIITPet, (13) STL10, (14) CIFAR100.
    
    \item \textbf{20-task group}: 
    (1) SUN397, (2) Stanford Cars, (3) RESISC45, (4) EuroSAT, (5) SVHN, (6) GTSRB, (7) MNIST, (8) DTD, (9) Flowers102, (10) PCAM, (11) FER2013, (12) OxfordIIITPet, (13) STL10, (14) CIFAR100, (15) CIFAR10, (16) Food101, (17) FashionMNIST, (18) EMNIST, (19) KMNIST, (20) RenderedSST2.
\end{itemize}

\newcommand{\arr}{$\,\rightarrow\,$}   
\begin{table}[h]
\centering
\setlength{\tabcolsep}{3pt}
\renewcommand{\arraystretch}{1.3}
\caption{Dataset orderings used for experiments in each task group.}
\label{tab:task_orderings}
\resizebox{1\textwidth}{!}{
\begin{tabular}{c|c|l}
\toprule
\textbf{Group} & \textbf{Order} & \textbf{Dataset Order (by ID)} \\
\midrule
\multirow{10}{*}{8 tasks}
& 1  & (04\arr05\arr07\arr08\arr03\arr06\arr01\arr02)\\
& 2  & (07\arr08\arr05\arr04\arr02\arr06\arr03\arr01)\\
& 3  & (03\arr06\arr04\arr02\arr01\arr08\arr05\arr07)\\
& 4  & (06\arr08\arr02\arr01\arr03\arr07\arr04\arr05)\\
& 5  & (07\arr06\arr03\arr08\arr05\arr01\arr04\arr02)\\
& 6  & (07\arr02\arr03\arr08\arr05\arr04\arr01\arr06)\\
& 7  & (07\arr01\arr04\arr03\arr08\arr05\arr02\arr06)\\
& 8  & (08\arr05\arr06\arr07\arr01\arr04\arr03\arr02)\\
& 9  & (01\arr04\arr05\arr02\arr06\arr03\arr07\arr08)\\
& 10 & (08\arr03\arr01\arr02\arr06\arr05\arr07\arr04)\\
\midrule
\multirow{10}{*}{14 tasks}
& 1  & (09\arr13\arr08\arr07\arr14\arr12\arr06\arr03\arr10\arr04\arr05\arr01\arr02\arr11)\\
& 2  & (09\arr10\arr11\arr14\arr07\arr13\arr04\arr02\arr06\arr08\arr03\arr12\arr05\arr01)\\
& 3  & (05\arr08\arr12\arr06\arr11\arr01\arr10\arr04\arr14\arr03\arr02\arr13\arr09\arr07)\\
& 4  & (03\arr10\arr09\arr12\arr04\arr13\arr01\arr06\arr11\arr02\arr14\arr08\arr07\arr05)\\
& 5  & (08\arr14\arr09\arr06\arr12\arr13\arr05\arr03\arr04\arr11\arr10\arr01\arr07\arr02)\\
& 6  & (03\arr12\arr13\arr01\arr11\arr04\arr10\arr05\arr14\arr08\arr09\arr07\arr02\arr06)\\
& 7  & (07\arr01\arr12\arr10\arr02\arr08\arr13\arr04\arr05\arr11\arr14\arr03\arr06\arr09)\\
& 8  & (05\arr12\arr04\arr11\arr03\arr08\arr10\arr01\arr09\arr13\arr14\arr07\arr06\arr02)\\
& 9  & (10\arr07\arr09\arr02\arr03\arr13\arr01\arr12\arr14\arr04\arr11\arr06\arr05\arr08)\\
& 10 & (01\arr02\arr11\arr06\arr08\arr12\arr07\arr05\arr10\arr14\arr03\arr13\arr09\arr04)\\
\midrule
\multirow{10}{*}{20 tasks}
& 1  & (20\arr06\arr15\arr05\arr10\arr14\arr16\arr19\arr07\arr13\arr18\arr11\arr02\arr12\arr03\arr17\arr08\arr09\arr01\arr04)\\
& 2  & (09\arr14\arr06\arr03\arr07\arr04\arr18\arr01\arr17\arr19\arr08\arr20\arr13\arr16\arr11\arr12\arr15\arr05\arr10\arr02)\\
& 3  & (09\arr15\arr16\arr11\arr03\arr13\arr08\arr10\arr12\arr02\arr20\arr01\arr05\arr19\arr07\arr06\arr04\arr18\arr17\arr14)\\
& 4  & (17\arr04\arr11\arr19\arr18\arr10\arr07\arr15\arr12\arr13\arr08\arr02\arr01\arr06\arr05\arr03\arr20\arr16\arr14\arr09)\\
& 5  & (14\arr16\arr04\arr20\arr15\arr17\arr07\arr11\arr06\arr18\arr12\arr01\arr19\arr09\arr10\arr05\arr08\arr02\arr13\arr03)\\
& 6  & (02\arr06\arr17\arr04\arr19\arr18\arr08\arr16\arr20\arr01\arr10\arr13\arr07\arr09\arr05\arr11\arr15\arr14\arr03\arr12)\\
& 7  & (19\arr01\arr09\arr14\arr06\arr20\arr17\arr04\arr08\arr02\arr15\arr03\arr16\arr13\arr12\arr07\arr10\arr05\arr11\arr18)\\
& 8  & (15\arr07\arr08\arr02\arr10\arr06\arr17\arr20\arr05\arr19\arr16\arr01\arr18\arr09\arr13\arr11\arr04\arr14\arr12\arr03)\\
& 9  & (10\arr05\arr07\arr11\arr01\arr03\arr17\arr15\arr18\arr04\arr14\arr19\arr02\arr06\arr13\arr20\arr08\arr12\arr09\arr16)\\
& 10 & (01\arr11\arr02\arr15\arr03\arr10\arr12\arr19\arr16\arr13\arr07\arr05\arr09\arr04\arr14\arr20\arr06\arr18\arr17\arr08)\\
\bottomrule
\end{tabular}
}
\label{tab:appendix_dataset_order}
\end{table}

\subsection{Details of Downstream Models}
\label{appdix:Details of Downstream Models}

In this section, we present the evaluation setup for pre-trained and fine-tuned models. As shown in Tab. \ref{tab:single_model_vit}, we evaluate the zero-shot accuracy of the original CLIP-ViT models and the performance of fine-tuned models on the test sets of various downstream tasks. The fine-tuned checkpoints are obtained directly from Hugging Face (\url{https://huggingface.co/tanganke}), where each model has been fine-tuned on task-specific training data using a standard protocol. The visual encoder is updated during fine-tuning, while the classification head is fixed and initialized from the pre-trained text encoder. The fine-tuning setup follows a standard configuration: cross-entropy loss, Adam optimizer, cosine annealing learning rate schedule with a peak learning rate of 1e-5, batch size 128, and 4000 training steps.

\begin{table}[tb]
  \centering
  \setlength{\tabcolsep}{4pt}
  \caption{Test set accuracy of the pre-trained model and individual fine-tuned models on different downstream tasks.}
  \label{tab:single_model_vit}
  \renewcommand\arraystretch{1.05}
  \resizebox{\textwidth}{!}{
  \begin{tabular}{lcccccccccc}
    \toprule
    \textbf{Model}       & \rot{\small{SUN397}} & \rot{\small{Cars}} & \rot{\small{RESISC45}} & \rot{\small{EuroSAT}} & \rot{\small{SVHN}} & \rot{\small{GTSRB}} & \rot{\small{MNIST}} & \rot{\small{DTD}} & \rot{\small{Flowers102}}\hspace{-5pt} & \rot{\small{PCAM}} \\
    \midrule
    \multicolumn{11}{l}{\textbf{CLIP-ViT-B/32}} \\ \cmidrule[0.5pt](lr){1-2}
    Pre-trained & 63.2    & 59.6   & 60.3     & 45.0    & 31.6   & 32.5   & 48.3   & 44.2   & 66.4       & 60.6 \\
    Fine-tuned  & 74.9    & 78.5   & 95.1     & 99.1    & 97.3   & 98.9   & 99.6   & 79.7   & 88.6       & 88.0 \\ \midrule
    \multicolumn{11}{l}{\textbf{CLIP-ViT-B/16}} \\ \cmidrule[0.5pt](lr){1-2}
    Pre-trained & 65.5    & 64.7   & 66.4     & 54.1    & 52.0   & 43.5   & 51.7   & 45.0   & 71.3       & 54.0 \\
    Fine-tuned  & 78.9    & 85.9   & 96.6     & 99.0    & 97.6   & 99.0   & 99.7   & 82.3   & 94.9       & 90.6 \\ \midrule
    \multicolumn{11}{l}{\textbf{CLIP-ViT-L/14}} \\ \cmidrule[0.5pt](lr){1-2}
    Pre-trained & 68.2    & 77.9   & 71.3     & 61.2    & 58.4   & 50.5   & 76.3   & 55.5   & 79.2       & 51.2 \\
    Fine-tuned  & 82.8    & 92.8   & 97.4     & 99.1    & 97.9   & 99.2   & 99.8   & 85.5   & 97.7       & 91.1 \\
\midrule
    \textbf{Model}       &  \rot{\small{FER2013}} & \rot{\small{OxfordIIITPet}}\hspace{-15pt} & \rot{\small{STL10}} & \rot{\small{CIFAR100}} & \rot{\small{CIFAR10}} & \rot{\small{Food101}} & \rot{\small{FashionMNIST}}\hspace{-10pt} & \rot{\small{EMNIST}} & \rot{\small{KMNIST}} & \rot{\small{RenderedSST2}} \\
    \midrule
    \multicolumn{11}{l}{\textbf{CLIP-ViT-B/32}} \\ \cmidrule[0.5pt](lr){1-2}
    Pre-trained & 41.3    & 83.3          & 97.1  & 63.7     & 89.8    & 82.4    & 63.0         & 12.0   & 10.0   & 58.6         \\
    Fine-tuned  & 71.6    & 92.5          & 97.5  & 88.4     & 97.6    & 88.4    & 94.7         & 95.6   & 98.2   & 71.3         \\ \midrule
    \multicolumn{11}{l}{\textbf{CLIP-ViT-B/16}} \\ \cmidrule[0.5pt](lr){1-2}
    Pre-trained & 46.4    & 88.4          & 98.3  & 66.3     & 90.8    & 87.0    & 67.3         & 12.4   & 11.2   & 60.6         \\
    Fine-tuned  & 72.8    & 94.5          & 98.2  & 88.8     & 98.3    & 91.9    & 94.5         & 95.3   & 98.1   & 75.7         \\ \midrule
    \multicolumn{11}{l}{\textbf{CLIP-ViT-L/14}} \\ \cmidrule[0.5pt](lr){1-2}
    Pre-trained & 50.0    & 93.2          & 99.4  & 75.1     & 95.6    & 91.2    & 67.0         & 12.3   & 9.7    & 68.9         \\
    Fine-tuned  & 75.9    & 95.7          & 99.2  & 93.0     & 99.1    & 94.8    & 95.3         & 95.4   & 98.3   & 80.5         \\
    \bottomrule
  \end{tabular}
  }
\end{table}

\subsection{Details of Baselines}
\label{appdix:Details of Baselines}
Our experiments involve the following comparison methods and our method:
\begin{itemize}[leftmargin=20pt]
    \item \textbf{Stochastic Weight Averaging (SWA).} A simple model averaging technique to stabilize optimization and improve generalization~\cite{izmailov2018averaging}. At each step $t$, the model parameters are averaged across previous checkpoint: $\theta^{\text{SWA}}_t = \frac{1}{t} \left[ \theta^{\text{SWA}}_{t-1}(t-1)+\theta^{\text{SWA}}_t \right]$. This approach can be interpreted as a form of uniform model ensembling. While conceptually straightforward, SWA treats all checkpoints equally and does not account for inter-task conflicts.

    \item \textbf{Continual Task Arithmetic (C. TA).} A training-free merging strategy that linearly combines task-specific fine-tuned models with a shared pre-trained model~\cite{ilharco2023editing}. It computes the merged parameters as $\theta^{\text{merged}}_t = \theta^{\text{merged}}_{t-1} + \lambda (\theta_t - \theta_0)$, where $\lambda$ is a scaling factor. TA is computationally efficient and easy to apply, but sensitive to $\lambda$ and prone to destructive interference when merging dissimilar tasks.

    \item \textbf{Continual Ties-Merging (C. Ties).} An extension of Task Arithmetic that reduces parameter-level redundancy and sign conflicts during model merging~\cite{yadav2023ties}. For task $t$, the difference vector $\Delta\theta_t = \theta_t - \theta_0$ is trimmed and sign-normalized to obtain $\Delta\theta^{\text{Ties}}_t=\mathrm{Ties}{\left(\Delta\theta^{\text{Ties}}_{t-1}, \Delta\theta_{t}\right)}$, and the merged model is given by $\theta^{\text{merged}}_t = \theta^{\text{merged}}_{t-1} + \lambda \Delta\theta^{\text{Ties}}_t$. 

    \item \textbf{Orthogonal Projection-based Continual Merging 
     (OPCM).}  A projection-based scheme to mitigate task interference by enforcing orthogonality between parameter updates \cite{tang2025merging}. Specifically, each $\Delta\theta_{t}$ is projected onto the orthogonal complement of the subspace spanned by previous updates: $\theta^{\text{merged}}_t = \theta_0 + \frac{1}{\lambda_t} \left[ \lambda_{t-1} \Delta\theta^{\text{merged}}_{t-1} + \mathcal{P}^{(t-1)}(\Delta\theta_t) \right]$, where $\mathcal{P}^{(t-1)}$ denotes the orthogonal projection.

     \item \textbf{Maximum Magnitude Selection (MagMax).}  
     An extension of Task Arithmetic that, for each parameter dimension, selects the update with the larger absolute value: $\Delta\theta^{\text{MagMax}}_t=\mathrm{MagMax}{\left(\Delta\theta^{\text{MagMax}}_{t-1}, \Delta\theta_{t}\right)}$, and the merged model is given by $\theta^{\text{merged}}_t = \theta^{\text{merged}}_{t-1} + \lambda \Delta\theta^{\text{MagMax}}_t$. 
     
\end{itemize}

\subsection{Details of Baseline Hyper-parameters}
\label{appdix:Details of Baseline Hyper-parameters}
Tab.~\ref{tab:hyper-parameters} summarizes the hyper-parameters for all baseline methods under different task configurations (8, 14, 20 tasks). 
\emph{Top-k} denotes the pruning ratio, \emph{TALL} the TALL mask threshold, and \emph{Cons.} the consensus mask threshold. 
The column \emph{LR} is the learning rate, while \emph{Steps} indicates the number of adaptation steps. 
\emph{$r$} represents the LoRA rank, and the last column jointly reports the null dimension ($k$), EMA decay ($\beta$), and relaxation coefficient ($\gamma$). 

\begin{table}[htbp]
  \centering
  \caption{hyper-parameter settings for all baselines.}
  \label{tab:hyper-parameters} 
  \renewcommand\arraystretch{1.08}
    \renewcommand\tabcolsep{9pt}  
    \resizebox{\textwidth}{!}{
  \begin{tabular}{llcccccccc}
    \toprule
    Method            & Tasks & $\lambda$ & Top-k & TALL & Cons. & LR & Steps & $r$ & $k/\beta/\gamma$ \\
    \midrule
    \multirow{2}{*}{\textsc{Task Arithmetic}}   & 8       & 0.3    & -   & -   & -  & -  & -  & -  & -  \\
                                       & 14/20   & 0.1   & -   & -   & -  & -  & -  & -  & -  \\
     
    \multirow{2}{*}{\textsc{Ties-Merging}}      & 8       & 0.3    & 20  & -   & -  & -  & -  & -  & -  \\
                                       & 14/20   & 0.1    & 20  & -   & -  & -  & -  & -  & -  \\
    \textsc{Consensus TA}                         & 8/14/20 & 0.1  & -   & 0.2 & 2  & -  & -  & -  & -  \\

    \textsc{LW. AdaMerging}                     & 8/14/20 & 0.3   & -   & -   & -  & 1e-4& 50 & -  & -  \\
    \textsc{WEMOE}                              & 8/14/20 & 0.3   & -   & -   & -  & 1e-4& 50 & -  & -  \\
    \textsc{LoRA-WEMOE}                         & 8/14/20 & 0.3   & -   & -   & -  & 1e-4& 50 & 64 & -  \\
    \textsc{MINGLE-Static}                      & 8/14/20 & 0.3   & -   & -   & -  & -  & -  & -  & -  \\
    \textbf{\methodshort{} (Ours)}              & 8/14/20 & -     & -   & -   & -  & 1e-4& 50 & 64 & 3/0.99/1.0 \\
    \bottomrule
  \end{tabular}}
\end{table}

\subsection{Comparison of Assumptions and Requirements}
\label{appdix:Comparison of Assumptions and Requirements}
Tab.~\ref{tab:requirements} summarizes the assumptions and resource requirements of all baseline methods. 
We report whether each method requires storing intermediate activations, introduces additional parameters (for storage or inference), and incurs extra test-time computation. 
Our method only maintains a fixed-size covariance matrix instead of full activations, leading to constant memory regardless of test set size. 
Although LoRA experts are stored, the router merges them into a single model per input, so the effective inference cost matches that of a standard individual model.

\begin{table}[htbp]
  \centering
  \caption{Comparison of baseline assumptions and requirements.}
  \label{tab:requirements} 
    \renewcommand\tabcolsep{8pt}  
    \resizebox{\textwidth}{!}{
  \begin{tabular}{lcccc}
    \toprule
    \multirow{2}{*}{Method}            & \multirow{2}{*}{Save Activations}  & Extra Parameters& Extra Parameters & \multirow{2}{*}{Test-time Compute} \\
                &  & (Storage) & (Inference) &   \\
    \midrule
    \textsc{Task Arithmetic}   & No               & No                         & No                           & No                \\
    \textsc{Ties-Merging}      & No               & No                         & No                           & No                \\
    \textsc{MAGMAX-Ind}        & No               & No                         & No                           & No                \\
    \textsc{OPCM}              & No               & No                         & No                           & No                \\
    \textsc{Consensus TA}      & No               & Yes                        & No                           & No                \\
    \textsc{LW. AdaMerging}    & No               & No                         & No                           & Yes               \\
    \textsc{WEMOE}             & No               & Yes                        & No                           & Yes               \\
    \textsc{MINGLE-Static}     & No               & No                         & No                           & No                \\
    \textbf{\methodshort{} (Ours)} & No\textsuperscript{1} & Yes\textsuperscript{2} & No\textsuperscript{2} & Yes \\
    \bottomrule
  \end{tabular}}
  
  \vspace{0.5em}
  \footnotesize{
  \textsuperscript{1} Only a fixed-size covariance matrix is maintained, resulting in constant memory regardless of test set size. \\
  \textsuperscript{2} LoRA experts are stored, but the router merges them into a single model per input, making the effective inference size equivalent to a standard individual model.
  }
\end{table}

\section{Additional Results}
\label{appdix:Additional Results}
In this section, we provide additional experimental results to support the findings reported in the main paper. Specifically, we include: (1) detailed overall performance results (\ref{appdix:Details of Overall Performance}); (2) accuracy trends across sequential tasks (\ref{appdix:Accuracy Across Sequential Task}); (3) detailed results under distribution shifts (\ref{appdix:Detail Analysis of Distribution Shift}); and (4) extended visualizations of gate activations and hyper-parameter effects (\ref{appdix:Hyper-parameter Analysis of Gate}).

\subsection{Detailed Overall Performance Results}
\label{appdix:Details of Overall Performance}
Tab.~\ref{tab:continual_results} expands on the average results in Tab.~\ref{tab:results} by reporting per-task average accuracy after continually merging 20 tasks. We compare six methods, SWA, Task Arithmetic, Ties-Merging, MagMax-IND, OPCM, and our proposed \methodshort{} across three CLIP-ViT backbones (B/32, B/16, L/14). \methodshort{} achieves the highest accuracy on most tasks. These fine-grained results reinforce the main paper’s findings, highlighting \methodshort{}’s ability to improve performance on continual model merging.

\begin{table}[!htbp]
\centering
\caption{Test set accuracy comparisons on different downstream tasks. }
\renewcommand\arraystretch{1.02}
\setlength{\tabcolsep}{3pt}
\small
\resizebox{1\textwidth}{!}{
\begin{tabular}{lccccccccccc}
\toprule
\textbf{Model} & \rot{\small{SUN397}} & \rot{\small{Cars}} & \rot{\small{RESISC45}} & \rot{\small{EuroSAT}} & \rot{\small{SVHN}} & \rot{\small{GTSRB}} & \rot{\small{MNIST}} & \rot{\small{DTD}} & \rot{\small{Flowers102}}\hspace{-5pt} & \rot{\small{PCAM}} \\
\midrule
\multicolumn{11}{l}{\textbf{ViT-B/32}} \\
\textsc{C. Fine-Tuned} & 53.9 & 38.2 & 64.7 & 98.7 & 45.4 & 34.4 & 86.7 & 58.4 & 57.5 & 67.7 \\
\textsc{Average (SWA)} & 64.2 & 59.6 & 64.8 & 60.9 & 47.3 & 43.1 & 71.8 & 46.4 & 66.5 & 63.9 \\
\textsc{C.TA} & 62.0 & 53.7 & 60.9 & 58.1 & 48.5 & 48.9 & 79.4 & 46.1 & 61.1 & 73.4 \\
\textsc{C.TIES} & 62.5 & 49.1 & 55.8 & 50.9 & 54.6 & 49.3 & 82.0 & 46.7 & 58.5 & 69.9 \\
\textsc{MagMax-Ind} &63.6 & 53.1 & 59.7 & 49.1 & 53.8 & 53.1 & 79.8 & 43.2 & 56.9 & 75.1 \\
\textsc{Consensus TA}  & 37.0&25.2&35.2&36.7& 37.3 &44.1&80.6&30.3& 33.5 &59.2 \\
\textsc{C. LW AdaMerging} &63.1 & 60.0 & 63.5 & 60.1 & 35.6 & 32.1 & 51.8 & 45.4 & 66.6 & 60.2 \\
\textsc{C. LoRA-WEMOE} &51.4 & 45.8 & 63.3 & 43.5 & 42.9 & 34.6 & 58.9 & 46.5 & 47.5 & 60.1 \\
\textsc{OPCM} & 64.4 & 51.1 & 66.0 & 71.7 & 66.1 & 56.0 & 90.2 & 40.4 & 64.9 & 80.2 \\
\textsc{Mingle (Ours)} & 67.8 & 58.3 & 83.5 & 90.0 & 82.9 & 91.8 & 98.0 & 65.3 & 74.0 & 66.9  \\
\textsc{Mingle$^*$ (Ours)} & 68.8 & 64.2 & 83.8 & 91.1 & 82.4 & 89.0 & 96.9 & 62.8 & 76.7 & 72.8 \\
\midrule
\multicolumn{11}{l}{\textbf{ViT-B/16}} \\
\textsc{C. Fine-Tuned} & 62.7 & 58.0 & 67.6 & 99.1 & 46.0 & 29.2 & 93.9 & 61.9 & 64.1 & 75.2 \\
\textsc{Average (SWA)} & 67.1 & 64.6 & 69.3 & 63.4 & 62.4 & 52.7 & 80.7 & 46.6 & 71.8 & 63.1 \\
\textsc{C.TA} & 65.8 & 57.5 & 63.8 & 59.5 & 64.7 & 54.0 & 88.0 & 45.3 & 67.5 & 67.1 \\
\textsc{C.TIES} & 64.2 & 52.9 & 60.9 & 53.0 & 62.8 & 48.8 & 88.4 & 45.0 & 61.3 & 68.5 \\
\textsc{MagMax-Ind} &65.8 & 51.8 & 57.8 & 42.6 & 54.4 & 43.7 & 83.0 & 42.8 & 60.4 & 69.8\\
\textsc{Consensus TA} & 42.6&24.8&30.4&34.4&47.6&42.2&79.9&30.6&36.2&74.3\\
\textsc{C. LW AdaMerging} &65.5 & 65.7 & 69.8 & 59.4 & 50.1 & 44.2 & 61.1 & 47.1 & 71.8 & 57.9\\
\textsc{C. LoRA-WEMOE} &62.7 & 60.2 & 69.4 & 37.7 & 52.1 & 39.9 & 63.1 & 45.3 & 64.3 & 51.7\\
\textsc{OPCM} & 67.9 & 55.9 & 73.7 & 77.5 & 74.4 & 63.2 & 94.1 & 49.2 & 72.3 & 79.6 \\
\textsc{Mingle (Ours)} &71.5 & 64.9 & 85.3 & 90.0 & 87.5 & 90.1 & 97.1 & 62.7 & 82.6 & 80.6\\
\textsc{Mingle$^*$ (Ours)} &72.0 & 72.1 & 87.9 & 93.3 & 87.1 & 89.2 & 97.4 & 62.5 & 86.8 & 76.4\\
\midrule
\multicolumn{11}{l}{\textbf{ViT-L/14}} \\
\textsc{C. Fine-Tuned} & 69.5 & 73.6 & 78.3 & 99.2 & 59.3 & 49.3 & 98.6 & 69.7 & 83.2 & 78.3 \\
\textsc{Average (SWA)} & 70.7 & 77.7 & 76.4 & 75.3 & 69.5 & 62.1 & 93.7 & 57.7 & 80.0 & 73.6 \\
\textsc{C.TA} & 70.4 & 74.1 & 73.9 & 66.3 & 69.9 & 65.6 & 95.1 & 56.6 & 78.6 & 70.4 \\
\textsc{C.TIES} & 69.7 & 70.3 & 65.3 & 47.9 & 76.1 & 63.6 & 94.7 & 54.4 & 77.9 & 72.3 \\
\textsc{MagMax-Ind} &73.1 & 73.7 & 75.6 & 64.6 & 73.7 & 68.8 & 94.6 & 56.1 & 78.0 & 71.7 \\
\textsc{Consensus TA} & 50.7 & 39.1 & 31.7 & 36.4 & 39.4 & 44.9 & 88.5 & 33.8 & 45.7 & 62.5\\
\textsc{C. LW AdaMerging} &68.8 & 78.6 & 75.9 & 65.7 & 58.3 & 51.6 & 79.9 & 57.4 & 80.6 & 52.4 \\
\textsc{C. LoRA-WEMOE} &62.1 & 68.1 & 68.7 & 53.2 & 47.5 & 49.4 & 69.8 & 49.1 & 66.2 & 54.2\\
\textsc{OPCM} & 73.1 & 78.3 & 82.4 & 80.2 & 80.8 & 80.4 & 97.4 & 61.6 & 84.8 & 76.3 \\
\textsc{Mingle (Ours)} & 75.9 & 83.4 & 87.8 & 88.7 & 91.1 & 94.5 & 98.4 & 70.8 & 94.8 & 75.3 \\
\textsc{Mingle$^*$ (Ours)} & 74.5 & 85.9 & 90.5 & 92.5 & 90.1 & 92.7 & 98.1 & 69.2 & 95.7 & 74.0 \\
\midrule
\textbf{Model} &  \rot{\small{FER2013}} & \rot{\small{OxfordIIITPet}}\hspace{-15pt} & \rot{\small{STL10}} & \rot{\small{CIFAR100}} & \rot{\small{CIFAR10}} & \rot{\small{Food101}} & \rot{\small{FashionMNIST}}\hspace{-10pt} & \rot{\small{EMNIST}} & \rot{\small{KMNIST}} & \rot{\small{RenderedSST2}} \\
\midrule
\multicolumn{11}{l}{\textbf{ViT-B/32}} \\
\textsc{C. Fine-Tuned} & 58.3 & 68.5 & 86.7 & 40.2 & 70.5 & 50.0 & 90.7 & 72.4 & 54.5 & 54.5 \\
\textsc{Average (SWA)} & 50.2 & 84.1 & 97.0 & 69.8 & 92.7 & 80.4 & 71.3 & 15.0 & 11.5 & 61.8 \\
\textsc{C.TA} & 51.4 & 82.3 & 94.9 & 64.6 & 91.4 & 71.9 & 73.9 & 17.8 & 12.2 & 59.9 \\
\textsc{C.TIES} & 49.5 & 81.3 & 95.2 & 63.7 & 91.2 & 70.2 & 73.7 & 17.8 & 16.9 & 59.8\\
\textsc{MagMax-Ind} & 56.5 & 79.9 & 94.6 & 68.7 & 91.9 & 73.8 & 74.3 & 18.3 & 15.4 & 63.9\\
\textsc{Consensus TA}  & 41.7&58.8&81.8&41.5&78.1&29.8&72.6&17.4&18.5&54.1\\
\textsc{C. LW AdaMerging} &43.2 & 83.7 & 96.8 & 67.0 & 89.9 & 81.6 & 63.7 & 16.8 & 10.7 & 59.1 \\
\textsc{C. LoRA-WEMOE} &44.6 & 72.5 & 86.1 & 40.1 & 63.8 & 63.8 & 48.1 & 10.3 & 12.8 & 55.7\\
\textsc{OPCM} & 58.5 & 82.9 & 95.9 & 67.6 & 92.8 & 74.0 & 76.3 & 22.4 & 18.3 & 64.6 \\
\textsc{Mingle (Ours)} & 65.0 & 85.5 & 97.0 & 72.6 & 94.1 & 81.5 & 85.4 & 50.4 & 65.2 & 67.1 \\
\textsc{Mingle$^*$ (Ours)} & 65.3 & 88.5 & 97.7 & 73.9 & 94.7 & 83.7 & 86.4 & 39.3 & 56.1 & 68.7 \\
\midrule
\multicolumn{11}{l}{\textbf{ViT-B/16}} \\
\textsc{C. Fine-Tuned} & 60.5 & 84.5 & 90.5 & 38.8 & 73.6 & 61.9 & 89.7 & 83.3 & 51.5 & 72.8 \\
\textsc{Average (SWA)} & 50.9 & 89.6 & 98.0 & 72.9 & 94.2 & 85.9 & 73.3 & 15.6 & 12.4 & 62.5 \\
\textsc{C.TA} & 50.7 & 89.3 & 97.0 & 68.0 & 93.1 & 80.3 & 75.7 & 18.1 & 16.7 & 61.8 \\
\textsc{C.TIES} & 50.4 & 87.9 & 96.3 & 63.1 & 91.7 & 78.0 & 75.0 & 23.4 & 24.9 & 61.5 \\
\textsc{MagMax-Ind} &57.7 & 88.8 & 97.5 & 71.5 & 94.4 & 81.3 & 77.2 & 24.5 & 25.0 & 59.4\\
\textsc{Consensus TA} & 45.6&76.8&87.7&44.4&82.2&38.4&72.7&18.8&30.0&58.6\\
\textsc{C. LW AdaMerging} &46.8 & 88.9 & 98.1 & 69.2 & 91.4 & 86.6 & 67.2 & 17.2 & 11.0 & 59.2\\
\textsc{C. LoRA-WEMOE} &45.6 & 91.2 & 92.3 & 41.3 & 64.3 & 78.1 & 48.0 & 23.5 & 16.6 & 52.7 \\
\textsc{OPCM} & 59.5 & 91.8 & 97.7 & 73.2 & 94.7 & 83.1 & 81.3& 26.5 & 23.4 & 66.8 \\
\textsc{Mingle (Ours)} &67.6 & 92.7 & 97.4 & 74.0 & 95.3 & 87.7 & 87.4 & 73.5 & 79.9 & 74.0\\
\textsc{Mingle$^*$ (Ours)} & 67.9 & 93.5 & 98.4 & 77.7 & 96.4 & 89.7 & 87.8 & 56.6 & 64.5 & 75.3 \\
\midrule
\multicolumn{11}{l}{\textbf{ViT-L/14}} \\
\textsc{C. Fine-Tuned} & 68.0 & 92.1 & 94.5 & 60.5 & 85.7 & 74.8 & 93.1 & 89.0 & 59.2 & 78.8 \\
\textsc{Average (SWA)} & 52.7 & 94.2 & 99.2 & 81.7 & 97.0 & 90.7 & 77.4 & 16.1 & 10.4 & 66.1 \\
\textsc{C.TA} & 55.7 & 94.2 & 98.6 & 79.1 & 96.6 & 87.6 & 80.8 & 17.6 & 10.6 & 63.6 \\
\textsc{C.TIES} & 57.6 & 93.5 & 97.8 & 74.0 & 95.6 & 84.7 & 79.7 & 20.2 & 12.6 & 58.4 \\
\textsc{MagMax-Ind} & 52.9 & 93.9 & 98.7 & 82.1 & 97.3 & 89.5 & 81.6 & 19.2 & 11.1 & 68.4\\
\textsc{Consensus TA} & 50.3 & 82.2 & 89.7 & 47.5 & 86.2 & 43.5 & 75.3 & 14.5 & 10.4 & 53.4\\
\textsc{C. LW AdaMerging} &49.2 & 93.5 & 99.3 & 77.2 & 95.8 & 91.1 & 68.2 & 18.6 & 9.8 & 66.6 \\
\textsc{C. LoRA-WEMOE} &46.3 & 84.5 & 87.6 & 52.1 & 70.5 & 73.3 & 50.0 & 18.7 & 10.9 & 56.5\\
\textsc{OPCM} & 61.8 & 95.4 & 99.2 & 83.0 & 97.8 & 90.9 & 86.0 & 26.4 & 14.7 & 71.0 \\
\textsc{Mingle (Ours)} & 67.7 & 96.0 & 98.7 & 81.4 & 97.1 & 90.6 & 90.6 & 60.7 & 88.6 & 79.8 \\
\textsc{Mingle$^*$ (Ours)} &67.9 & 96.0 & 99.4 & 84.7 & 97.8 & 92.4 & 88.8 & 53.0 & 57.1 & 75.5 \\
\bottomrule
\end{tabular}
}
\label{tab:continual_results}
\end{table}

\subsection{Accuracy Trends Across Sequential Tasks}
\label{appdix:Accuracy Across Sequential Task}
Fig.~\ref{fig:clip_task_accuracy_hierarchy_final_tuned} provides a detailed view of accuracy throughout the continual merging process across different settings, showing both the performance on the current task and on previously encountered ones. The progressive accuracy drop across columns illustrates the degree of forgetting over time. Notably, \methodshort{} consistently alleviates this degradation, demonstrating markedly reduced forgetting across the full task sequence.
Fig.~\ref{fig:acc_curve_all} further compares the average accuracy curves of \methodshort{} and baseline methods on previously seen tasks after each new model is merged, using the CLIP ViT-B/16 backbone. Results are averaged over 10 random task orderings. \methodshort{} consistently achieves the highest performance throughout the merging process, with its accuracy curve clearly dominating those of competing methods. Moreover, the narrower standard deviation bands indicate that \methodshort{} is more robust to the task orders.

\subsection{Detailed Results Under Distribution Shifts}
\label{appdix:Detail Analysis of Distribution Shift}

Tab.~\ref{tab:robustness_vitbase32} expands on Tab.~\ref{tab:robust} by reporting per-dataset accuracy under both clean test conditions and seven common corruption types: motion blur, impulse noise, Gaussian noise, pixelation, spatter, contrast shift, and JPEG compression. We evaluate six merging methods, across four downstream tasks: Cars, EuroSAT, RESISC45, and GTSRB. This detailed breakdown complements the average results in the main paper, providing a more comprehensive assessment of robustness under test-time distribution shifts.

\begin{figure}[!htbp]
  \centering

  \begin{subfigure}[b]{0.28\textwidth}
    \centering
    \includegraphics[width=\linewidth]{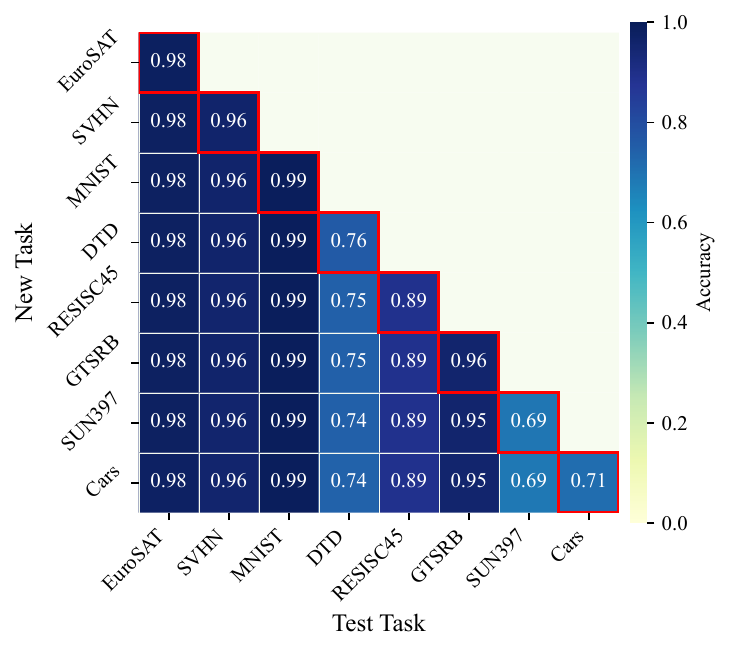}
    \caption{ViT-B/32 on 8 Tasks}
  \end{subfigure}
  \hfill
  \begin{subfigure}[b]{0.28\textwidth}
    \centering
    \includegraphics[width=\linewidth]{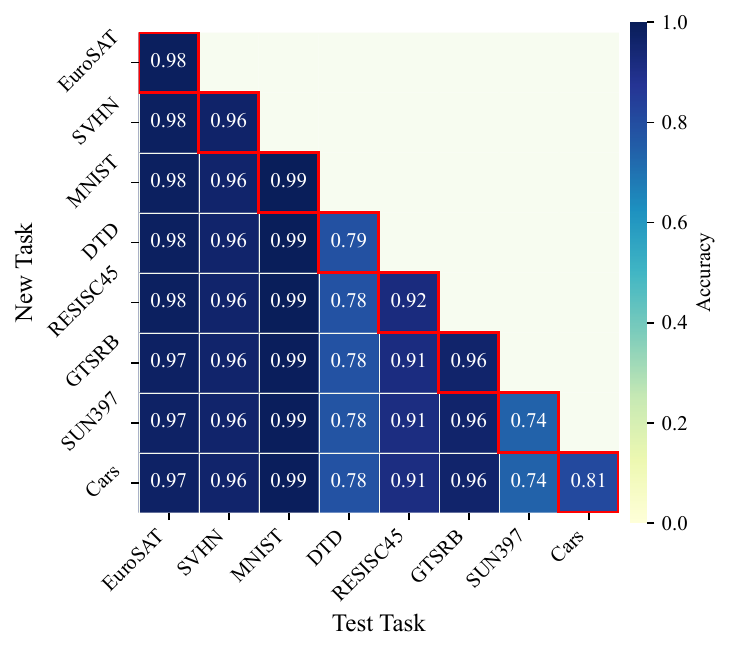}
    \caption{ViT-B/16 on 8 Tasks}
  \end{subfigure}
  \hfill
  \begin{subfigure}[b]{0.28\textwidth}
    \centering
    \includegraphics[width=\linewidth]{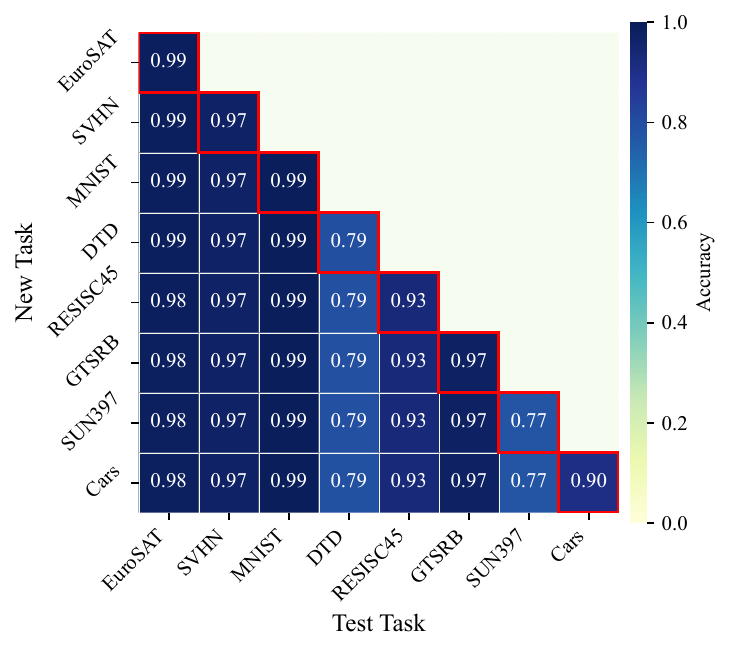}
    \caption{ViT-L/14 on 8 Tasks}
  \end{subfigure}

  \vskip1.0\baselineskip

  \begin{subfigure}[b]{0.45\textwidth}
    \centering
    \includegraphics[width=\linewidth]{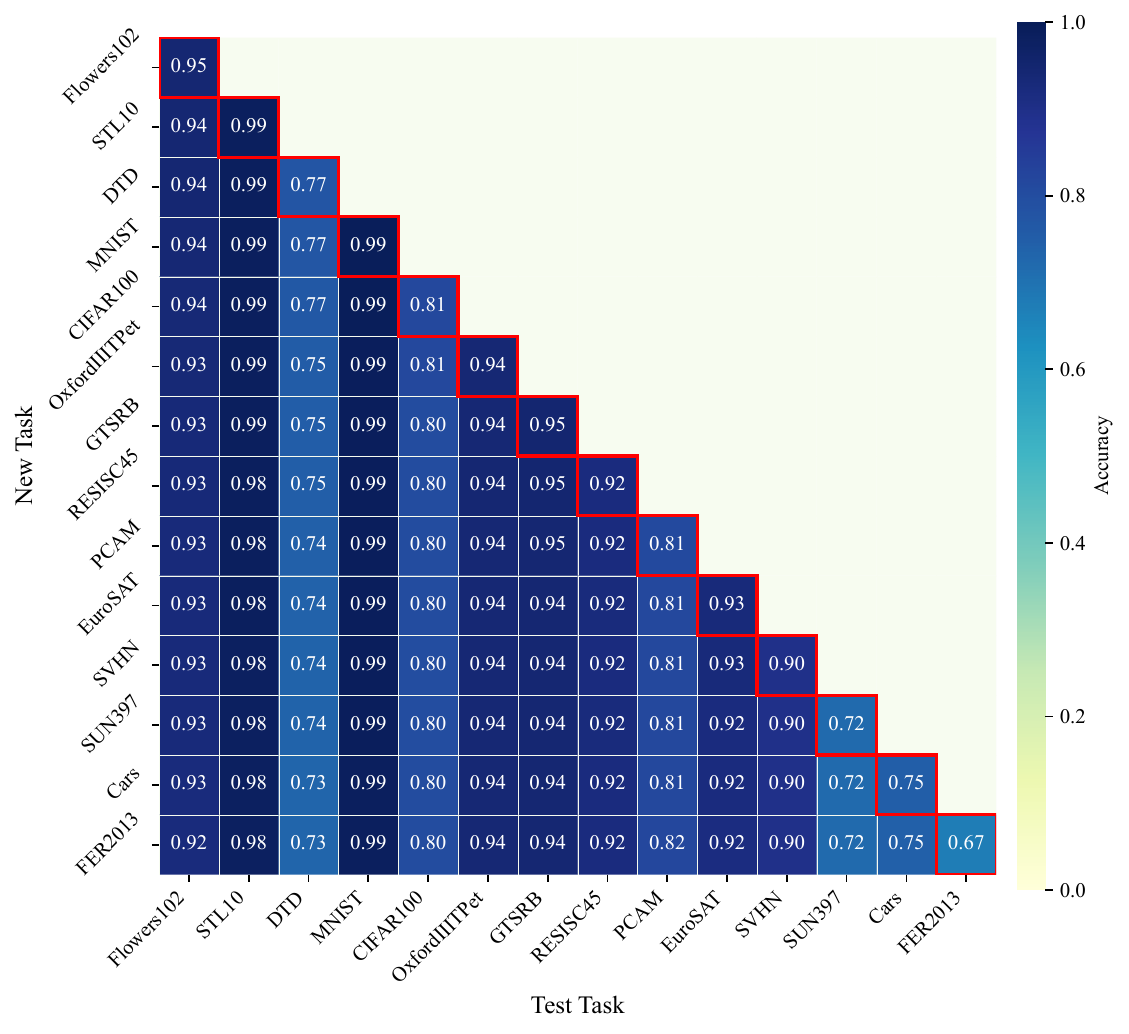}
    \caption{ViT-B/16 on 14 Tasks}
  \end{subfigure}
  \hfill
  \begin{subfigure}[b]{0.45\textwidth}
    \centering
    \includegraphics[width=\linewidth]{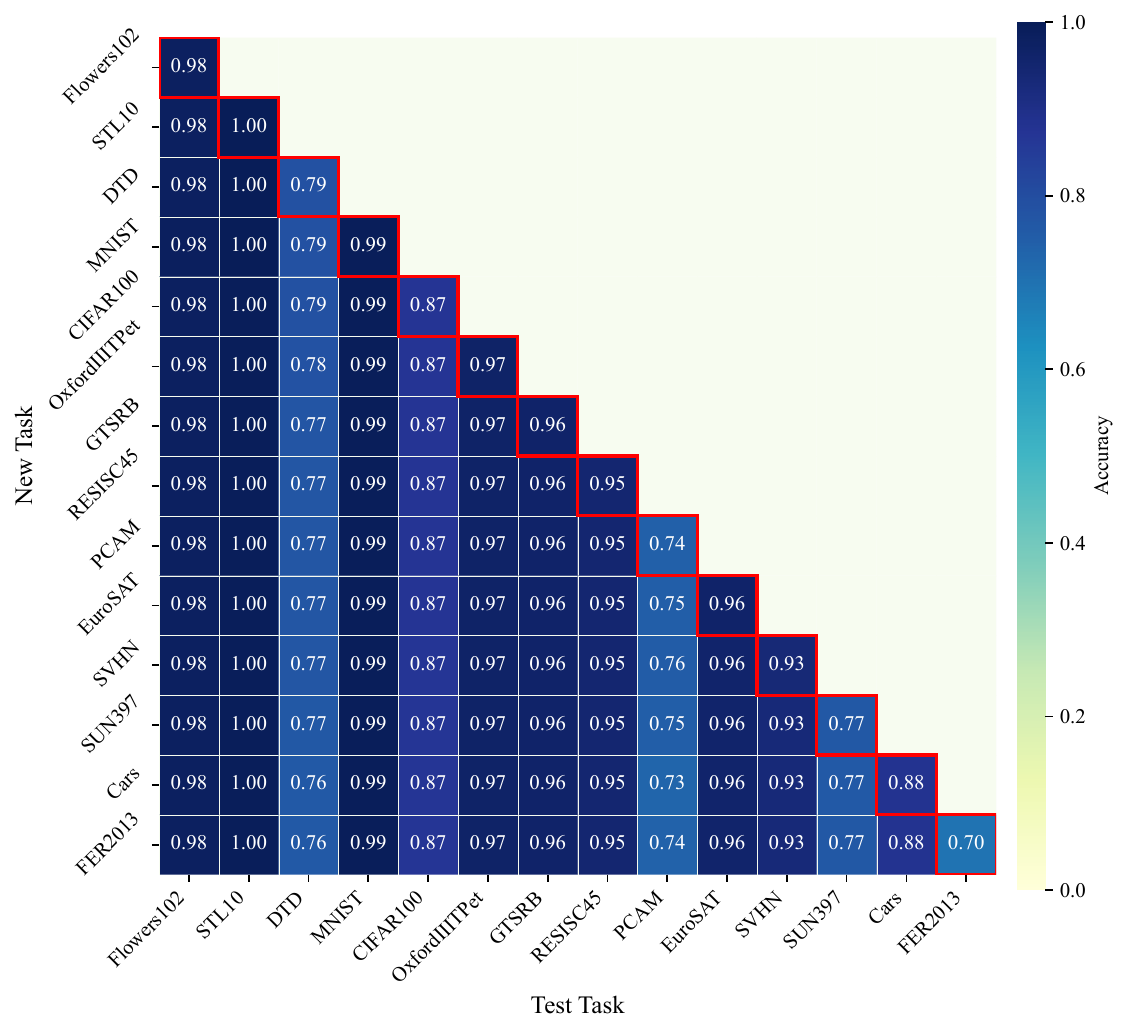}
    \caption{ViT-L/14 on 14 Tasks}
  \end{subfigure}

  \vskip1.2\baselineskip

  \begin{subfigure}[b]{0.7\textwidth}
    \centering
    \includegraphics[width=\linewidth]{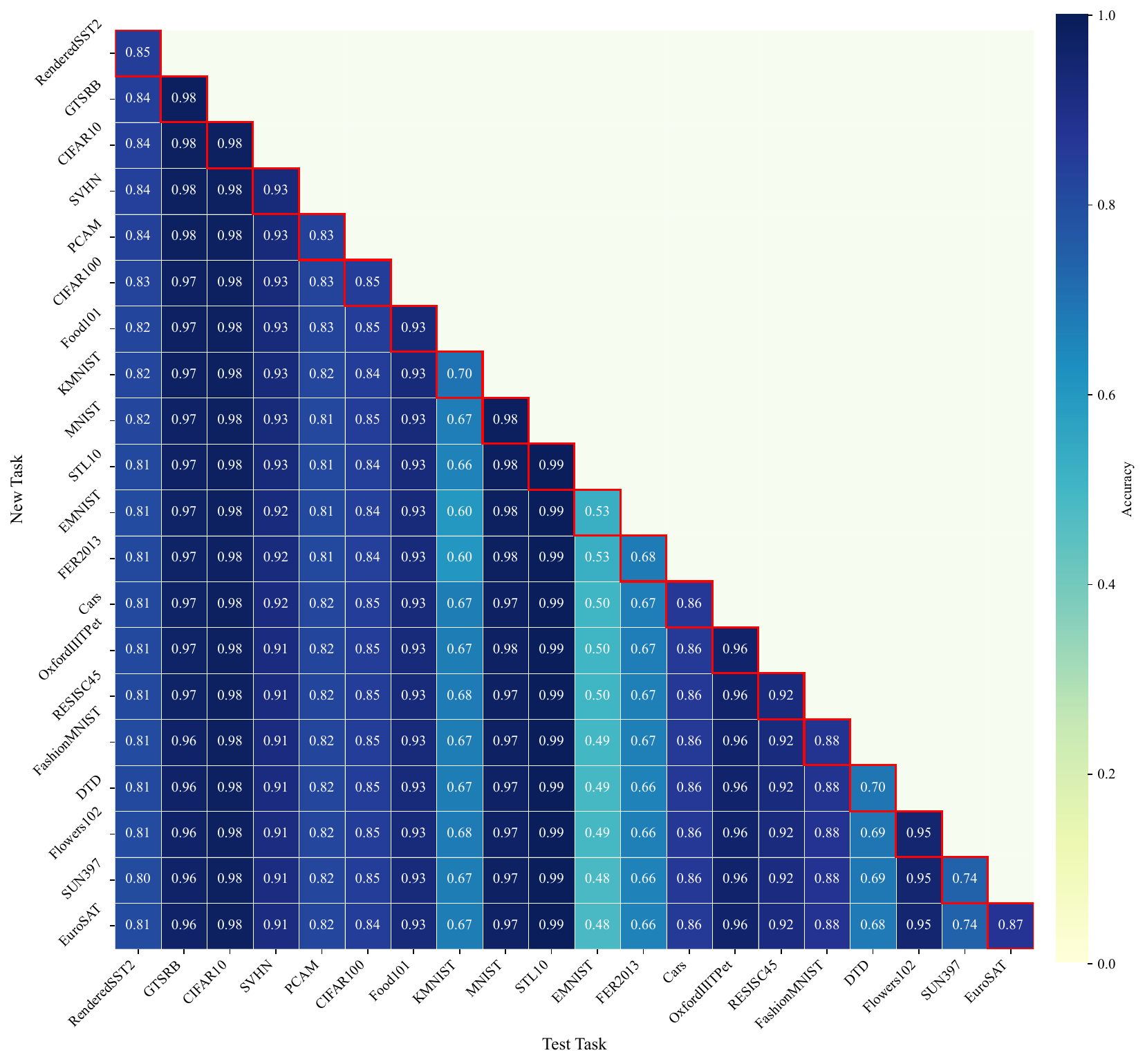}
    \caption{ViT-L/14 on 20 Tasks}
  \end{subfigure}

  \caption{Accuracy matrices of \methodshort{} (ViT-B/32, ViT-B/16, and ViT-L/14) under different task settings.}
  \label{fig:clip_task_accuracy_hierarchy_final_tuned}
\end{figure}

\begin{figure}[H]
    \centering
    \begin{subfigure}[t]{0.6\textwidth}
        \centering
        \includegraphics[width=\linewidth]{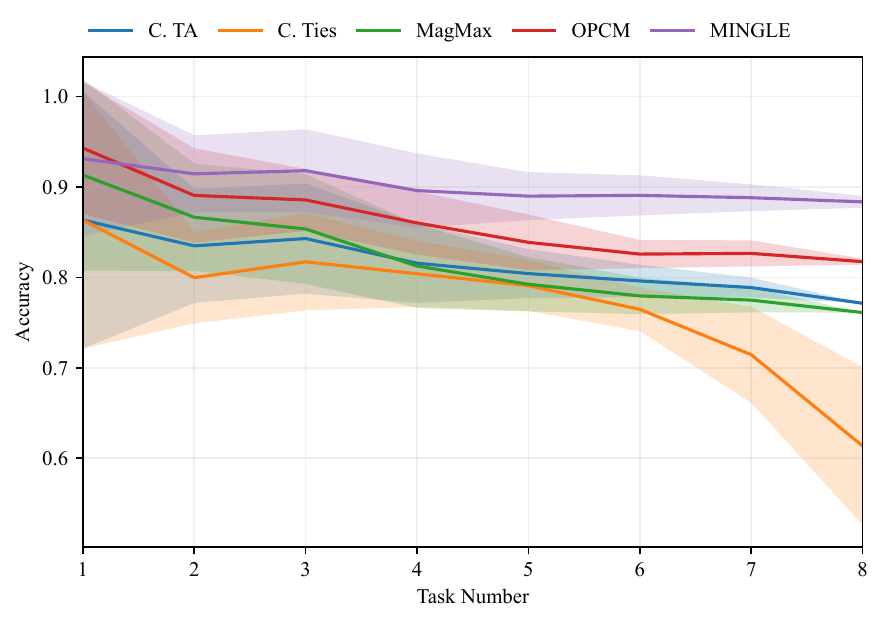}
        \caption{8 tasks}
        \label{fig:curve_t8}
    \end{subfigure}
    \hfill
    \begin{subfigure}[t]{0.6\textwidth}
        \centering
        \includegraphics[width=\linewidth]{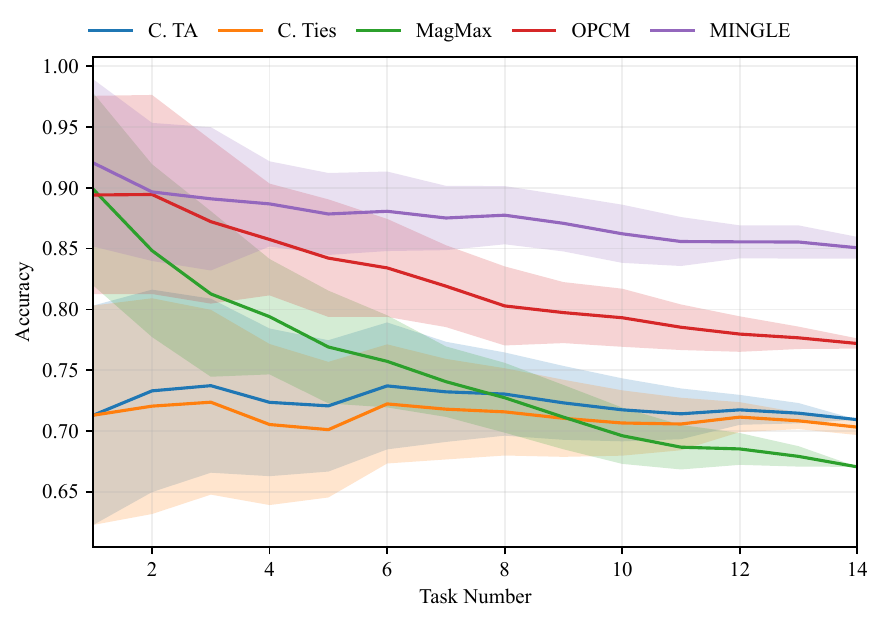}
        \caption{14 tasks}
        \label{fig:curve_t14}
    \end{subfigure}
    \hfill
    \begin{subfigure}[t]{0.6\textwidth}
        \centering
        \includegraphics[width=\linewidth]{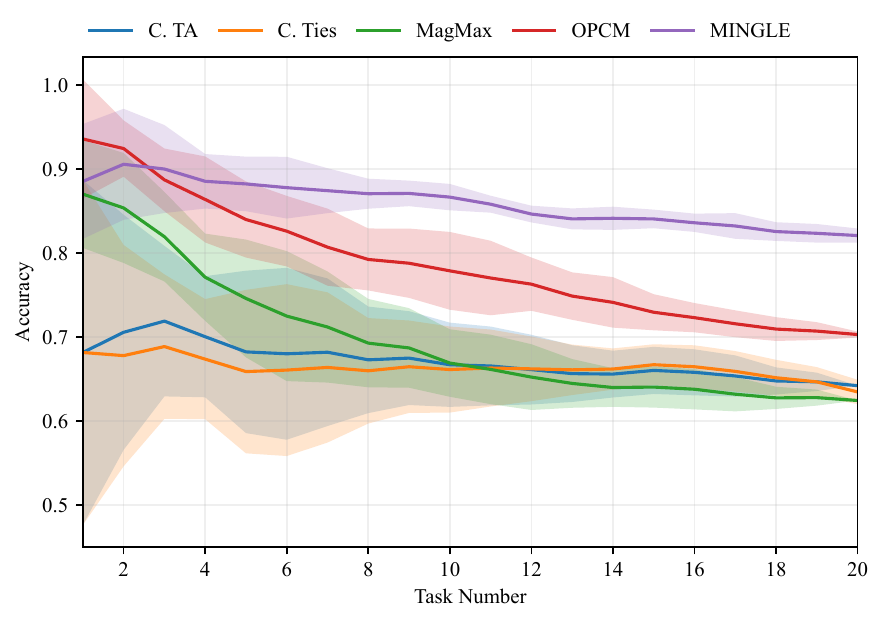}
        \caption{20 tasks}
        \label{fig:curve_t20}
    \end{subfigure}
    \caption{Sequential test accuracy curves of \methodshort{} and baselines (C.TA, C.Ties, MagMax, OPCM) under different task settings. Shaded regions indicate standard deviation across 10 task orders.}
    \label{fig:acc_curve_all}
\end{figure}

\begin{table}[H]
\centering
\setlength{\tabcolsep}{4pt}
\caption{Robustness results when merging ViT-B/32 models on four tasks.}
\label{tab:robustness_vitbase32}
\renewcommand\arraystretch{1.1}
\resizebox{\linewidth}{!}{
\begin{tabular}{l|ccccc|ccccc}
\toprule
& \multicolumn{5}{c|}{\textbf{Clean Test Set}} & \multicolumn{5}{c}{\textbf{Corruption: Motion Blur}} \\
Method & Cars & EuroSAT & RESISC45 & GTSRB & \textbf{Avg ACC} & Cars & EuroSAT & RESISC45 & GTSRB & \textbf{Avg ACC} \\
\midrule
\textsc{C. LW AdaMerging} &65.3 & 49.7 & 65.4 & 43.8 & 56.0   & 64.2 & 25.6 & 62.5 & 37.5 & 47.5  \\
\textsc{C. WEMOE} &0.5  &  8.1 &  2.6 &  2.5 &  3.4 & 0.5 & 8.0 & 1.8 & 2.3 & 3.1  \\
\textsc{C. LoRA-WEMOE} &66.1  &  84.3 &  81.0 &  83.6 & 78.7  & 64.8 & 57.9 & 82.0 & 79.2 & 71.0 \\
\textsc{C. Task Arithmetic}  &  64.6 & 90.4 & 80.2 & 74.8 & 77.5   & 62.3 & 59.4 & 78.5 & 63.3 & 65.9  \\
\textsc{MagMax-Ind} & 63.1 & 89.2 & 81.7 & 82.5 & 79.1 & 61.4 & 62.1 & 80.0 & 72.6 & 69.0  
  \\
\textsc{OPCM}    & 65.7 & 92.3 & 85.7 & 90.5 & 83.6  & 62.8 & 62.5 & 83.7 & 82.2 & 72.8 \\
\methodshort{} (Ours)  & 74.4 & 96.5 & 91.5 & 97.3 & 89.9 &  73.2 & 70.5 & 91.9 & 95.8 & 82.9 \\
\midrule
& \multicolumn{5}{c|}{\textbf{Corruption: Impulse Noise}} & \multicolumn{5}{c}{\textbf{Corruption: Gaussian Noise}} \\
Method & Cars & EuroSAT & RESISC45 & GTSRB & \textbf{Avg ACC} & Cars & EuroSAT & RESISC45 & GTSRB & \textbf{Avg ACC} \\
\midrule
\textsc{C. LW AdaMerging} &60.5 & 30.1 & 56.3 & 25.5 & 43.1& 62.3 & 25.6 & 59.7 & 25.6 & 43.3 \\
\textsc{C. WEMOE} & 0.5 & 11.2 & 2.3 & 3.2 & 4.3 & 0.5 & 8.1 & 2.4 & 2.8 & 3.4 \\
\textsc{C. LoRA-WEMOE} & 62.2 & 23.4 & 69.9 & 64.6 & 55.0 & 64.9 & 31.7 & 77.8 & 63.4 & 59.4 \\
\textsc{C. Task Arithmetic} & 59.9 & 57.7 & 72.9 & 45.0 & 58.9 & 61.8 & 51.4 & 75.1 & 50.1 & 59.6 \\
\textsc{MagMax-Ind} & 59.2 & 56.3 & 74.3 & 52.5 & 60.6 &60.6 & 51.7 & 77.0 & 56.5 & 61.5 \\
\textsc{OPCM}    &61.1 & 57.1 & 78.5 & 62.0 & 64.7 & 63.0 & 52.4 & 80.7 & 64.9 & 65.2 \\
\methodshort{} (Ours)  & 69.6 & 28.0 & 86.1 & 86.1 & 67.5 & 72.0 & 38.5 & 89.4 & 82.9 & 70.7 \\
\midrule
& \multicolumn{5}{c|}{\textbf{Corruption: Pixelate}} & \multicolumn{5}{c}{\textbf{Corruption: Spatter}} \\
Method & Cars & EuroSAT & RESISC45 & GTSRB & \textbf{Avg ACC} & Cars & EuroSAT & RESISC45 & GTSRB & \textbf{Avg ACC} \\
\midrule
\textsc{C. LW AdaMerging} & 3.4 & 16.5 & 13.5 & 39.2 & 18.1 & 61.3 & 34.1 & 58.2 & 32.8 & 46.6 \\
\textsc{C. WEMOE} & 0.5 & 6.3 & 2.5 & 2.5 & 3.0 & 0.5  & 10.1 &  2.7 &  2.6 &  4.0 \\
\textsc{C. LoRA-WEMOE} & 0.8 & 26.0 & 5.8 & 67.0 & 24.9& 62.4 & 35.4 & 71.2 & 73.0 & 60.5 \\
\textsc{C. Task Arithmetic}   &  2.5 & 31.7 & 19.1 & 65.6 & 29.7   &  61.2 & 63.1 & 72.7 & 57.0 & 63.5 \\
\textsc{MagMax-Ind} & 2.6 & 36.1 & 19.3 & 74.0 & 33.0 & 60.0 & 64.9 & 74.8 & 66.1 & 66.4 \\
\textsc{OPCM}    & 2.1 & 34.3 & 19.5 & 84.9 & 35.2  &  61.5 & 64.7 & 78.8 & 76.8 & 70.5 \\
\methodshort{} (Ours)  &2.3 & 35.6 & 18.5 & 95.1 & 37.9 & 70.1 & 57.8 & 86.2 & 93.9 & 77.0 \\
\midrule
& \multicolumn{5}{c|}{\textbf{Corruption: Contrast}} & \multicolumn{5}{c}{\textbf{Corruption: JPEG Compression}} \\
Method & Cars & EuroSAT & RESISC45 & GTSRB & \textbf{Avg ACC} & Cars & EuroSAT & RESISC45 & GTSRB & \textbf{Avg ACC} \\
\midrule
\textsc{C. LW AdaMerging} & 61.8 & 26.0 & 63.1 & 44.8 & 48.9  & 65.1 & 29.6 & 65.4 & 36.4 & 49.1\\
\textsc{C. WEMOE} & 0.5 & 7.5 & 2.3 & 3.0 & 3.3 & 0.5 & 10.5 & 2.4 & 2.7 & 4.0\\
\textsc{C. LoRA-WEMOE} & 64.3 & 46.5 & 77.7 & 85.6 & 68.5 &65.5 & 59.1 & 80.4 & 74.0 & 69.7\\
\textsc{C. Task Arithmetic} &  62.5 & 55.2 & 75.3 & 70.8 & 66.0 & 64.1 & 66.2 & 80.0 & 61.0 & 67.8 \\
\textsc{MagMax-Ind} & 61.3 & 58.0 & 76.9 & 78.2 & 68.6 & 62.5 & 67.7 & 81.1 & 68.5 & 69.9\\
\textsc{OPCM}    & 63.8 & 57.5 & 81.3 & 87.4 & 72.5 & 65.0 & 68.0 & 85.4 & 79.3 & 74.4 \\
\methodshort{} (Ours)  &  72.4 & 60.1 & 90.4 & 97.3 & 80.1 & 73.7 & 73.5 & 92.0 & 92.4 & 82.9\\
\bottomrule
\end{tabular}
}
\end{table}

\subsection{Inference Efficiency and Parameter Overhead}
\label{appdix:Inference Efficiency and Parameter Overhead}
Tab.~\ref{tab:efficiency} compares the inference efficiency and parameter overhead of all baselines on the CLIP ViT-B/32 model after merging eight tasks. 
We report the total number of parameters, additional storage and inference parameters, throughput (images per second), and accuracy. 
The results show that most static merging methods (\textit{e.g.}, Task Arithmetic, Ties-Merging, MAGMAX-Ind, OPCM) incur no extra storage or inference overhead, but typically achieve limited accuracy. 
Consensus TA and WEMOE introduce significant storage overhead, while WEMOE also scales up inference parameters considerably. 
By contrast, \methodshort{} achieves a favorable trade-off: although it introduces additional parameters for LoRA experts and the router, the effective inference overhead remains small, and throughput is only marginally reduced compared to static baselines. 
This efficiency advantage comes while delivering substantially higher accuracy.

\begin{table}[htbp]
  \centering
  \caption{Comparison of inference efficiency and parameter overhead on CLIP ViT-B/32 model after eight tasks merging.}
  \label{tab:efficiency} 
    \renewcommand\tabcolsep{4pt}  
    \resizebox{\textwidth}{!}{
  \begin{tabular}{lccccc}
    \toprule
    Method          & Total Params (M) & Extra Storage (M) & Extra Inference (M) & Throughput (img/s) & ACC (\%) \\
    \midrule
    \textsc{Task Arithmetic} & 87.5   & 0.0   & 0.0   & $\sim$910 & 67.5 \\
    \textsc{Ties-Merging}    & 87.5   & 0.0   & 0.0   & $\sim$910 & 49.0 \\
    \textsc{MAGMAX-Ind}      & 87.5   & 0.0   & 0.0   & $\sim$910 & 70.7 \\
    \textsc{OPCM}            & 87.5   & 0.0   & 0.0   & $\sim$910 & 75.5 \\
    \textsc{Consensus TA}    & 87.5   & 87.5  & 0.0   & $\sim$910 & 69.0 \\
    \textsc{LW. AdaMerging}  & 87.5   & 0.0   & 0.0   & $\sim$910 & 52.9 \\
    \textsc{WEMOE}           & 540.9  & 453.4 & 0.07  & $\sim$858 & 4.9  \\
    \textsc{WEMOE-LoRA}      & 103.7  & 16.2  & 0.07  & $\sim$848 & 66.6 \\
    \textsc{MINGLE}          & 173.1  & 85.6  & 0.6   & $\sim$841 & 85.8 \\
    \textsc{MINGLE*}         & 113.7  & 26.2  & 0.3   & $\sim$862 & 85.0 \\
    \bottomrule
  \end{tabular}}
\end{table}

\begin{table}[htbp]
  \centering
  \caption{Forward transfer (FWT) results on 8-task continual merging with CLIP ViT-B/16.}
  \label{tab:fwt}
  \renewcommand\tabcolsep{30pt}  
  \resizebox{\textwidth}{!}{
  \begin{tabular}{lccc}
    \toprule
    Method            & ACC (\%)     & BWT (\%)     & FWT (\%)     \\
    \midrule
    \textsc{Task Arithmetic}   & 77.1 $\pm$ 0.0  & -4.2 $\pm$ 1.0  & -13.4 $\pm$ 0.0 \\
    \textsc{Ties-Merging}      & 66.8 $\pm$ 3.7  & -5.5 $\pm$ 0.4  & -30.7 $\pm$ 9.9 \\
    \textsc{OPCM}              & 81.8 $\pm$ 0.3  & -4.8 $\pm$ 0.7  & -9.0  $\pm$ 0.4 \\
    \textbf{\methodshort{} (Ours)} & 88.3 $\pm$ 0.6  & -0.4 $\pm$ 0.1  & -3.8  $\pm$ 0.8 \\
    \bottomrule
  \end{tabular}}
\end{table}

\subsection{Forward Transfer Analysis}
\label{appdix:Forward Transfer Analysis}
Forward transfer (FWT) is an important metric in continual learning, as it quantifies how effectively prior knowledge facilitates the learning of future tasks. 
We adopt the standard definition:
\begin{equation}
    \text{FWT} = \frac{1}{T-1} \sum_{t=2}^{T} \left[ a_t(\theta_t^{\text{merged}}) - \bar{a}_t \right],
\end{equation}
where $a_t(\theta_t^{\text{merged}})$ denotes the test accuracy on task $t$ using the merged model after task $t$, and $\bar{a}_t$ is the accuracy of the individually fine-tuned model for task $t$. 
Positive FWT indicates beneficial transfer, while negative values suggest interference.

Tab.~\ref{tab:fwt} reports the results for the 8-task continual merging setup on CLIP ViT-B/16. 
The results demonstrate that \methodshort{} achieves nearly zero forgetting (BWT $\approx 0$) while obtaining the highest forward transfer among all baselines, showing that our adaptive gating and merging strategy not only preserves past knowledge but also enhances feature utility for future tasks.

\subsection{Additional Visualizations of Gate Activations and the Relaxation Effect}
\label{appdix:Hyper-parameter Analysis of Gate}
We provide an extended ablation study on gate hyper-parameters, including visualizations of gate activations under 14-task (Fig.~\ref{fig:null_gate_14}) and 20-task (Fig.~\ref{fig:null_gate_20}) configurations, complementing the 8-task results presented in the main paper. The visualizations demonstrate that the null-space constraint remains effective as the number of tasks increases, consistently suppressing gate responses to inputs from previously seen tasks and thereby mitigating forgetting.

\newpage
\begin{figure}[H]
    \centering
    \includegraphics[width=1\linewidth]{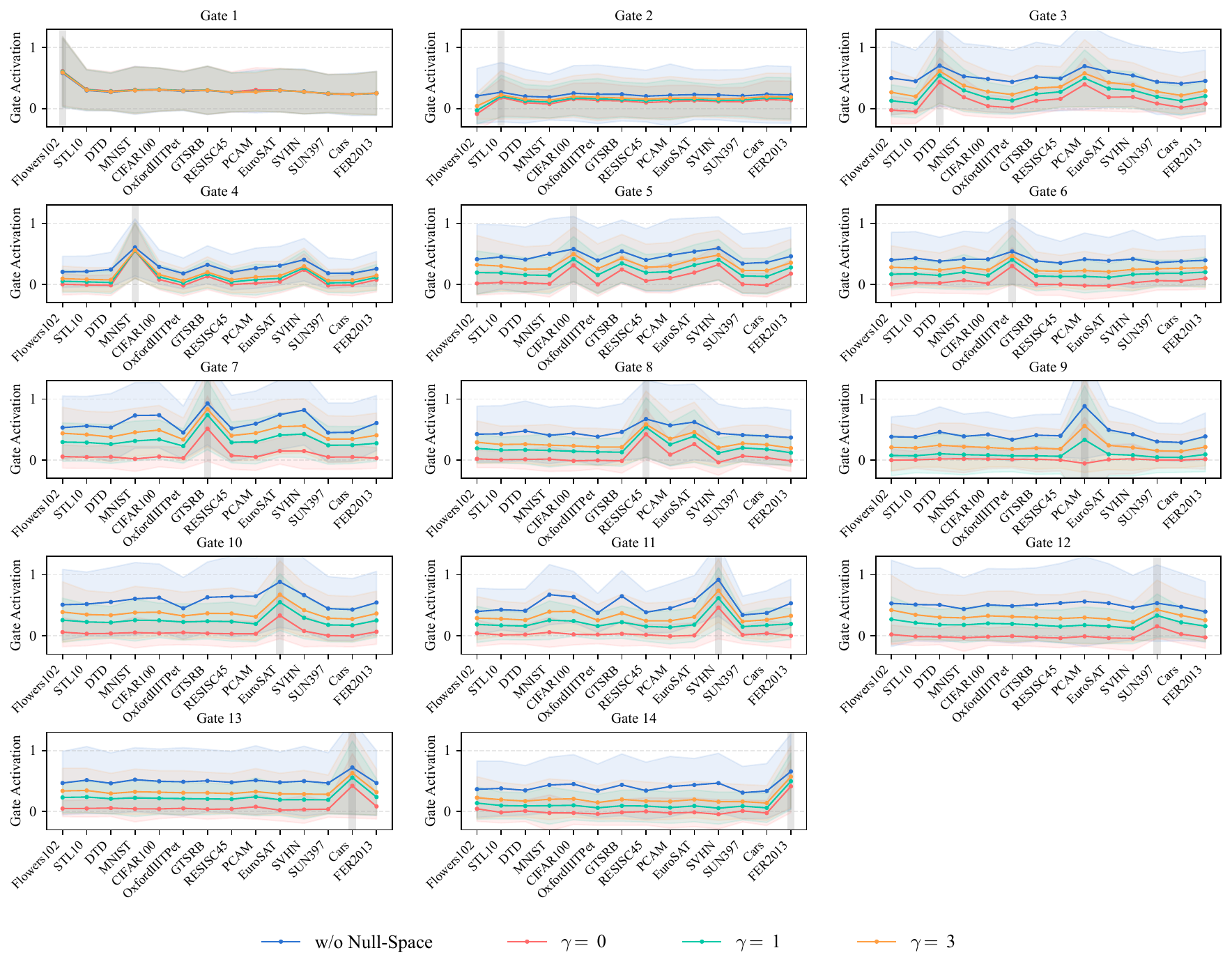}
    \caption{Visualization of gate activations across 14 tasks under varying $\gamma$ values. Each subplot corresponds to a gate, with curves and shaded regions denoting the mean and standard deviation of activations across layers. Gray bars mark the training dataset for each gate. Smaller $\gamma$ values result in stronger suppression of activations on previously learned tasks.}
    \label{fig:null_gate_14}
\end{figure}

\begin{figure}[H]
    \centering
    \includegraphics[width=1\linewidth]{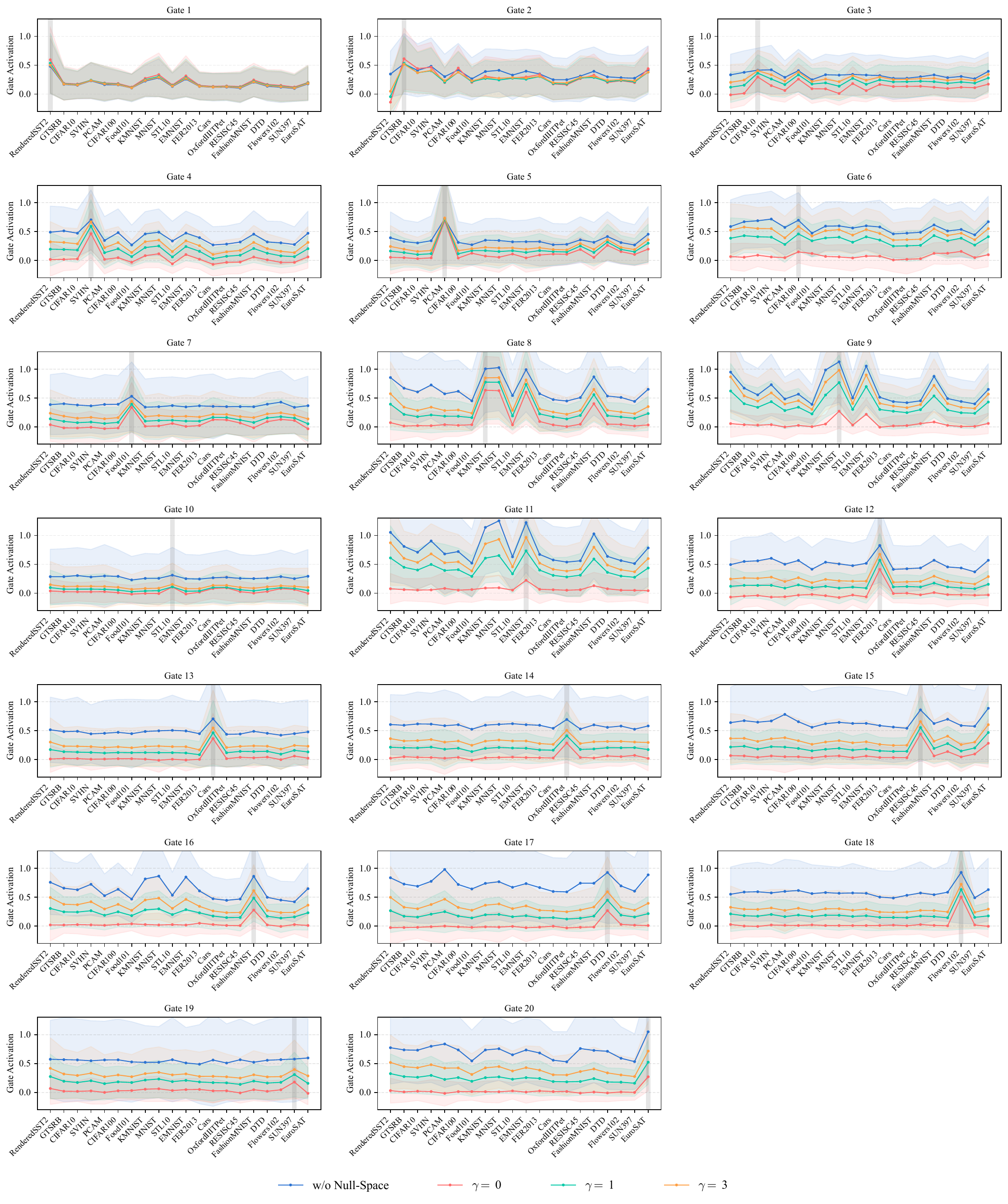}\vspace{-8pt}
    \caption{Visualization of gate activations across 20 tasks under varying $\gamma$ values.}\vspace{-5pt}
    \label{fig:null_gate_20}
\end{figure}

\newpage
\section{Discussions}
\label{appdix:Discussions}

\subsection{Use of Unlabeled Adaptation Samples}
\label{appdix:Use of Unlabeled Adaptation Samples}

In our experiments, we simulate a realistic deployment setting by randomly sampling 5 unlabeled examples per class from the test split, which serve as adaptation samples for model merging. 
Such small unlabeled buffers are practical in real-world applications and can be obtained from various sources, including (i) incoming test-time data such as recent user queries or model inputs, (ii) held-out validation inputs or small training subsets (if available), (iii) user-provided samples (\textit{e.g.}, few-shot examples) that do not raise privacy concerns, (iv) synthetically generated data, or (v) manually curated public data. 
Importantly, our method does not depend on precise sample selection, and the buffer size remains fixed and small, ensuring feasibility and robustness in deployment scenarios.

\subsection{Relation to Rehearsal-Free Continual Learning}
\label{appdix:Relation to Rehearsal-Free Continual Learning}

Test-time continual model merging (TTCMM) is closely related to the paradigm of rehearsal-free continual learning (RFCL), as both approaches share two fundamental constraints: 
(i) they do not retain past training data, and 
(ii) they avoid storing previous task models. 
The key distinction lies in the information available at each stage. 
RFCL assumes access to the training data of the current task and incrementally fine-tunes a single model over time. 
In contrast, TTCMM assumes access to independently fine-tuned models for each new task and focuses on merging these expert models rather than training them from scratch. 
Additionally, TTCMM relies on a small unlabeled buffer at test time (\textit{e.g.}, 5 samples per class) to guide the merging process.

From a privacy perspective, TTCMM provides stronger guarantees. 
Since it does not require access to full training sets, it only depends on a small set of unlabeled samples, which can be user-provided without risk, synthetically generated, or curated from public data. 
By comparison, RFCL requires access to large-scale labeled datasets for every task, raising more significant concerns regarding privacy, storage, and legal constraints (\textit{e.g.}, medical images, personal data, or copyrighted corpora). 
The reliance on a tiny unlabeled buffer makes TTCMM more practical in scenarios where data privacy is a primary consideration.

\subsection{Limitations}
\label{appdix:Limitations}
As with many model merging methods, our approach assumes that all independently fine-tuned models originate from a shared pretrained initialization. The extent to which this assumption influences merging performance remains unclear and warrants further investigation. In addition, our current experiments focus on merging models with identical backbone architectures (\textit{e.g.}, CLIP ViT-B/16). Although our use of LoRA-based expert offers some structural uniformity, which could potentially accommodate heterogeneous backbones, we have not yet explored this setting. Extending our framework to support diverse initialization points or architectural variants remains an open direction for future work.

\subsection{Broader Impacts}
\label{appdix:Broader Impacts}
This paper presents work whose goal is to advance the field of Machine Learning. There are many potential societal consequences of our work, none which we feel must be specifically highlighted here.

\end{document}